\documentclass[12pt,a4paper]{article}
\usepackage{amsmath,amssymb,amsthm}
\usepackage{mathtools}
\usepackage{booktabs}
\usepackage{graphicx}
\usepackage[margin=2.5cm]{geometry}
\usepackage{natbib}
\usepackage{setspace}
\usepackage{xcolor}

\usepackage{times}
\usepackage{comment}

\newtheorem{theorem}{Theorem}
\newtheorem{proposition}{Proposition}
\newtheorem{corollary}{Corollary}
\newtheorem{lemma}{Lemma}
\newtheorem{definition}{Definition}
\newtheorem{assumption}{Assumption}

\DeclareMathOperator{\KL}{KL}
\DeclareMathOperator{\Ber}{Ber}
\DeclareMathOperator{\Bin}{Bin}
\DeclareMathOperator*{\argmin}{argmin}
\DeclareMathOperator*{\argmax}{argmax}

\newcommand{\E}{\mathbb{E}}
\newcommand{\R}{\mathbb{R}}
\renewcommand{\Pr}{\mathbb{P}}
\newcommand{\cB}{\mathcal{B}}
\newcommand{\cT}{\mathcal{T}}
\newcommand{\MLE}{\mathrm{MLE}}
\newcommand{\RABIC}{\mathrm{RA\text{-}BIC}}
\newcommand{\BIC}{\mathrm{BIC}}

\title{
{\bf Information Gap and Feasibility-Aware Inference \\ 
in Binomial Logistic Mixtures}
}

\author{}
\date{}

\begin{document}
\maketitle
\doublespacing

\vspace{-1.5cm}

\begin{center}
{\large 
Yuta Hayashida$^1$ and Shonosuke Sugasawa$^2$
}

\vspace{5mm}
$^1$Graduate School of Economics, Keio University\\
$^2$Faculty of Economics, Keio University 
\end{center}

\vspace{2mm}
\begin{abstract}
This paper studies the information gap between mixture detection and label recovery in binomial logistic mixtures. Standard likelihood-based criteria such as the Bayesian information criterion (BIC) can detect the presence of two components, but this does not guarantee that the corresponding labels are recoverable. We show that this gap is intrinsic to binomial logistic mixtures with a fixed number of trials: observed-data evidence for mixture structure and per-observation information for label recovery have different local orders in the component separation, and only the former accumulates with the sample size. As a result, there exists a detectable-but-unrecoverable regime in which BIC selects two components while the posterior labels remain essentially uninformative. To address this issue, we propose two feasibility-aware inference procedures: a recoverability-aware BIC with a posterior-entropy penalty and an entropy-regularized estimator that mitigates the tendency of the maximum likelihood estimator to produce overly separated components and overly concentrated posterior responsibilities. Numerical experiments confirm the predicted gap and demonstrate that the proposed methods avoid misleading component selections and improve the calibration of posterior label probabilities.

\end{abstract}

\bigskip\noindent
{\bf Key words}: clustering; discrete data; information criterion; label recovery


\section{Introduction}
\label{sec:intro}

Finite mixture models provide a flexible framework for modeling unobserved heterogeneity in regression settings \citep{mclachlan2000finite,grun2004flexmix}. 
Mixtures of generalized linear models, including logistic and multinomial-logit mixtures, have been widely used for this purpose, and their identifiability has been studied in several settings \citep{follmann1991logistic,grun2008identifiability,auder2018binary}. 
In practice, however, logistic mixtures are often difficult to interpret: fitted components can overlap substantially, model selection can be unstable, and the resulting labels may not be informative. 
The difficulty is that evidence for more than one component in the observed-data distribution does not necessarily imply that individual latent labels can be recovered. 
This issue is well known in model-based clustering, where a model that fits the marginal distribution well does not always yield meaningful clusters. 
Entropy-based criteria, such as the integrated completed likelihood \citep{biernacki2000assessing} and related methods \citep{baudry2010combining}, penalize posterior uncertainty in the labels, whereas likelihood-based criteria such as the Bayesian information criterion \citep[BIC;][]{schwarz1978bic} mainly evaluate observed-data fit. 
For binomial logistic mixtures, however, it remains unclear why marginal fit, mixture-order selection, and label recovery can lead to different conclusions, and how this distinction should be reflected in inference.

Similar distinctions appear in related problems. 
In stochastic block models, community detection and exact recovery are governed by different information-theoretic thresholds \citep{abbe2018community}. 
In Gaussian mixture models, estimation and clustering have also been studied separately \citep{heinrich2018strong,wu2020optimal,ndaoud2022sharp}. 
In these settings, larger component separation provides more information for label recovery, leading to nontrivial recovery thresholds.
The fixed-trial binomial logistic mixture has a different feature. 
Because each response takes only a finite number of values, one observation contains only bounded information. 
Thus the evidence for a two-component marginal distribution cannot grow without bound as the components separate, and the information about an observation's own label is also bounded. 
Near the one-component boundary, label information is small, but a larger sample can still detect increasingly small departures from a one-component model. 
This creates a ``detectable-but-unrecoverable" regime, where the mixture structure is detectable in aggregate, while individual labels remain essentially unrecoverable. 
The gap is structural, not numerical, algorithmic, or a small-sample artifact. 
More observations improve detectability, but recoverability improves only through more per-observation information, such as larger separation or more trials.

To formalize this gap, we define a complete-data divergence that measures the information contributed by the latent labels. 
A two-component logistic mixture is useful for clustering only when the labels provide information beyond a one-component description. 
Our measure captures this idea by comparing the true two-component joint distribution of $(Y,Z)$ with the best one-component approximation that preserves the mixing weights. 
It is zero when the two components coincide and positive when the labels are informative.
We show that this divergence decomposes exactly into two terms. 
The first is a detectability term, which measures the improvement in the observed-data distribution. 
The second is a recoverability term, which measures the information that each observation carries about its own label. 
This decomposition is general and only requires a one-component approximation with the same mixing weights. 
Near the one-component boundary, the two terms have different local orders: detectability is fourth order in the component separation, whereas recoverability is second order. 
This order gap also appears in Gaussian mixtures. 
What is specific to the fixed-trial binomial setting is that detectability is bounded as the components separate and scales differently from recoverability in the number of trials.

The decomposition also explains why BIC can be inadequate as a clustering-oriented criterion. 
The likelihood gain in BIC measures improved observed-data fit, but not the recoverability of the corresponding latent labels. 
Thus, in the detectable-but-unrecoverable regime, BIC may select two components even when the posterior labels remain nearly uninformative beyond the prior class probabilities.
The same gap also affects estimation. 
While BIC can over-select at the model-selection stage, the maximum likelihood estimator can produce posterior responsibilities that are too concentrated at the estimation stage. 
We address these two issues with entropy-based feasibility-aware procedures: recoverability-aware BIC (RA-BIC) for model selection and entropy-regularized estimation (ER) for parameter estimation.

This paper is organized as follows. Section~\ref{sec:model} introduces the model and the information decomposition, contrasts the information geometry of logistic and Gaussian mixtures, and establishes the local laws and phase separation. 
Section~\ref{sec:inference} develops feasibility-aware model selection via RA-BIC and entropy-regularized estimation. Section~\ref{sec:experiments} presents numerical experiments. 
Section~\ref{sec:discussion} discusses connections and extensions. 
Proofs are collected in the Supplementary Material.


\section{Merge Divergence and Decomposition}
\label{sec:model}

\subsection{Model and notation}

We write $\psi(t) = 1/(1+e^{-t})$ for the logistic link. For the two-component binomial logistic mixture, we use the weighted-center parametrization. Let $Z \in \{1,2\}$ denote the latent class indicator with $\Pr(Z=1) = \pi$ and $\Pr(Z=2) = 1-\pi$. Conditional on $Z = k$ and the covariate $X = x \in \R^p$, the model is 
\begin{equation}\label{eq:mix}
Y | Z = k, X = x \;\sim\; \Bin(m, p_k(x)), \qquad p_k(x) = \psi(x^\top \beta_k), 
\end{equation}
where $m \geq 1$ is the known number of trials. 
We reparametrize the two coefficients through a shared center $\beta_0$ and a contrast $\Delta$, setting $\beta_1 = \beta_0 + (1-\pi)\Delta$ and $\beta_2 = \beta_0 - \pi\Delta$, so the overall parameter is $\theta = (\pi, \beta_0, \Delta) \in (0,1) \times \R^p \times \R^p$. These weights give $\pi\beta_1 + (1-\pi)\beta_2 = \beta_0$, balancing the two components around $\beta_0$ in the linear predictor. The balance cancels the first-order deviation in the local expansion of the merge divergence, which makes the leading detectability term fourth order in the separation.
The marginal success probability of $Y$ given $X = x$ is $\bar p_\theta(x) = \pi\psi(x^\top \beta_1) + (1-\pi)\psi(x^\top \beta_2)$. 
The one-component binomial logistic family is parametrized by $q_\beta(x) = \psi(x^\top \beta)$.
For population-level arguments, we suppress the observation index $i$ and write $(X,Y,Z)$ for a generic observation from the model, with $m$ denoting the corresponding number of trials.
We write $f_B(\cdot; m, p)$ for the probability mass function of the binomial distribution ${\rm Bin}(m, p)$.

\subsection{Complete-data divergence and its decomposition}
Fitting of the observed data alone can indicate whether the marginal of $Y$ is better represented by a mixture, but not whether labels are statistically informative. To assess whether labels add information beyond marginal distributional fit, we introduce the \emph{complete-data divergence} $\cT_m^*$, which incorporates both marginal detectability and per-observation label informativeness.

The joint distribution $P_\theta(Y, Z| X)$ of $Y$ and $Z$ given $X$ is 
\begin{equation}\label{eq:joint}
P_\theta(Y,Z| X) = \pi^{I(Z=1)}(1-\pi)^{I(Z=2)} 
f_B(Y; m, p_{Z}(X))
\end{equation}
We also define $Q_\gamma(Y, Z |  X)$ as the surrogate that preserves the mixing weights but replaces the success probability $p_{Z}(X)$ with  the common probability $q_\gamma(x) = \psi(x^\top \gamma)$ for some parameter $\gamma$, given by  
\[
Q_\gamma(Y, Z |  X) = \pi^{I(Z=1)}(1-\pi)^{I(Z=2)} 
f_B(Y; m, q_\gamma(X))
\]
 
Then, we define the complete-data Kullback--Leibler divergence between the true two-component model and its best one-component surrogate, as follows: 
\begin{equation}\label{eq:merge}
\cT_m^*(\theta) \equiv  \inf_{\gamma \in \R^p} \E_X\left[ \KL\bigl(P_\theta(Y,Z | X) \| Q_\gamma(Y,Z | X)\bigr) \right],
\end{equation}
where ${\rm KL}(\cdot \| \cdot)$ is the Kullback-Leibler (KL) divergence and the outer expectation is taken with respect to the marginal distribution of $X$. 
For brevity, we refer to $\cT_m^*(\theta)$ simply as the ``complete-data divergence".
The complete-data divergence measures the distance between the two-component mixture $P_\theta(Y,Z| X)$ and one-component surrogate $Q_\gamma(Y, Z |  X)$. 
$\cT_m^*(\theta) = 0$ if and only if the two components are identical ($\Delta = 0$), in which case the labels are redundant and a single-component model is sufficient with $\gamma=\beta_0$.
When $\cT_m^* > 0$, the labels carry genuine information, so $\cT_m^*$ measures at the population level whether the two-component model provides a useful clustering representation rather than only a better marginal fit.

Since the label $Z$ is not observed, it is useful to decompose $\cT_m^*$ into a component accessible from $Y$ alone and a component requiring the labels.
To this end, we define the \emph{detectability functional} as 
\begin{equation}\label{eq:D}
D_m^*(\theta) \equiv  \inf_{\gamma \in \R^p} \E\left[ \KL\left( P_\theta(Y | X) \|\Bin\bigl(m, q_\gamma(X)\bigr) \right) \right],
\end{equation}
where $P_\theta(Y | X)=P_\theta(Y,Z=1| X)+P_\theta(Y,Z=2| X)$ is the marginal probability. 
$D_m^*$ measures how distinguishable the mixture distribution of~$Y$ is from the best single-component fit. 
We also define the \emph{recoverability functional} as
\begin{equation}\label{eq:R}
R_m^*(\theta) \equiv \mathbb{E}\left[ \log\frac{P_\theta(Y,Z| X)}{P_\theta(Z) P_\theta(Y| X)}
\right].
\end{equation}
Since $P_\theta(Z)=P_\theta(Z|X)$ in our setting, the above quantity is the conditional mutual information $I_\theta(Z; Y | X)$, namely how much the observation $(X, Y)$ reduces uncertainty about its own label $Z$. 
An essential property of the merge divergence (\ref{eq:merge}) is the following decomposition: 

\begin{proposition}
\label{prop:decomposition}
The complete-data divergence $\cT_m^*(\theta)$ is decomposed as $\cT_m^*(\theta) = D_m^*(\theta) + R_m^*(\theta)$.
\end{proposition}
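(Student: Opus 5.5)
The plan is to prove the identity pointwise in $\gamma$ and only then take the infimum. Fix $\gamma\in\R^p$ and $x$. Write the log-ratio inside the complete-data KL as
\[
\log\frac{P_\theta(Y,Z\mid X)}{Q_\gamma(Y,Z\mid X)}
=\log\frac{P_\theta(Y,Z\mid X)}{P_\theta(Z)P_\theta(Y\mid X)}+\log\frac{P_\theta(Z)\,P_\theta(Y\mid X)}{Q_\gamma(Y,Z\mid X)}.
\]
Because $Q_\gamma$ keeps the mixing weights, $Q_\gamma(Y,Z\mid X)=P_\theta(Z)\,f_B(Y;m,q_\gamma(X))$. The factor $P_\theta(Z)$ therefore cancels in the second term, which becomes $\log\{P_\theta(Y\mid X)/f_B(Y;m,q_\gamma(X))\}$. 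This term depends on $(Y,X)$ only.

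Next I take expectations under $P_\theta(Y,Z\mid X)$ and then over $X$. The first term gives $I_\theta(Z;Y\mid X)=R_m^*(\theta)$ by definition~\eqref{eq:R}, using $P_\theta(Z\mid X)=P_\theta(Z)$. In the second term, integrating out $Z$ leaves an expectation under the marginal $P_\theta(Y\mid X)$. The result is $\E_X[\KL(P_\theta(Y\mid X)\,\|\,\Bin(m,q_\gamma(X)))]$. So for every $\gamma$,
\[
\E_X\bigl[\KL(P_\theta(Y,Z\mid X)\|Q_\gamma(Y,Z\mid X))\bigr]=R_m^*(\theta)+\E_X\bigl[\KL(P_\theta(Y\mid X)\|\Bin(m,q_\gamma(X)))\bigr].
\]
This is just the chain rule for KL divergence, specialised to a surrogate whose $Z$-marginal matches and under which $Y\perp Z$.

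Finally, $R_m^*(\theta)$ does not depend on $\gamma$. Taking $\inf_\gamma$ on both sides therefore yields $\cT_m^*(\theta)=R_m^*(\theta)+D_m^*(\theta)$, where the same infimum defines both $\cT_m^*$ in~\eqref{eq:merge} and $D_m^*$ in~\eqref{eq:D}.

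The only step needing care is integrability, which justifies splitting the expectation into the two pieces. For fixed $x$ and $m$, $Y$ takes finitely many values and all binomial probabilities are strictly positive, since $\psi\in(0,1)$. Hence every log-ratio is finite. The mutual-information term is bounded by $\log 2$, because $I(Z;Y\mid X)\le H(Z)$. So $R_m^*$ is finite and the split is legitimate whenever either side is finite. If both are $+\infty$ for some $\gamma$, the identity still holds in the extended sense, and the infimum commutes with adding the finite constant $R_m^*$. I expect no genuine obstacle beyond this bookkeeping.
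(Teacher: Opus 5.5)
Your proof is correct and is essentially the paper's argument. The paper applies the KL chain rule with $Q_\gamma(Z\mid Y,X)=\pi_z=P_\theta(Z\mid X)$, which is exactly your split of the log-ratio, and then pulls the $\gamma$-free finite term $R_m^*(\theta)\le h(\pi)$ out of the infimum; your integrability bookkeeping is a slightly more careful version of the paper's remark that $R_m^*$ is finite and independent of $\gamma$.
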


Thus the total information value of the labels decomposes into the detectability (detectable from the distribution of~$Y$) and the recoverability (the label information in each data point). This decomposition is what makes the gap operational. The BIC log-likelihood gain converges to $2nD_m^*$ (Section~\ref{sec:RA-BIC}), so among the two summands BIC tests only the detectability $D_m^*$ and is structurally blind to the recoverability $R_m^*$. 
This asymmetry motivates the feasibility-aware procedures developed in Section~\ref{sec:inference}.
In the subsequent section, we investigate the behavior of the two quantities as a function of $\rho\equiv (\Delta^\top \Sigma \Delta)^{1/2}$ with $\Sigma=\mathbb{E}[XX^\top]$, which measures the distance between $\beta_1$ and $\beta_2$. 

From the joint distribution (\ref{eq:joint}), the posterior distribution of $Z$ can be expressed as 
$$
P_\theta(Z|X,Y)=\frac{\{\pi f_B(Y; m, p_1(X))\}^{I(Z=1)} \{(1-\pi) f_B(Y; m, p_2(X))\}^{I(Z=2)} }{\pi f_B(Y; m, p_1(X)) + (1-\pi) f_B(Y; m, p_2(X))}.
$$
The recoverability functional admits an alternative expression based on the above posterior distribution, as follows: 

\begin{proposition}
\label{prop:entropy-representation}
The recoverability functional $R_m^*(\theta)$ can be expressed as $R_m^*(\theta) = h(\pi) - E_m(\theta)$, where $h(\pi) = -\pi\log\pi - (1-\pi)\log(1-\pi)$ is the binary entropy function and $E_m(\theta) = -\sum_{k=1}^{2} \mathbb{E}\left[P_\theta(Z{=}k|X,Y)\log P_\theta(Z{=}k|X,Y)\right]$ is the entropy of the posterior distribution $P_\theta(Z|X,Y)$ in the two-component model.
\end{proposition}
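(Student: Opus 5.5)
The identity $R_m^*(\theta) = h(\pi) - E_m(\theta)$ is the standard fact that mutual information equals prior entropy minus expected posterior entropy. The plan is to rewrite the log-ratio inside (\ref{eq:R}) using Bayes' rule and then take expectations term by term.

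\emph{Factor the joint.} Use $P_\theta(Y,Z\mid X) = P_\theta(Z\mid X,Y)\,P_\theta(Y\mid X)$. Here $P_\theta(Y\mid X) = \pi f_B(Y;m,p_1(X)) + (1-\pi) f_B(Y;m,p_2(X))$ is strictly positive, since $p_k(X)\in(0,1)$ and $\pi\in(0,1)$. Substituting into (\ref{eq:R}) gives
\[
R_m^*(\theta) = \E\bigl[\log P_\theta(Z\mid X,Y)\bigr] - \E\bigl[\log P_\theta(Z)\bigr].
\]

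\emph{The prior term.} The second expectation is computed directly. Since $Z$ is independent of $X$ with $\Pr(Z=1)=\pi$,
\[
-\E[\log P_\theta(Z)] = -\pi\log\pi-(1-\pi)\log(1-\pi) = h(\pi).
\]

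\emph{The posterior term.} Condition on $(X,Y)$ and use the tower property. Given $(X,Y)$, $Z$ has law $P_\theta(\cdot\mid X,Y)$, so
\[
\E\bigl[\log P_\theta(Z\mid X,Y)\mid X,Y\bigr] = \sum_{k=1}^2 P_\theta(Z{=}k\mid X,Y)\log P_\theta(Z{=}k\mid X,Y).
\]
Taking the outer expectation gives exactly $-E_m(\theta)$. Combining the two terms yields $R_m^*(\theta)=h(\pi)-E_m(\theta)$.

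\emph{Integrability.} This is the only step needing care, and it is routine. All logarithms are finite because the posterior probabilities lie strictly in $(0,1)$. Each summand $P\log P$ is bounded in absolute value by $e^{-1}$, and $Y$ takes finitely many values. Hence every expectation is finite and can be split linearly, with no moment condition on $X$ required.
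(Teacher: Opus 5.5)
Your proof is correct and takes the same route as the paper. The paper simply cites $I_\theta(Z;Y\mid X)=H_\theta(Z\mid X)-H_\theta(Z\mid X,Y)$ and uses $Z\perp X$ to get $H_\theta(Z\mid X)=h(\pi)$; your Bayes factorization and tower-property step derive that same identity directly from the log-ratio definition of $R_m^*$, and your integrability check is a small addition the paper leaves implicit.
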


When $R_m^* = 0$, the posterior $P_\theta(Z | X, Y)$ equals the prior, so the data provides no information about the label. This entropy representation is central to the feasibility-aware component-selection and estimation procedures of Sections~\ref{sec:RA-BIC} and~\ref{sec:estimation}.

\subsection{Difficulty in binomial logistic mixtures: comparison with Gaussian mixtures}
\label{sec:gaussian}

Binomial logistic mixtures are often more difficult to estimate than Gaussian mixtures, and this section identifies a key reason, which is not the local order gap. Both families share the same quartic--quadratic gap, detectability fourth order and recoverability second order in the separation, so the order gap alone is not specific to the binomial model. 
The binomial-specific feature is the boundedness of the observed-data information. 
With a fixed number of trials the response $Y \in \{0,1,\dots,m\}$ carries at most $\log(m+1)$ bits, so the observed-data evidence for a second component cannot grow without bound as the components separate, unlike in a Gaussian mixture. 
This difference changes the nature of the inference problem, which we make precise by comparing the information functionals $D_m^*$ and $R_m^*$ across the two families.
To this end, we consider the Gaussian mixture, $Y_i | Z_i = k, X_i \sim N(X_i^\top\beta_k, \sigma^2)$ with common variance $\sigma^2$.
For this model, the KL-optimal one-component surrogate within the homoscedastic Gaussian regression family follows by moment matching \cite{herbrich2005minimizingkl}. Since
$\mathbb{E}(Y | X)=X^\top\{\pi\beta_1+(1-\pi)\beta_2\}$ and $\mathrm{Var}(Y | X)=\sigma^2+\pi(1-\pi)(X^\top\Delta)^2,$ the corresponding one-component model is $N(X^\top\beta_*, \sigma_*^2)$, where 
\[
\beta_*=\pi\beta_1+(1-\pi)\beta_2,
\qquad
\sigma_*^2=\sigma^2+\pi(1-\pi)\E\left[(X^\top\Delta)^2\right]
=
\sigma^2+\pi(1-\pi)\rho^2,
\]
and $\rho \equiv (\Delta^\top \E[XX^\top]\Delta)^{1/2}$. Denote by $D_{\rm Gauss}^*$ and $R_{\rm Gauss}^*$ the Gaussian-mixture analogues of (\ref{eq:D}) and (\ref{eq:R}).

\begin{proposition}\label{prop:rate}
Under $\rho\to\infty$, $D_{\mathrm{Gauss}}^*\to \infty$ and $R_{\mathrm{Gauss}}^*$ is bounded for Gaussian mixtures, while both $D_m^*$ and $R_m^*$ are bounded for binomial logistic mixtures.
\end{proposition}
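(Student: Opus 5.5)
Four bounds are needed: the two recoverability terms are bounded by entropy, the binomial detectability is bounded by a finite-support argument, and the Gaussian detectability is shown to diverge through the moment-matched surrogate.

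\emph{Recoverability terms.} By Proposition~\ref{prop:entropy-representation}, $R_m^*(\theta)=h(\pi)-E_m(\theta)$, and the posterior entropy satisfies $E_m(\theta)\ge 0$. Hence $0\le R_m^*\le h(\pi)\le\log 2$ for every $\rho$. The same conditional mutual information identity $I(Z;Y\mid X)=H(Z)-H(Z\mid X,Y)$ holds verbatim for the Gaussian mixture, because $Z$ is still discrete with prior $(\pi,1-\pi)$. So $R^*_{\rm Gauss}\le h(\pi)\le\log2$ as well, uniformly in $\rho$.

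\emph{Binomial detectability.} I will bound $D_m^*$ from above by plugging the single feasible choice $\gamma=\beta_0$ into the infimum in (\ref{eq:D}). The marginal $P_\theta(Y\mid X)$ is supported on $\{0,\dots,m\}$, so the KL divergence to $\Bin(m,q)$ with $q=\psi(x^\top\beta_0)$ splits as
\[
\KL\le \max_y \log\frac{1}{f_B(y;m,q)}
\le m\bigl(|\log q|+|\log(1-q)|\bigr),
\]
where the entropy term of $P_\theta(Y\mid X)$ was dropped as nonpositive and $\binom{m}{y}\ge1$ was used. Since $|\log\psi(t)|\le \log 2+|t|$, this gives $\KL\le 2m(\log2+|X^\top\beta_0|)$, a bound free of $\Delta$. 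Taking expectations gives $D_m^*\le 2m(\log 2+\E|X^\top\beta_0|)<\infty$ under a finite first moment of $X$, which is already implied by $\Sigma$ existing. Here $\beta_0$ and $\pi$ stay fixed while $\rho\to\infty$. A cleaner alternative also works: Proposition~\ref{prop:decomposition} lets us bound $\cT_m^*$ instead, and at $\gamma=\beta_0$ that is $\pi\KL(\Bin(m,p_1)\|\Bin(m,q))+(1-\pi)\KL(\Bin(m,p_2)\|\Bin(m,q))$, which is bounded by the same cross-entropy estimate. This route yields boundedness of both terms at once.

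\emph{Gaussian detectability.} The key step is the divergence $D^*_{\rm Gauss}\to\infty$. I will use the cited fact that the optimal surrogate is $N(X^\top\beta_*,\sigma_*^2)$ with $\sigma_*^2=\sigma^2+\pi(1-\pi)\rho^2$, and I will read the infimum as ranging over $\beta$ and the variance. Writing $p$ for the mixture density given $X$, the KL divergence equals the cross-entropy minus the mixture entropy:
\[
\E\Bigl[\tfrac12\log(2\pi_0\sigma_*^2)+\tfrac12\Bigr]-\E[H(p(\cdot\mid X))],
\]
where $\pi_0$ denotes the circle constant. The mixture entropy satisfies $H\le H(Z)+H(Y\mid Z,X)=h(\pi)+\tfrac12\log(2\pi_0 e\sigma^2)$. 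Therefore
\[
D^*_{\rm Gauss}\ge \tfrac12\log\frac{\sigma_*^2}{\sigma^2}-h(\pi)
=\tfrac12\log\Bigl(1+\pi(1-\pi)\rho^2/\sigma^2\Bigr)-h(\pi),
\]
which tends to $\infty$ logarithmically in $\rho$.

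The main obstacle is this last step, specifically justifying that the moment-matched surrogate is optimal so that the lower bound applies to the infimum rather than to one choice of surrogate. If the infimum is only over $\beta$ with $\sigma$ fixed, the bound is even easier: the cross-entropy then contains $\E(Y-X^\top\beta)^2/(2\sigma^2)\ge \pi(1-\pi)\rho^2/(2\sigma^2)$, and the divergence is quadratic in $\rho$. I will state whichever parametrization the paper uses and treat both cases.
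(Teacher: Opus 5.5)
Your proof is correct and follows essentially the paper's route. Both recoverability terms are bounded by $I(Z;Y\mid X)\le h(\pi)$, and $D_m^*$ is bounded by plugging a single surrogate into the infimum and using the finite support $\{0,\dots,m\}$. Your choice $\gamma=\beta_0$ with $\E|X^\top\beta_0|<\infty$ is tidier than the paper's constant $\Bin(m,\pi)$, which needs an intercept to lie in the family. For the Gaussian case you reach the same negentropy bound $D^*_{\rm Gauss}\ge\tfrac12\log\{1+\pi(1-\pi)\rho^2/\sigma^2\}-h(\pi)$.

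The obstacle you flag is not one. For every candidate $(\beta,s^2)$, $\E(Y-X^\top\beta)^2\ge\E\,\mathrm{Var}(Y\mid X)=\sigma_*^2$, so the cross-entropy of any homoscedastic candidate is at least that of $N(X^\top\beta_*,\sigma_*^2)$. Your bound therefore applies to the infimum directly, without the Jensen step the paper uses.
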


The saturation of binomial $D_m^*$ (unlike diverging Gaussian $D_{\rm Gauss}^*$) is the key difference.
Evidence for detection ($nD_m^*$) accumulates with the sample size $n$ but the sample size does not help per-observation recoverability $R_m^*$. 
The trial count $m$ raises both, but detection super-linearly (via binomial over-dispersion, scaling with trial pairs) and recovery only linearly. Thus larger $n$ detects mixtures at smaller separations where $R_m^*$ is smallest. Recovery requires larger $m$ or separation, not more observations. The $n$-versus-$m$ asymmetry is the operational core of the difficulty.

Figure~\ref{fig:Dcurve-msweep} plots $D_m^*$ and $R_m^*$ versus separation $\delta = |\beta_1 - \beta_2|$ ($\tilde{X} \sim N(0,1)$, $\pi = 0.5$, $m \in \{1, 2, 5, 10\}$, $\sigma^2 \in \{1.0, 0.1\}$). Gaussian $D^*$ diverges while logistic $D^*$ saturates. For small $m$ saturation is low, but $m=10$ exceeds Gaussian ($\sigma^2=1$) at moderate separations due to mixture binomial over-dispersion. Both families approach $h(\pi) = \log 2$ for recoverability, with logistic curves slower for small $m$.

\begin{figure}[t]
\centering
\includegraphics[width=0.48\textwidth]{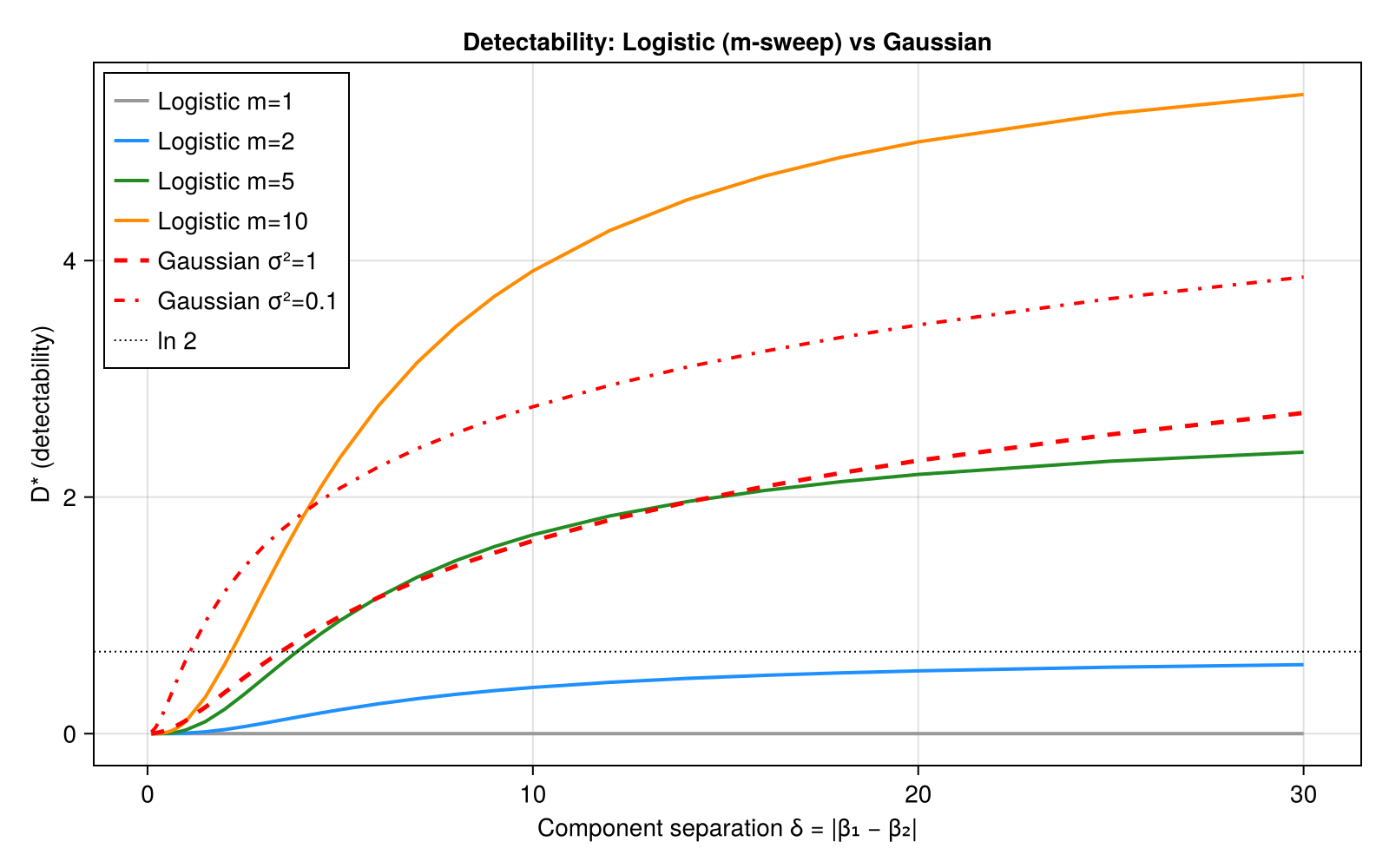}
\hfill
\includegraphics[width=0.48\textwidth]{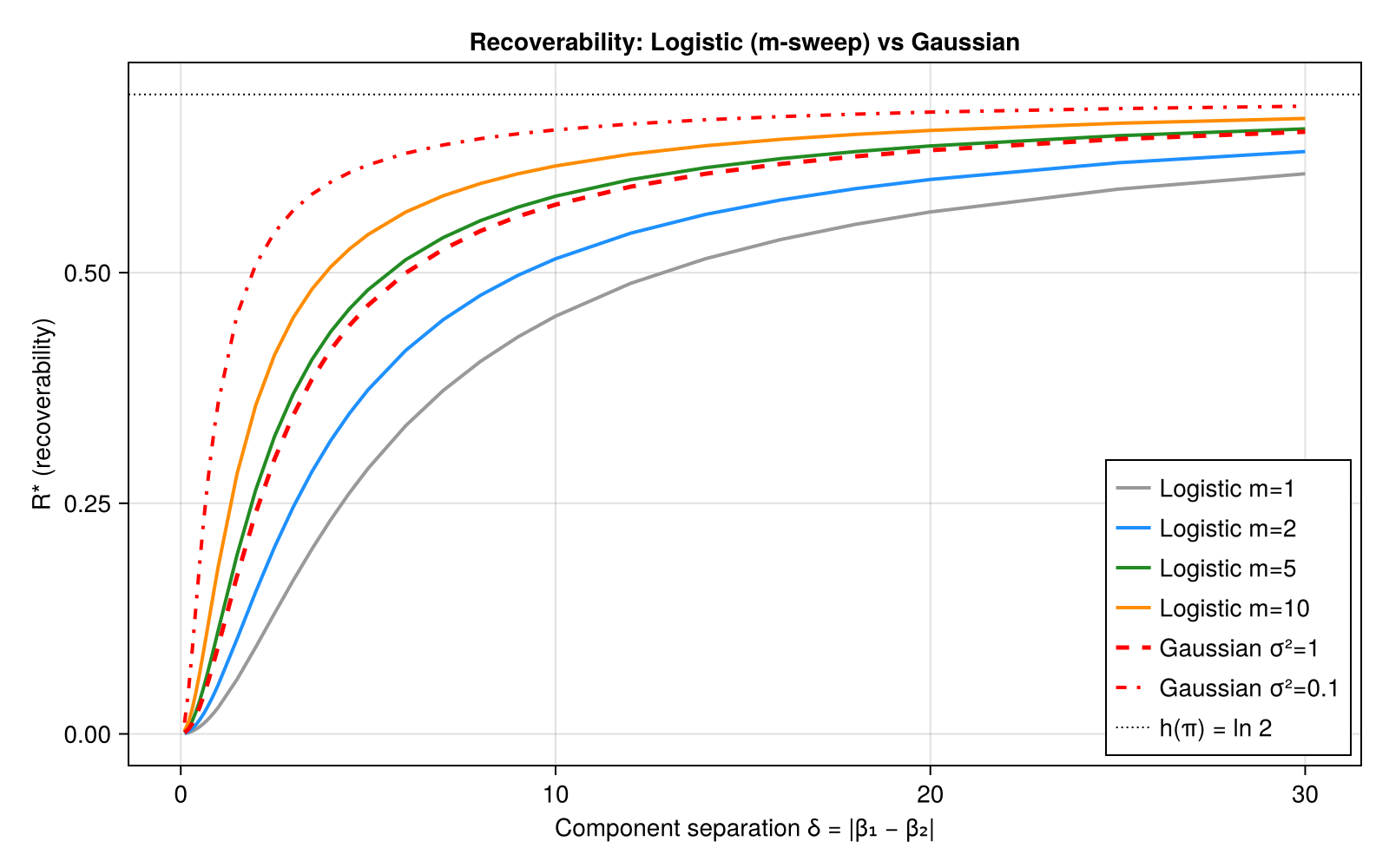}
\caption{Detectability $D_m^*$ (left) and recoverability $R_m^*$ (right) versus component separation $\delta = |\beta_1 - \beta_2|$, for logistic mixtures with $m \in \{1, 2, 5, 10\}$ (solid) and Gaussian mixtures with $\sigma^2 \in \{1, 0.1\}$ (dashed); illustrative one-dimensional setting, $\tilde{X} \sim N(0,1)$, $\pi = 0.5$. Gaussian $D^*$ diverges while logistic $D_m^*$ saturates (lower for smaller $m$); both $R_m^*$ approach $h(\pi) = \log 2$. }
\label{fig:Dcurve-msweep}
\end{figure}

\subsection{Information gap }
\label{sec:local}

This section establishes how the detectability $D_m^*$ and recoverability $R_m^*$ behave in the local merge regime (i.e. small $\rho$). 
The quartic law for $D_m^*$ and the quadratic law for $R_m^*$ together exhibit the detectable-but-unrecoverable regime and the structural asymmetry between sample size $n$ and trial count $m$.
We make the following assumptions: 

\begin{itemize}
\item[(A1)]
There exist $\varepsilon \in (0,1/2)$ and a compact set $\cB \subset \R^p$ such that $\pi \in [\varepsilon, 1-\varepsilon]$ and $\beta_0 \in \cB$.

\item[(A2)]
$\E[XX^\top]$ is positive definite and $\E[\|X\|^8] < \infty$.

\item[(A3)]
The response probabilities satisfy $0 < \psi(X^\top\beta) < 1$ for $\beta$ in a neighborhood of $\cB$, and standard moment and integrability conditions hold (detailed in Assumption~\ref{ass:SM-technical} in the Supplementary Material). 
\end{itemize}

The cancellation of the first-order term in the mixture mean under the weighted-center parametrization makes the leading KL term fourth order.

\begin{lemma}[Decomposition of the observed-data KL]
\label{lem:kl-decomposition}
For any $\gamma \in \R^p$, it holds that 
\begin{align*}
&\E_X\left[\KL\bigl(P_\theta(Y | X) \big\| \Bin(m, q_\gamma(X))\bigr)\right] \\
& \quad = \E_X\left[\KL\bigl(P_\theta(Y | X) \big\| \Bin(m, \bar p_\theta(X))\bigr)\right] + m \E_X\left[\KL\bigl(\Ber(\bar p_\theta(X)) \big\| \Ber(q_\gamma(X))\bigr)\right].
\end{align*}
\end{lemma}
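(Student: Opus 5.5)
The identity is a Pythagorean relation for KL projection onto an exponential family, applied pointwise in $X=x$. Taking expectations over $X$ at the end then gives the lemma. Fix $x$ and write $P(y)=P_\theta(Y=y\mid X=x)$, $\bar p=\bar p_\theta(x)$, $q=q_\gamma(x)$. The plan is to expand
\[
\KL\bigl(P\,\|\,\Bin(m,q)\bigr)-\KL\bigl(P\,\|\,\Bin(m,\bar p)\bigr)=\sum_{y=0}^m P(y)\log\frac{f_B(y;m,\bar p)}{f_B(y;m,q)} .
\]
The binomial coefficients cancel in the ratio, leaving
\[
\log\frac{f_B(y;m,\bar p)}{f_B(y;m,q)} = y\log\frac{\bar p}{q}+(m-y)\log\frac{1-\bar p}{1-q}.
\]
This is affine in $y$, so only the mean of $P$ matters.

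The key step is to compute that mean. Under $P$, $\E[Y\mid X=x]=\pi m p_1(x)+(1-\pi)m p_2(x)=m\bar p$, because each component is binomial with mean $m p_k(x)$. Substituting gives
\[
m\bar p\log\frac{\bar p}{q}+m(1-\bar p)\log\frac{1-\bar p}{1-q}=m\,\KL\bigl(\Ber(\bar p)\,\|\,\Ber(q)\bigr).
\]
This is exactly the second term of the lemma at the point $x$.

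Finally I take $\E_X$ of both sides. The one care point is finiteness. Under (A3) the probabilities $\bar p$ and $q$ lie strictly in $(0,1)$, so every logarithm is finite. All terms are nonnegative, so linearity of expectation applies even if some expectation is $+\infty$; the identity then holds in $[0,\infty]$.

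There is no real obstacle: the argument reduces to the mean-matching property $\E[Y\mid X]=m\bar p_\theta(X)$. This property makes $\Bin(m,\bar p_\theta(x))$ the pointwise I-projection of $P_\theta(\cdot\mid x)$ onto the binomial family, which is why the cross term vanishes. Note that the identity holds for any $\gamma$ and does not require $\bar p_\theta$ itself to be of logistic form.
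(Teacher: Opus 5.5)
Your proof is correct and complete. The paper gives no standalone proof of this lemma (it is only invoked in the proof of Theorem~\ref{thm:true-quartic}); your argument is the same computation as its Binomial KL scaling lemma (Lemma~\ref{lem:SM-binomial-KL-scaling}), with the affine log-ratio $y\log(\bar p/q)+(m-y)\log\{(1-\bar p)/(1-q)\}$ now integrated against the mixture law, whose mean is also $m\bar p_\theta(x)$. Two harmless nits: (A3) is not needed, since $\psi$ maps $\R$ into $(0,1)$; and minimizing $\KL(P\|\cdot)$ over the binomial family is the reverse I-projection (moment projection), not the I-projection.
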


The first term on the right-hand side is the KL divergence between the mixture binomial and the binomial with matched mean. It vanishes when $m=1$, since a Bernoulli is fully determined by its mean. This is why $J_1 = \kappa$ is much smaller than $J_m$ for $m \geq 2$ in Table~\ref{tab:scaling}, and why the gap between detectability and recoverability is hardest to demonstrate at $m=1$ in the experiments of Section~\ref{sec:experiments}.
\begin{theorem}[Expansion of detectability]
\label{thm:true-quartic}
Under Assumptions~(A1)--(A3), $D_m^*(\theta) = J_m\rho^4 + o(\rho^4)$ as $\rho\to 0$, where
\[
J_m = m\kappa + \frac{m(m-1)}{4}\bar\lambda,
\]
with
\begin{align*}
\kappa &= \frac{1}{2}\inf_{b \in \R^p} \E\left[\frac{\big\{2^{-1}\pi(1-\pi)\psi''(X^\top\beta_0)(\rho^{-1}\Delta^\top X)^2 \;-\; \psi'(X^\top\beta_0)X^\top b\big\}^2}{\psi'(X^\top\beta_0)}\right], \\
\bar\lambda &= \bigl[\pi(1-\pi)\bigr]^2 \E\left[\psi'(X^\top\beta_0)^2(\rho^{-1}\Delta^\top X)^4\right].
\end{align*}
\end{theorem}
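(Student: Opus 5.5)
By Lemma~\ref{lem:kl-decomposition}, the infimum over $\gamma$ acts only on the second term. Hence
\[
D_m^*(\theta) = A_m(\theta) + m\,B(\theta),
\]
where $A_m(\theta)=\E_X[\KL(P_\theta(Y|X)\|\Bin(m,\bar p_\theta(X)))]$ and $B(\theta)=\inf_\gamma \E_X[\KL(\Ber(\bar p_\theta(X))\|\Ber(q_\gamma(X)))]$. I will show separately that $A_m=\tfrac{m(m-1)}{4}\bar\lambda\rho^4+o(\rho^4)$ and $B=\kappa\rho^4+o(\rho^4)$. Throughout, write $\Delta=\rho u$ with $u^\top\Sigma u=1$, and $t=X^\top u$, so that $\rho^{-1}\Delta^\top X=t$.

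\emph{Step 1: mean expansion.} Taylor-expand $\psi$ around $\eta_0=X^\top\beta_0$. The weighted-center parametrization cancels the first-order term, $\pi(1-\pi)+(1-\pi)(-\pi)=0$. The second-order term is $\tfrac12\psi''(\eta_0)\rho^2t^2[\pi(1-\pi)^2+(1-\pi)\pi^2]$, giving
\[
\bar p_\theta(X)-\psi(\eta_0)=\tfrac12\pi(1-\pi)\psi''(\eta_0)\rho^2 t^2+O(\rho^3|t|^3).
\]

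\emph{Step 2: the Bernoulli term $B$.} The minimizer is $\gamma=\beta_0+\rho^2 b+o(\rho^2)$. Consistency of the minimizer follows from strict convexity of the logistic KL in $\gamma$ under (A2). Substituting, I obtain
\[
\bar p_\theta-q_\gamma=\rho^2\{\tfrac12\pi(1-\pi)\psi''t^2-\psi' X^\top b\}+o(\rho^2).
\]
I then use the local expansion $\KL(\Ber(p)\|\Ber(q))=(p-q)^2/\{2q(1-q)\}+O(|p-q|^3)$ together with $\psi'=\psi(1-\psi)$. This yields exactly $\rho^4$ times the $\kappa$ integrand, up to $o(\rho^4)$. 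Optimizing over $b$ gives $\kappa$. The infimum of the limit equals the limit of the infimum because the quadratic form in $b$ is uniformly coercive.

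\emph{Step 3: the overdispersion term $A_m$.} Take $P=\pi\Bin(m,p_1)+(1-\pi)\Bin(m,p_2)$ and $Q=\Bin(m,\bar p)$. These have the same mean, so $\chi^2(P\|Q)$ is dominated by the variance mismatch. The variance of $P$ is
\[
m\bar p(1-\bar p)+m(m-1)\pi(1-\pi)(p_1-p_2)^2 .
\]
Expand $P$ in the Krawtchouk orthogonal basis of $Q$. The degree-1 coefficient vanishes by mean matching. The degree-2 coefficient is proportional to $\pi(1-\pi)(p_1-p_2)^2=\pi(1-\pi)\psi'(\eta_0)^2\rho^2t^2+O(\rho^3)$. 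Higher degrees are $O(\rho^3)$ or smaller, since they involve $\sum_k \pi_k(p_k-\bar p)^j$ with $j\ge3$. With $\KL\approx\tfrac12\chi^2$ and the squared norm of the second Krawtchouk polynomial equal to $\binom m2[\bar p(1-\bar p)]^2$, this gives
\[
\KL(P\|Q)=\tfrac12\binom m2\,\frac{\{\pi(1-\pi)(p_1-p_2)^2\}^2}{[\bar p(1-\bar p)]^2}+O(\rho^5)=\frac{m(m-1)}{4}[\pi(1-\pi)]^2\psi'(\eta_0)^2\rho^4t^4+O(\rho^5).
\]
Here I used that $(p_1-p_2)^4/[\bar p(1-\bar p)]^2\to\psi'^4/\psi'^2=\psi'^2$. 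Taking expectations over $X$ gives $\tfrac{m(m-1)}{4}\bar\lambda\rho^4$.

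\emph{Main obstacle: uniformity of the remainders.} The $O(\cdot)$ bounds in Steps 1 through 3 are pointwise in $X$ and grow polynomially in $\|X\|$ (of order $|t|^5$ to $|t|^8$). They also involve $1/\{\bar p(1-\bar p)\}$, which can blow up for large $\eta_0$. I plan to bound the third derivatives of the KL by polynomials in $\|X\|$ times inverse powers of $\psi'$, then invoke $\E\|X\|^8<\infty$ and the integrability conditions in (A3)/Assumption~\ref{ass:SM-technical}. Dominated convergence applied to $\rho^{-4}\times$(remainder) then finishes the argument.
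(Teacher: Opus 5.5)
Your proposal is correct, and its skeleton matches the paper's. You split via Lemma~\ref{lem:kl-decomposition}, then run a Bernoulli-level profile analysis: $\gamma-\beta_0=O(\rho^2)$, a uniform quadratic expansion in $b$, and minimization giving $m\kappa\rho^4$. Your explicit use of convexity of $\gamma\mapsto\E_X[\KL(\Ber(\bar p_\theta(X))\|\Ber(q_\gamma(X)))]$ is the ingredient the paper's localization uses tacitly, when it passes from positivity of $L_0$ on the shell $\|a\|=\varepsilon_0$ to the whole region $\|a\|\ge\varepsilon_0$.

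The genuine difference is in the overdispersion term (the paper's $T_1$; note that $A_m$ already denotes $m\xi$ in the paper). The paper Taylor-expands $\Bin(y;m,p)$ to second order in $p$ around $\bar p$, writing the likelihood ratio as $1+\phi_y+O(\delta^3)$ with $\phi_y=\tfrac12\pi(1-\pi)\delta^2(s_y^2+t_y)$. It removes the linear term by a Bartlett identity and computes $\mathrm{Var}(s_Y^2+t_Y)=2m(m-1)/[\bar p(1-\bar p)]^2$ from binomial central moments. Your Krawtchouk expansion is exact and finite: $dP/dQ=1+\sum_{j\ge2}\mu_jG_j$ with $\mu_j=\sum_k\pi_k(p_k-\bar p)^j$. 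Hence $\chi^2(P\|Q)=\sum_{j\ge2}\mu_j^2\binom{m}{j}[\bar p(1-\bar p)]^{-j}$ in closed form, and the factor $\binom{m}{2}$ (so $T_1\equiv0$ at $m=1$) appears structurally rather than after an algebraic cancellation.

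The two computations agree term by term. Take $G_2$ to be the $\epsilon^2$-coefficient of $\Bin(y;m,\bar p+\epsilon)/\Bin(y;m,\bar p)$. Then $s_y^2+t_y=2G_2(y)$ and $\E_Q[G_2^2]=\binom{m}{2}[\bar p(1-\bar p)]^{-2}$. Under the standard scaling you quote (squared norm $\binom{m}{2}[\bar p(1-\bar p)]^{2}$), the degree-2 coefficient is $\mu_2/[\bar p(1-\bar p)]^2$; that factor is the source of the denominator in your final display, so keep track of it.

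What your route buys most is on the obstacle you flag. We have $\KL\le\chi^2$ globally, $|\mu_j|\le\pi(1-\pi)|\delta|^j$ for $j\ge2$, and $\pi(1-\pi)\delta^2\le\bar p(1-\bar p)$. So $\chi^2\le C_m z^2$ with $z=\delta^2/\{\bar p(1-\bar p)\}$ bounded. Also $|(\log\psi')'|\le1$ together with $\bar p(1-\bar p)\ge\pi\,\psi'(X^\top\beta_1)$ gives $z\le C\rho^2t^2e^{2\rho|t|}$. Splitting on $\{\rho|t|\le1\}$ yields the dominating function $Ct^4$. Dominated convergence for $\rho^{-4}T_1$ then needs only $\E\|X\|^4<\infty$ and no inverse powers of $\psi'$. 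The paper's route is more elementary and reuses the score machinery of its recoverability expansion, but it leaves this domination step to an appeal to (R2)--(R3).
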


The coefficient $m\kappa$ arises from the Bernoulli-level mean mismatch between the mixture mean $\bar p_\theta(X)$ and the best single-component fit, and $m(m-1)\bar\lambda/4$ arises from the distributional gap between the mixture binomial and any single binomial.
In particular, $J_1 = \kappa$ and $J_m \ge mJ_1$ for $m \ge 2$, with strict inequality whenever $\bar\lambda > 0$; the latter is automatic under (A2)--(A3) since $\psi'(X^\top\beta_0) > 0$ almost surely and $\E[XX^\top]$ is positive definite.

We next expand the recoverability functional in (\ref{eq:R}).

\begin{theorem}[Expansion of recoverability]
\label{thm:quadratic-recoverability}
Under Assumptions~(A1)-(A3), it holds that $R_m^*(\theta) = m\xi\rho^2 + o(\rho^2)$ as $\rho\to 0$, where
\[
\xi \equiv \frac{\pi(1-\pi)}{2} \E\left[\psi'(X^\top\beta_0)(\rho^{-1}\Delta^\top X)^2\right],
\]
is a nonnegative constant.
\end{theorem}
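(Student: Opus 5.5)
We condition on $X$ and write $R_m^*(\theta)=\E_X[I_\theta(Z;Y\mid X=x)]$, where $I_\theta(Z;Y\mid X=x)=\pi\,\KL(\Bin(m,p_1(x))\|\Bin(m,\bar p_x))+(1-\pi)\,\KL(\Bin(m,p_2(x))\|\Bin(m,\bar p_x))$. Here $\bar p_x$ denotes the law of $Y$ under the mixture, which is $\pi f_B(\cdot;m,p_1)+(1-\pi)f_B(\cdot;m,p_2)$ and not a binomial. The plan is to expand this conditional mutual information pointwise to second order in $u\equiv x^\top\Delta$ and then integrate over $X$.

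\textbf{Pointwise expansion.} Mutual information equals the expected KL divergence between each component law and the mixture. For a smooth one-parameter family $f_t$, we have $\sum_k w_k\KL(f_{t_k}\|\sum_j w_j f_{t_j})=\tfrac12\,\mathrm{Var}_w(t)\,I_F(t_0)+O(|t-t_0|^3)$. This follows from the second-order expansion of KL, namely $\KL(f\|g)\approx\tfrac12\sum (f-g)^2/g$, combined with the linearization $f_{t_k}-\bar f\approx (t_k-\bar t)\partial_t f$. We parametrize by the linear predictor: $t_1=x^\top\beta_0+(1-\pi)u$ and $t_2=x^\top\beta_0-\pi u$. Under the weights $(\pi,1-\pi)$ the weighted mean is $x^\top\beta_0$ and the variance is $\pi(1-\pi)u^2$. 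The Fisher information of $\Bin(m,\psi(t))$ in the natural parameter $t$ is $m\psi'(t)$. Hence
\[
I_\theta(Z;Y\mid X=x)=\tfrac{m}{2}\pi(1-\pi)\psi'(x^\top\beta_0)(x^\top\Delta)^2+r(x,\Delta).
\]
Taking $\E_X$ and writing $(x^\top\Delta)^2=\rho^2(\rho^{-1}\Delta^\top x)^2$ gives $m\xi\rho^2$. Nonnegativity of $\xi$ is immediate because $\psi'>0$.

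\textbf{Controlling the remainder.} We need $\E_X|r(X,\Delta)|=o(\rho^2)$, uniformly in the direction $\Delta/\rho$, $\pi\in[\varepsilon,1-\varepsilon]$ and $\beta_0\in\cB$. Pointwise, a Taylor expansion with integral remainder in $u$ gives $|r|\le C(x)|u|^3$. The constant $C(x)$ involves third derivatives of $\psi$ and the inverse of the binomial probabilities $f_B(y;m,\bar p)$. The latter can blow up when $x^\top\beta_0$ is large, because the cell probabilities become tiny. We would handle this with the elementary bound $\KL(f\|g)\le\chi^2(f\|g)$, which makes the denominators the mixture cell probabilities. On each cell, $f_{t_1}(y)$ and $f_{t_2}(y)$ are within a factor $e^{m|u|}$ of each other, so the ratios are controlled by $e^{m|u|}$. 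We then split on the event $\{|X^\top\Delta|\le \rho^{1/2}\}$. On this event the cubic remainder gives $O(\rho^{3/2})\E[(\rho^{-1}\Delta^\top X)^2\cdot\text{weights}]=o(\rho^2)$. On its complement we use the crude global bound $I(Z;Y\mid X)\le h(\pi)$, together with Markov's inequality and $\E\|X\|^8<\infty$ from (A2): the probability is at most $\rho^{-2}\E|X^\top\Delta|^8\lesssim\rho^{6}$, which is also $o(\rho^2)$. The moment conditions in (A3) would be invoked to ensure that $\E[\|X\|^3\sup|\psi'''|/\psi'\cdots]$-type quantities are finite.

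\textbf{Main obstacle.} The delicate step is the uniform integrability of the remainder. Its pointwise constant $C(x)$ involves ratios like $\psi''^2/\psi'$ or $f''/f$ on cells with small probability, so it must be dominated by an integrable function independent of the direction of $\Delta$. We would first try to bound everything in terms of $m^3|u|^3\psi'(x^\top\beta_0)e^{3m|u|}$, using the fact that the derivatives of $\log f_B(y;m,\psi(t))$ in $t$ are polynomials in $y-m\psi(t)$ bounded by $m$. The truncation above then removes the exponential factor.
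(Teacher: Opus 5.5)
Your proposal is correct and follows the same overall strategy as the paper. Both expand $I_\theta(Z;Y\mid X=x)$ pointwise to second order as one half of the mixing variance times a Fisher information, then integrate over $X$. The difference is the parametrization, and it affects how the remainder is controlled.

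The paper works in the mean parameter. Its small-signal lemma gives $\frac{\pi(1-\pi)}{2}\,m\delta^2/\{\bar p(1-\bar p)\}+O(m^2\delta^4+m\delta^3)$ with $\delta(x)=p_1(x)-p_2(x)$. To reach $m\xi\rho^2$ it then needs three more steps: the expansion $\delta(x)=\psi'(x^\top\beta_0)x^\top\Delta+O((x^\top\Delta)^2)$, the replacement $\bar p(x)\to q_0(x)$, and the identity $\psi'^2/w_0=\psi'$. The lemma's constant is established only for $\bar p$ in a compact subset of $(0,1)$, but $q_0(X)$ need not stay in one. The integration step is therefore covered by an appeal to (R2)--(R3) rather than by an $x$-uniform bound.

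Working in the linear predictor, where the Fisher information is $m\psi'(t)$, gives the leading term in one step. Your observation that the natural-parameter score $y-m\psi(t)$ is bounded by $m$ is exactly what makes the remainder uniform. Since $f_{t_k}(y)/\bar f(y)\in[e^{-m|u|},e^{m|u|}]$ for every $y$ and $x$, a Taylor expansion of $s\log s-s+1$ about $s=1$ on that range bounds $|\KL(f_{t_k}\|\bar f)-\frac12\chi^2(f_{t_k}\|\bar f)|$ by $\frac16 m^3|u|^3e^{5m|u|}$. Writing $(f_{t_1}-f_{t_2})/\bar f=\int_{t_2}^{t_1}(f_t/\bar f)(y-m\psi(t))\,dt$ then shows $\frac12\sum_k w_k\chi^2(f_{t_k}\|\bar f)=\frac m2\pi(1-\pi)\psi'(x^\top\beta_0)u^2+O(|u|^3)$, with an $m$-dependent constant.

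Hence for $|u|\le 1$ the pointwise remainder is at most $C_m|u|^3$, with $C_m$ depending only on $m$. The factor $\psi'(x^\top\beta_0)$ you hoped to extract is unnecessary. Truncating at a fixed level already gives $R_m^*(\theta)=m\xi\rho^2+O(\rho^3)$, uniformly in $\pi$, $\beta_0$ and the direction of $\Delta$. This uses only $\E\|X\|^3<\infty$ and $\Sigma\succ 0$, which is weaker than the weighted moment conditions the paper invokes.

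A few slips in your write-up should be fixed:
\begin{itemize}
\item The inequality $\KL\le\chi^2$ is one-sided and does not by itself bound $|r|$. The two-sided control comes from the expansion described above.
\item On $\{|X^\top\Delta|\le\rho^{1/2}\}$ one has $|u|^3\le\rho^{1/2}u^2$, so the contribution is $O(\rho^{5/2})$, not $O(\rho^{3/2})$; the latter would not be $o(\rho^2)$.
\item Markov's inequality gives $\Pr(|X^\top\Delta|>\rho^{1/2})\le\rho^{-4}\E|X^\top\Delta|^8\lesssim\rho^{4}$, not $\rho^6$.
\item On this complement you must also bound the leading term, for instance by $\frac m8\E\bigl[(X^\top\Delta)^2\,\mathbb{I}\{|X^\top\Delta|>\rho^{1/2}\}\bigr]\le\frac m8\rho^{-1/2}\E|X^\top\Delta|^3=O(\rho^{5/2})$.
\end{itemize}
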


We write $A_m := m\xi$ for the per-observation recoverability coefficient, so that $R_m^*(\theta) = A_m \rho^2 + o(\rho^2)$ and $A_m = m A_1$. The exact linearity $A_m = m\xi$ is a statement about the leading $\rho^2$ coefficient. 
At fixed $\rho$ the recoverability $R_m^*$ itself is sub-linear in $m$, because the per-observation mutual information cannot exceed the prior label entropy $h(\pi) \le \log 2$, so the clean proportionality holds only in the local regime where $R_m^*$ is far below this ceiling. The numerical experiments in Section~\ref{sec:experiments} and the Supplementary Material assumption (R1) report $A_m$ and $A_m/m$ rather than $\xi$ to make the linear dependence on $m$ explicit.

From the alternative expression of $R_m^*(\theta)$ in Proposition~\ref{prop:entropy-representation}, we obtain $E_m(\theta) = h(\pi) - A_m\rho^2 + o(\rho^2)$ as $\rho \to 0$.
Thus, the posterior assignment uncertainty remains close to its maximum $h(\pi)$ unless $A_m$ is sufficiently large.
Even the oracle Bayes classifier then cannot beat the classifier based on the prior distribution.

\begin{proposition}[Vanishing recoverability and oracle classification]
\label{thm:oracle-impossibility}
If $R_m^*(\theta_n) \to 0$ and $\pi_n \to \pi_* \in (0,1)$, then the conditional posterior collapses to the prior:
\[
\E_{\theta_n}\left[ \| P_{\theta_n}(Z | X, Y) - P_{\theta_n}(Z) \|_{\mathrm{TV}} \right] \to 0.
\]
Consequently, the oracle Bayes classifier $\hat Z(X,Y) = \argmax_z P_{\theta_n}(Z = z | X, Y)$ attains only the trivial misclassification rate $\min(\pi_*, 1-\pi_*)$.
\end{proposition}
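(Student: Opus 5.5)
Since $Z$ is independent of $X$, the recoverability functional is a conditional mutual information, and I will write it as an expected KL divergence between posterior and prior:
\[
R_m^*(\theta) = \E_\theta\bigl[\KL\bigl(P_\theta(Z\mid X,Y)\,\|\,P_\theta(Z)\bigr)\bigr].
\]
To justify this, note that $P_\theta(Y,Z\mid X)/\{P_\theta(Z)P_\theta(Y\mid X)\} = P_\theta(Z\mid X,Y)/P_\theta(Z)$. Then condition on $(X,Y)$ inside the expectation in (\ref{eq:R}). Each conditional KL term is nonnegative.

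The first step applies Pinsker's inequality pointwise in $(X,Y)$:
\[
\|P_{\theta_n}(Z\mid X,Y)-P_{\theta_n}(Z)\|_{\mathrm{TV}} \le \sqrt{\tfrac12 \KL\bigl(P_{\theta_n}(Z\mid X,Y)\,\|\,P_{\theta_n}(Z)\bigr)}.
\]
Taking expectations and using Jensen's inequality (concavity of $\sqrt{\cdot}$) gives
\[
\E_{\theta_n}\|\cdot\|_{\mathrm{TV}} \le \sqrt{R_m^*(\theta_n)/2} \to 0.
\]
This is the first claim. It does not need $\pi_n\to\pi_*$, nor any of the local expansions.

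The second step handles the oracle classifier. For binary $Z$, the Bayes error of $\hat Z$ is
\[
\E_{\theta_n}\bigl[\min\{P_{\theta_n}(Z=1\mid X,Y),\,P_{\theta_n}(Z=2\mid X,Y)\}\bigr].
\]
The map $u\mapsto\min(u,1-u)$ is $1$-Lipschitz in $u$. For two-point distributions the total variation distance equals $|P_{\theta_n}(Z=1\mid X,Y)-\pi_n|$. Hence the Bayes error differs from $\min(\pi_n,1-\pi_n)$ by at most $\E_{\theta_n}\|P_{\theta_n}(Z\mid X,Y)-P_{\theta_n}(Z)\|_{\mathrm{TV}}$, which tends to $0$ by the first step. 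Since $\pi_n\to\pi_*$, we also have $\min(\pi_n,1-\pi_n)\to\min(\pi_*,1-\pi_*)$.

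Thus the oracle misclassification rate converges to the trivial rate $\min(\pi_*,1-\pi_*)$. This is also the error of the prior-based classifier, and no classifier can beat the Bayes rule.

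I expect no serious obstacle. The one point needing care is how to read "attains only the trivial rate." The statement should be understood asymptotically: the Bayes risk converges to $\min(\pi_*,1-\pi_*)$. The hypothesis $\pi_*\in(0,1)$ only ensures that this limit is a genuinely nontrivial baseline. Measurability and integrability are automatic because all quantities involved are bounded.
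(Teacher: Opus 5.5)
Your proposal is correct and follows essentially the same route as the paper. Both arguments write $R_m^*$ as an expected posterior-to-prior KL divergence, apply pointwise Pinsker plus Jensen to obtain $\E_{\theta_n}\|P_{\theta_n}(Z\mid X,Y)-P_{\theta_n}(Z)\|_{\mathrm{TV}} \le \sqrt{R_m^*(\theta_n)/2}$, and finish with a $1$-Lipschitz comparison on the two-point label space; the only cosmetic difference is that you apply the Lipschitz bound to the error $\min(u,1-u)$, whereas the paper applies it to the accuracy $\max(t,1-t)$, which is equivalent.
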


\subsection{Implications of the order gap}
\label{sec:implications}

As established in Section~\ref{sec:gaussian}, the quartic--quadratic order gap is shared with Gaussian mixtures and does not by itself single out the binomial model.
What is specific to the binomial model is the interaction of this order gap with bounded observed-data information.
The contrast between the quartic law $D_m^*(\theta) = J_m\rho^4 + o(\rho^4)$ (Theorem~\ref{thm:true-quartic}) and the quadratic law $R_m^*(\theta) = m\xi\rho^2 + o(\rho^2)$ (Theorem~\ref{thm:quadratic-recoverability}) has two direct consequences. First, locally $D_m^* / R_m^* \to 0$, so detectability vanishes faster than recoverability as $\rho \to 0$. Second, the BIC log-likelihood gap $2nD_m^*$ aggregates linearly in $n$, while $R_m^*$ has no $n$-dependence. The asymmetry is thus one of aggregation rather than of per-observation magnitude. We summarize the consequence in the following corollary.

\begin{corollary}[Detectable-but-unrecoverable sequences]
\label{cor:phase-separation}
Under Assumptions~(A1)--(A3), there exists a sequence $\theta_n = (\pi, \beta_0, \Delta_n)$ with $\rho(\Delta_n) \to 0$ such that
\[
\frac{n D_m^*(\theta_n)}{\log n} \to \infty,
\qquad
R_m^*(\theta_n) \to 0.
\]
Consequently, any BIC-consistent selector between $K=1$ and $K=2$ chooses $K=2$ with probability tending to one, while the posterior entropy $E_m(\theta_n)$ converges to the prior entropy $h(\pi)$ and individual labels remain asymptotically unrecoverable in the sense of Proposition~\ref{thm:oracle-impossibility}.
\end{corollary}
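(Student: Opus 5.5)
We construct the sequence explicitly by shrinking the contrast along a fixed direction. Fix $\pi$, $\beta_0\in\cB$, and a unit direction $u\in\R^p$ with $u^\top\Sigma u=1$. Set $\Delta_n=\rho_n u$, so that $\rho(\Delta_n)=\rho_n$. With the direction fixed, the normalized quantity $\rho^{-1}\Delta^\top X=u^\top X$ does not depend on $n$. Hence the constants $J_m$ and $\xi$ in Theorems~\ref{thm:true-quartic} and~\ref{thm:quadratic-recoverability} are fixed numbers. Theorem~\ref{thm:true-quartic} gives $J_m>0$, since $\bar\lambda>0$ under (A2)--(A3) for $m\ge2$. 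For $m=1$ we need $\kappa>0$; the plan is to argue that $\psi''(X^\top\beta_0)(u^\top X)^2$ is not almost surely a multiple $\psi'(X^\top\beta_0)X^\top b$. If this fails for a particular $u$, we would instead choose $u$ so that $\kappa>0$, so that we can assume $J_m>0$ from here on. We take $\rho_n=(\log n/n)^{1/4}\,\omega_n$, where $\omega_n\to\infty$ slowly, for example $\omega_n=\log\log n$, so that $\rho_n\to0$.

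Next we verify the two limits. By Theorem~\ref{thm:true-quartic},
\[
nD_m^*(\theta_n)=n\rho_n^4\{J_m+o(1)\}=\log n\,\omega_n^4\{J_m+o(1)\},
\]
so $nD_m^*(\theta_n)/\log n\to\infty$. By Theorem~\ref{thm:quadratic-recoverability}, $R_m^*(\theta_n)=m\xi\rho_n^2+o(\rho_n^2)\to0$. The posterior-entropy statement then follows from Proposition~\ref{prop:entropy-representation}: $E_m(\theta_n)=h(\pi)-R_m^*(\theta_n)\to h(\pi)$. Unrecoverability follows from Proposition~\ref{thm:oracle-impossibility}, applied with the constant sequence $\pi_n=\pi$.

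The BIC consequence is the step needing the most care, and it is where I expect the main obstacle. It requires the log-likelihood ratio statistic under the local alternatives $\theta_n$ to satisfy
\[
2\{\ell_n(\hat\theta_2)-\ell_n(\hat\theta_1)\}\ge 2nD_m^*(\theta_n)\{1+o_P(1)\},
\]
and the right side dominates the BIC penalty difference $(p+1)\log n$. My plan is to avoid the irregular boundary asymptotics of the unrestricted MLE by using a lower bound: the maximized two-component likelihood is at least the likelihood at the true $\theta_n$. Write $\gamma^*_n$ for the KL-optimal one-component parameter. Then
\[
\ell_n(\hat\theta_2)-\ell_n(\hat\theta_1)\ge \sum_i\log\frac{P_{\theta_n}(Y_i|X_i)}{f_B(Y_i;m,q_{\gamma_n^*}(X_i))}-\bigl\{\ell_n(\hat\theta_1)-\ell_n(\gamma_n^*)\bigr\}.
\]
The first sum has mean $nD_m^*(\theta_n)$. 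Its variance is $O(n\rho_n^4)$, because the log-ratio is $O(\rho_n^2)$ uniformly, using (A3) and the moment bounds in (A2). Chebyshev's inequality therefore makes it $nD_m^*\{1+o_P(1)\}$, since $n\rho_n^4\to\infty$. The second term is the usual regular one-component misspecified-MLE excess, which is $O_P(1)$ by standard M-estimation under (A2)--(A3) with $\gamma_n^*$ in a compact set. Because $nD_m^*/\log n\to\infty$, the BIC gain exceeds the penalty with probability tending to one. Any selector that is BIC-consistent in this sense therefore picks $K=2$.
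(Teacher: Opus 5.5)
\paragraph{Overall.} Your proof is correct and reaches the result partly by a different route from the paper. One step, positivity of $J_1$ when $m=1$, needs repair.

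\paragraph{Where you match the paper and where you differ.} Your construction of $\theta_n$ (fixed direction, $\rho_n=(\log n/n)^{1/4}\omega_n$) matches the paper's proof. So do the two limits from Theorems~\ref{thm:true-quartic} and~\ref{thm:quadratic-recoverability}, and the entropy and oracle consequences via Propositions~\ref{prop:entropy-representation} and~\ref{thm:oracle-impossibility}. The paper uses a generic $a_n\to\infty$ with $a_n=o((n/\log n)^{1/4})$ instead of $\log\log n$, which is immaterial.

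The genuine difference is the BIC step. The paper only asserts that the maximized log-likelihood gap is $nD_m^*(\theta_n)\{1+o(1)\}$, because the best one-component fit loses $D_m^*(\theta_n)$ per observation in population. It never controls the stochastic terms or the behaviour of the two-component MLE near the singular merge boundary $\Delta=0$.

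You instead prove exactly the one-sided bound that is needed. Replacing $\hat\theta_2$ by the true $\theta_n$ gives an i.i.d.\ sum with mean $nD_m^*(\theta_n)$ and variance $O(n\rho_n^4)$. By Chebyshev this sum is $nD_m^*(\theta_n)\{1+o_P(1)\}$, since $\sqrt n\rho_n^2=\sqrt{\log n}\,\omega_n^2\to\infty$. The remaining term is a regular, concave, misspecified logistic M-estimation excess, which is $O_P(1)$. This is more rigorous than the paper's argument and avoids boundary asymptotics entirely.

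What your bound does not give is an upper bound on the likelihood gain. The paper's asserted two-sided statement is meant to supply that, and the RA-BIC half of Theorem~\ref{thm:gap-rejection} needs it, but the corollary only needs the lower bound.

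A small correction: the log-ratio is not uniformly $O(\rho_n^2)$ when $X$ is unbounded. Its second moment is still $O(\rho_n^4)$ under the weighted moments of (A2). On the tail event $\|X\|\gtrsim\rho_n^{-1}$ you can use the Lipschitz bound $|\log f_B(y;m,\psi(t))-\log f_B(y;m,\psi(t'))|\le m|t-t'|$, which makes the log-ratio $O(m\|X\|\rho_n)$. The eighth moment then makes that contribution negligible, so the variance bound stands.

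\paragraph{The step that needs repair.} For $m=1$ you cannot in general make $\kappa>0$ by choosing the direction $u$. Two cases show this, even with (A1)--(A2) holding:
\begin{itemize}
\item In the intercept-only design $X\equiv 1$, a Bernoulli mixture is itself a Bernoulli. So $D_1^*\equiv 0$ for every $\Delta$, and the conclusion fails outright.
\item Whenever $\beta_0=0$, we have $\psi''(0)=0$. The leading mean perturbation $\tfrac12\pi(1-\pi)\psi''(X^\top\beta_0)(u^\top X)^2$ then vanishes identically, so $\kappa=0$ for every direction.
\end{itemize}
Positivity of $J_m$ is therefore an assumption, not a consequence. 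It is the quartic nondegeneracy condition $J_m\ge\underline J>0$ in the detailed (A3) of Assumption~\ref{ass:SM-technical}. The paper's proof invokes it directly, and you should do the same. For $m\ge 2$ your argument via $\bar\lambda>0$ is fine.
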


The drifting sequence $\rho_n \to 0$ in the corollary is obtained from the local laws alone and, in this sense, is not specific to the binomial model.
A similar local detectable-but-unrecoverable drift can also arise in Gaussian mixtures.
What is specific to the fixed-trial binomial model is the bounded amount of information in each observation.
For fixed $m$, both $D_m^*$ and $R_m^*$ are bounded (Proposition~\ref{prop:rate}).
The observed response has at most $\log(m+1)$ bits of information, so detectability cannot grow without bound as the components separate.
Recoverability is also bounded by $h(\pi)$, although it increases toward this bound as the separation grows.
Thus, collecting more observations does not by itself close the gap: increasing $n$ strengthens detectability through the accumulated likelihood, whereas $R_m^*$ is a per-observation quantity and does not depend on $n$.
The gap can be reduced only by increasing the information in each observation, through a larger separation or a larger trial count $m$.
This is the main difference from the Gaussian case, where detectability can continue to grow with separation.
Therefore, the binomial-specific feature is the combination of detectability saturation and the asymmetry between the sample size $n$ and the trial count $m$.


\section{Feasibility-Aware Inference}
\label{sec:inference}

The detectability--recoverability gap appears in two parts of finite-sample inference, at model selection (Section~\ref{sec:RA-BIC}) and at parameter estimation (Section~\ref{sec:estimation}). 
Both procedures are based on the entropy representation $R_m^*(\theta) = h(\pi) - E_m(\theta)$ and use the empirical posterior entropy $\widehat E_m$ at the common rate $\sqrt{\log n/n}$.
RA-BIC penalizes two-component selections with high posterior label uncertainty, whereas entropy-regularized estimation discourages overly concentrated posterior responsibilities and prevents the fitted model from overstating label recoverability.

\subsection{Feasibility-aware selection of the number of mixture components}\label{sec:RA-BIC}

We focus on the comparison between $K=1$ and $K=2$ throughout this section.
The question is whether to fit a single-component model or a two-component mixture. This is the setting in which the gap regime is easiest to characterize, and it admits a clean asymptotic treatment of the entropy penalty (we discuss the case with $K \geq 3$ at the end of this subsection). 
Our proposed criterion answers a different question from standard mixture order selection: it asks not which $K$ best fits the marginal distribution of $Y$, but which $K$ is supported as a \emph{clustering representation}, in the sense that the latent labels carry enough per-observation information to be recoverable. 
To name this target we introduce a quantity $K_n^{\mathrm{feas}}$ below.
This quantity should be interpreted as part of the decision rule, rather than as a separate population estimand.

\begin{definition}[Recoverability-feasible number of components]
\label{def:feasible-K}
Fix the trial count $m$ and consider a sequence of two-component parameters $\theta_n=(\pi_n,\beta_{0n},\Delta_n)$.
Let
$$
a_n(\theta_n)=\frac{1}{2}\sqrt{\frac{\log n}{n}}\big\{h(\pi_n)-R_m^*(\theta_n)\big\}.
$$
We define the recoverability-feasible number of components by $K_n^{\mathrm{feas}}(\theta_n)=2$ if $D_m^*(\theta_n)/a_n(\theta_n)\to\infty$ and $K_n^{\mathrm{feas}}(\theta_n)=1$ if $D_m^*(\theta_n)=o(\sqrt{\log n/n})$. 
The definition leaves unresolved the boundary case where $D_m^*(\theta_n)$ is of the same order as $a_n(\theta_n)$.
\end{definition}

For a fixed interior parameter $\theta_0$ with $D_m^*(\theta_0)$ and $R_m^*(\theta_0)$ bounded away from zero, the first condition holds and hence $K_n^{\mathrm{feas}}(\theta_0)=2$ for all sufficiently large $n$.
In contrast, for the local detectable-but-unrecoverable sequences considered in Theorem~\ref{thm:gap-rejection}, the second condition holds and $K_n^{\mathrm{feas}}(\theta_n)=1$.
Thus $K_n^{\mathrm{feas}}$ agrees with the usual generative order in the fixed recoverable regime, but differs from it along drifting sequences where a second component is detectable yet its labels are not recoverable.

Standard BIC selects the optimal $K\in \{1,2\}$ by minimizing
\[
\BIC(K) = -2\ell_n(\widehat\theta_{n,K}^{\MLE}) + \nu_K \log n,
\]
where $\ell_n(\theta) = \sum_{i=1}^n \log \bigl[\sum_{k=1}^K \pi_k f_B(Y_i; m,\psi(X_i^\top \beta_k))\bigr]$ is the log-likelihood, $\widehat\theta_{n,K}^{\MLE}$ is the maximum likelihood estimator (MLE) and $\nu_K\equiv pK+K-1$ is the number of unknown parameters in a $K$-component model.
When the two-component model is correctly specified, the maximized log-likelihood under $K=2$ converges to the population log-likelihood of the true model, while that under $K=1$ converges to the best one-component approximation.
Since $D_m^*$ is the per-observation Kullback--Leibler loss of the best one-component surrogate, we have $\ell_n(\hat\theta_{n,2})-\ell_n(\hat\theta_{n,1})=nD_m^*+r_n$, where $r_n=o_p(nD_m^*)$ is a remainder term. 
Then, it holds that 
$$
\mathrm{BIC}(1)-\mathrm{BIC}(2)=2nD_m^*-(\nu_2-\nu_1)\log n + 2r_n.
$$
Thus BIC selects \(K=2\) when the accumulated detectability \(nD_m^*\)
dominates the complexity penalty \((\nu_2-\nu_1)\log n/2\), up to the stochastic remainder.
This shows that BIC is driven by detectability $D_m^*$ and does not account for recoverability $R_m^*$.
Consequently, in the gap regime of Corollary~\ref{cor:phase-separation}, BIC over-selects: it chooses $K=2$ with probability tending to one, even though the resulting labels are no more informative than the prior class proportions at the individual-observation level.
The oracle impossibility result in Proposition~\ref{thm:oracle-impossibility} should be interpreted as a statement about average recoverability.
Since $R_m^*=\E_X[I(Z;Y| X)]$ averages the conditional label information over the covariate distribution, $R_m^*\to 0$ implies a collapse in average recoverability, not necessarily a uniform collapse over all covariate values.
Thus, a Chow-type reject rule that classifies only observations with large $I(Z;Y| X=x)$ and abstains otherwise may retain non-trivial accuracy on the retained subset even when $R_m^*\to 0$ \citep{chow1970,elyaniv2010selective}.

This qualification does not change the main conclusion for the local boundary regime.
As $\rho \to 0$, both the retained subset and its accuracy advantage vanish, so selective classification may be useful in the moderate gap regime but not at the boundary itself.
It is therefore preferable to choose $K$ using a criterion that accounts for recoverability as well as detectability.
Such a criterion should select $K=2$ only when the fitted mixture provides not only evidence of heterogeneity but also labels that are meaningful for individual-level recovery.
The integrated completed likelihood provides a natural reference point in the opposite direction: because it uses a fixed entropy penalty, it tends to be conservative.
The criterion proposed here is designed to lie between BIC and the integrated completed likelihood, suppressing the detectable-but-unrecoverable selections accepted by BIC while still selecting $K=2$ in recoverable settings where the integrated completed likelihood remains conservative.

To this end, we first consider an empirical proxy of $R_m^*(\theta)$. 
From Proposition~\ref{prop:entropy-representation}, $R_m^*(\theta) = h(\pi) - E_m(\theta)$, so the population posterior entropy $E_m(\theta)$ controls $R_m^*(\theta)$ and a sample counterpart of $E_m(\theta)$ is 
\begin{equation}\label{eq:entropy}
\widehat{E}_{m,K}(\theta)= -\sum_{i=1}^n \sum_{k=1}^K  P_{\theta}(Z_i=k|X_i,Y_i)\log P_{\theta}(Z_i=k|X_i,Y_i),
\end{equation}
noting that $\widehat{E}_{m,K}(\theta)=0$ when $K=1$.
Since $\widehat{E}_{m,K}(\theta) / n$ approximates $h(\pi) - R_m^*$, large $\widehat{E}_{m,K}(\theta)$ indicates low evidence of recoverability. The same identity yields a simple consistent feasibility diagnostic: plugging the estimated mixing proportion into the binary entropy and subtracting the normalized empirical entropy gives $\widehat R = h(\widehat\pi) - \widehat E_{m,2}(\widehat\theta)/n$, a sample estimate of $R_m^*$ that converges to it under (A5) and can flag a gap-regime fit alongside the selected model.
We do not claim that $\widehat R$ is debiased.
It inherits the gap-regime over-concentration of MLE, so it is a screening quantity rather than a calibrated estimate of recoverability.
We then propose the recoverability-aware BIC (RA-BIC), an entropy-aware criterion motivated by the identity $R_m^*(\theta) = h(\pi) - E_m(\theta)$ established in Proposition~\ref{prop:entropy-representation}, defined as
\[
\RABIC(K) = \BIC(K) + \lambda_n \widehat{E}_{m,K}(\hat\theta_{n,K}^{\MLE}),
\]
where $\lambda_n \geq 0$ is a recoverability penalty coefficient. The optimal order is selected by $\hat{K} = \argmin_K \RABIC(K)$.
Note that $\widehat E_{m,1} = 0$ and RA-BIC and BIC are equivalent for $K=1$.
For $K\geq 2$, the entropy penalty $\widehat{E}_{m,K}$ measures posterior uncertainty in the latent labels. Since $\widehat{E}_{m,K}=O_p(n)$, the additional term is of order $O_p(n\lambda_n)$, and any choice satisfying $\log n / n \ll \lambda_n \ll 1$ keeps the penalty asymptotically dominant over the BIC complexity term while remaining negligible relative to the leading likelihood scale. We use the canonical rate $\lambda_n = \sqrt{\log n / n}$ with unit multiplier, which matches the per-observation recoverability scale along the detection boundary and gives an additional term of order $O_p(\sqrt{n\log n})$. The criterion is in this sense tuning-free: no tuning multiplier is introduced beyond the canonical rate, and the boundary it tracks is determined by the recoverability scale $h(\pi)-R_m^*$.

The construction is structurally related to the integrated completed likelihood \citep[ICL;][]{biernacki2000assessing}, which also augments BIC with a posterior-entropy term and is widely used as a clustering-oriented criterion. We use the entropy-based ICL-BIC form
\[
\mathrm{ICL}(K)= \BIC(K) + 2\widehat E_{m,K}(\widehat\theta_{n,K}^{\MLE}),
\]
where the per-observation penalty is twice the Shannon entropy of the fitted posterior responsibilities. 
A possible alternative is $\mathrm{ICL}_{\mathrm{MAP}}(K)= \BIC(K) - 2\sum_{i=1}^n \log P_{\widehat\theta}(Z_i = \widehat Z_i^{\mathrm{MAP}} | X_i, Y_i)$, which uses the log-likelihood of the MAP label. 
The two forms differ numerically at finite $n$ (the entropy is at least $-\log P_{\mathrm{MAP}}$ pointwise, with equality only at the extremes), but both have an asymptotically non-vanishing per-observation penalty and give the same qualitative scaling for the threshold below.

BIC evaluates detectability through the observed-data likelihood gain.
At the per-observation scale, its threshold is of order $\log n/n$, and therefore vanishes as $n \to \infty$.
Thus, BIC selects $K=2$ whenever the detectability $D_m^*$ is bounded away from zero, regardless of whether the corresponding labels are recoverable.
RA-BIC modifies this behavior by adding a posterior-entropy penalty.
Using the identity $R_m^*(\theta) = h(\pi) - E_m(\theta)$ (Proposition~\ref{prop:entropy-representation}), this penalty introduces an adaptive threshold of order $\sqrt{\log n/n},(h(\pi) - R_m^*)$, which also vanishes but is larger when the posterior labels are more uncertain.
In contrast, ICL uses the fixed entropy coefficient $2$, so its threshold is of order $h(\pi) - R_m^*(\theta_0) + o(1)$ and does not vanish with $n$.
Consequently, ICL can remain conservative even in recoverable settings, whereas RA-BIC lies between BIC and ICL in terms of its asymptotic threshold.
Its vanishing entropy-adjusted threshold suppresses the detectable-but-unrecoverable selections accepted by BIC, while still selecting $K=2$ in recoverable settings where ICL remains conservative.

Concretely, for the one-vs-two-component comparison at a fixed interior $\theta_0$ satisfying (A4)--(A6), using the leading-order approximation $\widehat E_{m,2}(\widehat\theta_{n,2}^{\MLE})/n \to E_m(\theta_0) = h(\pi) - R_m^*(\theta_0)$ that follows from the entropy LLN (A5), the MLE consistency $\widehat\theta_{n,2}^{\MLE} \to \theta_0$ implied by (A4), and the continuity of the population posterior entropy in $\theta$, the leading-order asymptotic threshold on the detectability $D_m^*$ above which each criterion selects $K=2$ scales as
\begin{align*}
&\text{BIC: } D_m^* \gtrsim \frac{\log n}{2n}, \qquad
\text{RA-BIC: } D_m^* \gtrsim \frac{1}{2}\sqrt{\frac{\log n}{n}}(h(\pi) - R_m^*(\theta_0)),\\
&\text{ICL: } D_m^* \gtrsim h(\pi) - R_m^*(\theta_0).
\end{align*}
These expressions are leading-order approximations: constants absorb the $(\nu_2 - \nu_1)$ factor from the BIC complexity penalty, and the $o(1)$ remainder from the entropy LLN is omitted.
The BIC and RA-BIC thresholds both vanish as $n \to \infty$, but RA-BIC adjusts its threshold according to the per-observation recoverability $R_m^*(\theta_0)$.
The ICL threshold, by contrast, is asymptotically non-vanishing in $n$ and depends on the absolute posterior-entropy level $h(\pi) - R_m^*(\theta_0) = E_m(\theta_0)$.
The empirical comparison in Section~\ref{sec:experiments} illustrates this difference in asymptotic scale, rather than a simple stronger-or-weaker ordering of the three penalties.

We next establish the consistency of RA-BIC for the comparison between one and two components, $K \in {1, 2}$, which is the main setting considered in this paper.
This restriction avoids a complication that can arise when $K_{\max} \geq 3$.
In an overfitted model, the fitted posterior can be more concentrated than that of the true model, so $\widehat E_{m,K} - \widehat E_{m,K_0}$ may be negative.
In that case, the entropy penalty $\lambda_n \widehat E_{m,K}$ may not dominate the usual BIC complexity penalty in the overfitting direction.
For $K \in {1, 2}$, this issue is avoided because $\widehat E_{m,1} = 0$ by construction, and hence the entropy penalty acts only on the two-component model.
For $K \in {1, 2}$, let $D_{m,K}^*$, $R_{m,K}^*$ and $E_{m,K}$ denote the natural extensions of $D_m^*$, $R_m^*$ and $E_m$ to the $K$-component model, with the convention $E_{m,1} = 0$.
Let $K_0$ and $\theta_0$ be the true number of components and the true parameter, respectively.
We assume the following conditions:
\begin{itemize}
\item[(A4)]
$D_{m,K_0}^*(\theta_0) \ge \underline D > 0$ and $R_{m,K_0}^*(\theta_0) \ge \underline R > 0$ for fixed positive constants $\underline D, \underline R$; that is, $\theta_0$ is in the interior regime, away from the local boundary where $D_m^*$ or $R_m^*$ vanishes.
\item[(A5)]
$n^{-1}\widehat{E}_{m,K_0}(\theta_0) \to E_{m,K_0}(\theta_0)$ as $n\to\infty$ under the true model.
\item[(A6)]
$\ell_n(\hat\theta_{n,K}^{\MLE}) - \ell_n(\hat\theta_{n,K_0}^{\MLE}) = O_p(1)$ for every $K > K_0$.
\end{itemize}

Condition~(A4) requires the true component structure to be both detectable and recoverable.
Condition~(A5) is a law of large numbers for the empirical posterior entropy.
Condition~(A6) is the standard overfitting control used in BIC-type consistency arguments for finite mixture models; see, for example, \cite{keribin2000consistent}.

\begin{theorem}[Consistency in the interior recoverable regime]\label{thm:order-selection}
Suppose the candidate set is $K \in {1, 2}$. Let $\widehat K_{\mathrm{RA}} = \argmin_{K \in {1,2}} \RABIC(K)$.
\begin{enumerate}
\item If $K_0 = 2$ and Assumptions~(A4)--(A5) hold, then $\Pr(\widehat K_{\mathrm{RA}} = 2) \to 1$.
\item If $K_0 = 1$ and Assumption~(A6) holds, then $\Pr(\widehat K_{\mathrm{RA}} = 1) \to 1$.
\end{enumerate}
\end{theorem}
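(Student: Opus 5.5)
The plan is to compare the two criteria directly through the difference
\[
\RABIC(1)-\RABIC(2)=2\{\ell_n(\hat\theta_{n,2}^{\MLE})-\ell_n(\hat\theta_{n,1}^{\MLE})\}-(\nu_2-\nu_1)\log n-\lambda_n\widehat E_{m,2}(\hat\theta_{n,2}^{\MLE}),
\]
using $\widehat E_{m,1}=0$ and $\lambda_n=\sqrt{\log n/n}$. We then show that this difference tends to $+\infty$ in probability when $K_0=2$, and to $-\infty$ in probability when $K_0=1$.

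\textbf{Part 1 ($K_0=2$).} The likelihood gain is bounded from below as follows. Since $\hat\theta_{n,2}^{\MLE}$ maximizes the two-component likelihood, $\ell_n(\hat\theta_{n,2}^{\MLE})\ge \ell_n(\theta_0)$. Hence
\[
\ell_n(\hat\theta_{n,2}^{\MLE})-\ell_n(\hat\theta_{n,1}^{\MLE})\ge \inf_{\gamma}\sum_{i=1}^n\log\frac{P_{\theta_0}(Y_i| X_i)}{f_B(Y_i;m,q_\gamma(X_i))}.
\]
By a uniform law of large numbers over $\gamma$, this is $nD_m^*(\theta_0)+o_p(n)$. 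The uniform LLN uses the moment and integrability conditions in (A3). The infimum may escape to infinity, but there the KL only grows, so a compactification argument handles this case. Under (A4) the gain is therefore at least $n\underline D/2$ with probability tending to one.

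The entropy term is handled next. Each summand of $\widehat E_{m,2}$ lies in $[0,\log 2]$, so $\widehat E_{m,2}(\hat\theta)\le n\log 2$ deterministically. This means (A5) is not even needed beyond boundedness. Consequently $\lambda_n\widehat E_{m,2}\le \log 2\sqrt{n\log n}=o(n)$. The BIC penalty is $O(\log n)=o(n)$. The difference is therefore at least $n\underline D-o(n)\to\infty$, which gives $\Pr(\widehat K_{\mathrm{RA}}=2)\to1$. One could alternatively invoke (A5) together with consistency to identify the entropy term as $\sqrt{n\log n}\,E_m(\theta_0)(1+o_p(1))$, but the crude bound suffices.

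\textbf{Part 2 ($K_0=1$).} By (A6), $\ell_n(\hat\theta_{n,2}^{\MLE})-\ell_n(\hat\theta_{n,1}^{\MLE})=O_p(1)$. Since $\nu_2-\nu_1=p+1\ge 1$, and using $\widehat E_{m,2}\ge0$ together with $\lambda_n\ge0$,
\[
\RABIC(1)-\RABIC(2)\le O_p(1)-(p+1)\log n\to-\infty.
\]
Thus $\Pr(\widehat K_{\mathrm{RA}}=1)\to1$. The entropy penalty only helps here, because it is nonnegative and is added to the $K=2$ criterion alone.

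\textbf{Main obstacle.} The delicate step is the lower bound on the likelihood gain in Part 1, specifically showing $\sup_\gamma \ell_n^{(1)}(\gamma)=n\,\E[\log f_B(Y;m,q_{\gamma^*}(X))]+o_p(n)$. This is the uniform convergence of the one-component log-likelihood, which may be non-compact in $\gamma$. I would first try to exploit concavity of the logistic-binomial log-likelihood in $\gamma$. Pointwise LLN plus concavity yields uniform convergence on compacts, and concavity also forces the maximizer into a compact neighbourhood of the population minimizer $\gamma^*$, whose uniqueness follows from positive definiteness in (A2). This avoids needing consistency of the two-component MLE altogether.
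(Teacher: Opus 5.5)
Your proof is correct and follows the paper's argument. For $K_0=2$, an $\Omega_p(n)$ likelihood gain beats the $O(\sqrt{n\log n})$ entropy penalty (the paper bounds $\widehat E_{m,2}\le n h(\hat\pi)$ where you use $n\log 2$, and neither proof actually uses (A5)), and for $K_0=1$ the result follows from (A6) together with $\widehat E_{m,2}\ge 0$; your one-sided lower bound on the likelihood gain, via $\ell_n(\hat\theta_{n,2}^{\MLE})\ge\ell_n(\theta_0)$ and concavity of the one-component logistic log-likelihood, makes explicit the step the paper only attributes to a ``standard mixture-model identification argument.''
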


The theorem applies to the interior-signal regime, where both detectability and recoverability are bounded away from zero.
In this regime, standard BIC also satisfies the same consistency property.
When $K_0 = 2$, Assumption (A4) gives $D_{m,2}^*(\theta_0) \geq \underline{D} > 0$.
Since the threshold in Definition~\ref{def:feasible-K} vanishes as $n$ increases, this implies that $K_n^{\mathrm{feas}}(\theta_0) = 2$ for all sufficiently large $n$.
Together with (A5) and continuity, it follows that $K_n^{\mathrm{feas}}(\theta_0) = 2 = K_0$, and hence $\widehat K_{\mathrm{RA}} = K_n^{\mathrm{feas}}(\theta_0) = K_0$ with probability tending to one.
When $K_0 = 1$, Definition~\ref{def:feasible-K} does not apply, because the definition is based on a two-component contrast through $D_m^*$.
In this case, consistency follows from the standard BIC overfitting control in (A6).

The difference between RA-BIC and BIC appears in the regime where the model is detectable but the labels are not recoverable.
The next theorem gives a concrete example of the detectable-but-unrecoverable sequences whose existence is guaranteed by Corollary~\ref{cor:phase-separation}.
It imposes an explicit rate condition on $\rho_n$ under which BIC and RA-BIC select different models: BIC selects the generative order $K_0 = 2$, whereas RA-BIC selects $K = 1$.
This does not contradict Theorem~\ref{thm:order-selection}.
Under the rate condition below, $K_n^{\mathrm{feas}}(\theta_n) = 1$, so RA-BIC continues to track the recoverability-feasible order, while BIC tracks the generative order $K_0$.
Because $K_n^{\mathrm{feas}}$ coincides with $K_0$ at every fixed detectable parameter and differs from it only along drifting unrecoverable sequences, the discrepancy between the two selectors occurs precisely in the regime that RA-BIC is designed to suppress.

\begin{theorem}[BIC-detectable sequences suppressed by RA-BIC]
\label{thm:gap-rejection}
Suppose $K_0 = 2$ and let $\rho_n = a_n (\log n / n)^{1/4}$ with
\[
a_n \to \infty, \qquad a_n = o\left(\left(\frac{n}{\log n}\right)^{1/8}\right).
\]
Then $n D_m^*(\theta_n) / \log n \to \infty$ and $R_m^*(\theta_n) \to 0$, and with the canonical choice $\lambda_n = \sqrt{\log n / n}$,
\[
\Pr({\rm BIC}(1) > {\rm BIC}(2)) \to 1, \qquad \Pr(\RABIC(1) < \RABIC(2)) \to 1.
\]
\end{theorem}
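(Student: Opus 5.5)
The plan is to derive the two population statements directly from the local laws, and then to turn them into the two probability statements using the BIC decomposition $\mathrm{BIC}(1)-\mathrm{BIC}(2)=2nD_m^*-(\nu_2-\nu_1)\log n+2r_n$ together with an entropy LLN along the drifting sequence.

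\emph{Step 1: population rates.} Because $\rho_n\to 0$, Theorem~\ref{thm:true-quartic} gives
\[
D_m^*(\theta_n)=J_m\rho_n^4(1+o(1))=J_m a_n^4\,\frac{\log n}{n}\,(1+o(1)).
\]
Hence $nD_m^*/\log n\asymp J_m a_n^4\to\infty$, since $J_m\ge \bar\lambda m(m-1)/4+m\kappa>0$. (For $m=1$ I need $\kappa>0$, which holds because $\psi''(X^\top\beta_0)(\Delta^\top X)^2$ is not in the span of $\psi'X$.) Theorem~\ref{thm:quadratic-recoverability} gives $R_m^*(\theta_n)=m\xi\rho_n^2(1+o(1))=m\xi a_n^2\sqrt{\log n/n}\to0$. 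The upper rate on $a_n$ then yields $D_m^*(\theta_n)=o(\sqrt{\log n/n})$, because
\[
\frac{D_m^*}{\sqrt{\log n/n}}\asymp a_n^4\sqrt{\frac{\log n}{n}}=\Bigl(a_n(\log n/n)^{1/8}\Bigr)^4\to0 .
\]
This is exactly the condition $K_n^{\rm feas}=1$.

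\emph{Step 2: BIC.} I will invoke the likelihood-gain expansion $\ell_n(\hat\theta_{n,2})-\ell_n(\hat\theta_{n,1})=nD_m^*(\theta_n)+r_n$ with $r_n=o_p(nD_m^*)$. Then $\mathrm{BIC}(1)-\mathrm{BIC}(2)=2nD_m^*(1+o_p(1))-(p+1)\log n$, which is positive with probability tending to one by Step 1.

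\emph{Step 3: RA-BIC.} Using $\widehat E_{m,1}=0$,
\[
\RABIC(2)-\RABIC(1)=\lambda_n\widehat E_{m,2}(\hat\theta_{n,2})-2nD_m^*(1+o_p(1))+(p+1)\log n .
\]
It suffices to show $\widehat E_{m,2}(\hat\theta_{n,2})/n\ge c>0$ with probability tending to one. If so, the first term is at least $c\sqrt{n\log n}$. This dominates $nD_m^*=o(\sqrt{n\log n})$ by Step 1, and it dominates $\log n$.

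The expected target is $\widehat E_{m,2}/n\to h(\pi)>0$, since $E_m(\theta_n)=h(\pi)-R_m^*(\theta_n)\to h(\pi)$ by Proposition~\ref{prop:entropy-representation}. I would first prove an LLN at the true $\theta_n$: the entropies are bounded by $\log 2$, so Chebyshev applies uniformly over the triangular array. I would then transfer it to the fitted $\hat\theta_{n,2}$.

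\emph{Main obstacle.} The transfer to $\hat\theta_{n,2}$ is the main obstacle. Near the boundary the MLE need not be consistent for $\Delta_n$ at a useful rate, and the paper warns that it can over-separate and over-concentrate responsibilities. An entropy lower bound at a strictly positive constant, rather than the sharp limit $h(\pi)$, is enough.

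I would try to obtain it from the likelihood itself. The gain $\ell_n(\hat\theta_{n,2})-\ell_n(\hat\theta_{n,1})$ is $O_p(nD_m^*)=o_p(\sqrt{n\log n})$. A fitted two-component model with average posterior entropy bounded away from $h(\hat\pi)$ has fitted recoverability bounded away from zero. By Theorem~\ref{thm:quadratic-recoverability} and Proposition~\ref{prop:rate}, such a fit would require a non-vanishing fitted separation. In the population this forces a $Y$-distribution KL from $P_{\theta_n}$ that is bounded below, which contradicts near-maximal likelihood through a uniform LLN over the compact parameter set guaranteed by (A1)--(A3). The fitted $\hat\pi$ could also degenerate to $0$ or $1$, which would make $h(\hat\pi)$ small. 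This must be excluded too, either by the compactness in (A1) or by restricting to $\pi\in[\varepsilon,1-\varepsilon]$ in the fit.

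If this contradiction argument proves too delicate, the fallback is to state the entropy convergence along $\theta_n$ as a high-level condition, namely the drifting analogue of (A5) plus consistency of the fitted entropy. Steps 1--3 then finish the proof.
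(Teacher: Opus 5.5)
Your proposal follows the paper's proof step for step: the quartic and quadratic laws give $nD_m^*/\log n\asymp J_m a_n^4\to\infty$ and $R_m^*(\theta_n)\to 0$; twice the likelihood gain, $2nD_m^*(1+o_p(1))$, beats the $(\nu_2-\nu_1)\log n$ penalty; and $\lambda_n\widehat E_{m,2}\asymp\sqrt{n\log n}$ dominates $2nD_m^*=o(\sqrt{n\log n})$. The paper simply asserts $\widehat E_{m,2}(\hat\theta_{n,2})/n\to h(\pi)$ at the fitted parameter, so your points that only a positive lower bound is needed, and that the substantive step is transferring the entropy law of large numbers to $\hat\theta_{n,2}$ with $\hat\pi$ kept in $[\varepsilon,1-\varepsilon]$, make your version more careful than the original, with one correction: $J_1=\kappa>0$ is not automatic (at $\beta_0=0$ one has $\psi''(X^\top\beta_0)\equiv 0$, hence $g_u\equiv 0$ and $\kappa=0$), so you should invoke the quartic nondegeneracy condition $J_m\ge\underline J>0$ in Assumption~\ref{ass:SM-technical} rather than your span argument.
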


In this regime the signal is BIC-detectable but recoverability vanishes. The entropy penalty $\lambda_n \widehat E_{m,2} \asymp h(\pi) \sqrt{n \log n}$ dominates, so RA-BIC selects $K = 1$. 
Thus RA-BIC behaves like BIC when $R_m^*$ is bounded away from zero, and suppresses $K = 2$ in the gap regime.

The gap regime reflects a structural asymmetry between detection and recovery. The detectability coefficient $J_m = m\kappa + m(m-1)\bar\lambda/4$ is super-linear in $m$ because of the over-dispersion term, whereas the recoverability coefficient $A_m = m\xi$ is exactly linear, corresponding to one independent unit of binomial Fisher information per trial. Thus, over-dispersion changes the marginal distribution of $Y$ without increasing the per-observation information about the latent label. This explains why the BIC-detectable but entropy-suppressed window in Theorem~\ref{thm:gap-rejection} can occur.

This over-dispersion cannot be used as a shortcut to label recovery. The marginal over-dispersion is a $\sqrt n$-estimable functional, but it is an even functional of the separation: it depends on $\Delta$ only through the squared contrast and is invariant to the sign of the label. Hence it is blind to the label and its sign, and carries no recovery information \citep{blischke1962}. One might try to invert this functional to estimate the separation magnitude, since the over-dispersion $\psi$ scales as a constant times $\rho^2$, giving $\widehat\rho = \sqrt{\widehat\psi/c}$. However, the square-root transformation restores the $n^{-1/4}$ boundary rate. Therefore, this route cannot improve on the maximum likelihood estimator for estimating $\rho$, nor can it improve label recovery. This is the estimation-side counterpart of the separation between detectability and recoverability: detectability is governed by the $\rho^4$ term, recoverability is governed by the $\rho^2$ term, and over-dispersion remains a detectability-related quantity regardless of how it is processed \citep{wu2020optimal,heinrich2018strong}.

The same asymmetry also clarifies the relation between our setting and the detection-versus-recovery thresholds studied in community detection and Gaussian mixtures. In those settings, exact recovery is achieved when the per-observation signal increases with the sample size: the degree in stochastic block models scales as $\log n$, and the separation required for exact recovery in Gaussian mixtures scales as $\sqrt{\log n}$ \citep{abbe2018community,ndaoud2022sharp}. Thus recovery has a sharp threshold indexed by $n$. In the fixed-trial binomial model, by contrast, the per-observation recoverability $R_m^* = I(Z;Y| X)$ is bounded above by $h(\pi)\le\log 2$ and does not increase with $n$ (Proposition~\ref{prop:rate}). It changes only through the trial count $m$ and the separation. Hence there is no $n$-indexed recovery threshold to cross. The sample size lowers the threshold for detection, but recovery is governed by $m$ and the separation rather than by $n$. This is the precise sense in which the detection--recovery distinction has a different form in fixed-trial binomial mixtures.

Under the upper rate condition $a_n = o((n/\log n)^{1/8})$ of Theorem~\ref{thm:gap-rejection}, the detectability satisfies $D_m^*(\theta_n) = J_m \rho_n^4 + o(\rho_n^4) = J_m a_n^4 (\log n / n)$. Dividing by $\sqrt{\log n/n}$ gives $D_m^*(\theta_n) / \sqrt{\log n/n} = J_m a_n^4 \sqrt{\log n / n} \to 0$, because $a_n^4 = o(\sqrt{n / \log n})$. Therefore $D_m^*(\theta_n) = o(\sqrt{\log n/n})$, and Definition~\ref{def:feasible-K} gives $K_n^{\mathrm{feas}}(\theta_n) = 1$ in this regime, even though the generative order is $K_0 = 2$. Theorems~\ref{thm:order-selection}--\ref{thm:gap-rejection} together show that $\widehat K_{\mathrm{RA}} = K_n^{\mathrm{feas}}$ with probability tending to one in the two regimes covered by the theory: the interior recoverable regime, where $K_n^{\mathrm{feas}} = K_0$ and RA-BIC agrees with BIC, and the deep-gap regime, where $K_n^{\mathrm{feas}} = 1 \neq K_0$ and RA-BIC suppresses the gap selection. We make no claim for the intermediate boundary band, where $D_m^*$ is of the exact penalty order and the selection is asymptotically undetermined. Thus, rather than treating RA-BIC as estimating a separate population order, we interpret $K_n^{\mathrm{feas}}$ as a decision target that agrees with the generative order in recoverable regimes and differs from it only along drifting sequences where the second component is detectable but its labels are not recoverable.

\subsection{Entropy-regularized estimation}
\label{sec:estimation}

Throughout this section we work at the selected two-component model and abbreviate $\widehat E_m(\theta) := \widehat E_{m,2}(\theta)$ for the empirical posterior entropy of \eqref{eq:entropy} evaluated at $K=2$.

Section~\ref{sec:inference} addresses whether the overall level of recoverability is large enough to justify selecting $K=2$, but this does not by itself resolve the finite-sample bias that appears once estimation is carried out under the two-component model.
Empirically, the MLE exhibits a systematic preference for over-confident posterior responsibilities: in the gap regime where $R_m^*(\theta_0)$ is small, the fitted posterior responsibilities concentrate near $\{0,1\}$ more often than the oracle posterior does, with the over-concentration directly measurable as an inflated \emph{posterior collapse rate} (quantified empirically in Section~\ref{sec:experiments}, where the MLE collapse rate exceeds the oracle by one to three orders of magnitude in the deep gap regime). From Proposition~\ref{prop:entropy-representation}, this over-confidence amounts to the MLE implicitly overstating the amount of recoverability available for label recovery: a more concentrated posterior drives $\widehat{E}_m(\hat\theta^{\MLE})/n$ below the oracle benchmark $E_m(\theta_0)$, equivalently the fitted $\|\widehat\Delta\|$ is biased upward relative to the truth.
The role of entropy regularization is to counter this over-confidence, not by changing the model, but by preventing the fitted posterior from exaggerating the recoverability of the latent labels.

This places entropy-regularized estimation within the tradition of penalized-likelihood corrections to logistic regression, the canonical example being Firth's modification \citep{firth1993bias}, which augments the score with a curvature-based term to remove the $O(1/n)$ first-order bias of the logistic MLE. 
The spirit is similar in that both methods modify the likelihood score to temper an overconfident MLE, but the target of the correction is different.
Here it is the over-concentration of the posterior responsibilities induced by the detectability--recoverability gap, not first-order bias, and the deviation $\widehat\theta_{\mathrm{ER}} - \widehat\theta_{\rm MLE} = O_p(\sqrt{\log n/n})$ is asymptotically larger than the $O(1/n)$ bias Firth removes. The construction is thus a calibration device, not a bias correction in the strict sense.

The penalty weight is fixed at $\alpha_n = \sqrt{\log n / n}$, the same rate as the RA-BIC penalty.
This choice links the estimator to the population recoverability identity $R_m^*(\theta) = h(\pi) - E_m(\theta)$ and makes entropy-regularized estimation the estimation-stage counterpart of the selection-stage RA-BIC rule.
The practical effect of the regularization is evaluated empirically in Section~\ref{sec:experiments}.
We recommend using the estimator when the number of trials satisfies $m \geq 2$.
When $m = 1$, there is no binomial over-dispersion, and the interior-signal condition underlying the calibration can fail; in this case, the penalty tends to over-regularize, as shown in the supplementary $m$-sweep of Section~\ref{sec:experiments}.
For $m \geq 2$, the effect of the penalty is most visible in the gap regime, especially for moderate $m$ and small-to-moderate $n$, where the MLE's over-concentration is most pronounced.
As $\alpha_n \to 0$, the effect vanishes when the per-observation information is sufficiently large, either because $m$ is large or because the labels are genuinely recoverable.

To achieve this, we propose the following entropy-regularized estimator of $\theta$, defined as
\begin{equation}\label{eq:regularized-estimator}
\widehat{\theta}_{\rm ER}= {\rm argmax}_\theta Q_\alpha(\theta), \qquad  Q_\alpha(\theta)=\ell_n(\theta)+\alpha_n \widehat{E}_m(\theta),
\end{equation}
where $\widehat{E}_m(\theta)$ is the posterior entropy given in (\ref{eq:entropy}) and $\alpha_n>0$ controls the strength of regularization.

Entropy terms of this form are common in the deterministic-annealing and entropy-penalized-EM literature \citep{ueda1998deterministic,bouchard2006selection,baudry2015estimation}.
In that literature, the entropy term is typically used to smooth the likelihood surface during optimization and is then annealed to zero, so that the limiting fit is the ordinary MLE, while our use is different.
We keep a positive entropy weight, chosen at the RA-BIC rate $\alpha_n$, because the penalty is intended to calibrate the fitted posterior responsibilities rather than to serve only as an optimization device.
In particular, the penalty is tied to the recoverability representation $R_m^*(\theta)=h(\pi)-E_m(\theta)$ and is designed to prevent the fitted responsibilities from becoming overly concentrated.
To our knowledge, this calibration role of entropy regularization has not been used previously for logistic mixtures.
The objective \eqref{eq:regularized-estimator} also coincides to first order in $\alpha$ with tempered, or deterministic-annealing, EM at inverse temperature $\omega_n = 1/(1+\alpha_n)$ \citep{ueda1998deterministic}.
This equivalence provides a stable alternative optimizer that produces softer E-step responsibilities.

We optimize the penalized objective (\ref{eq:regularized-estimator}) using a generalized EM algorithm with a direct penalized M-step.
In the E-step, we compute the usual responsibilities $\widehat r_{ik}^{(t)} = P_{\theta^{(t)}}(Z_i = k | X_i, Y_i)$ at the current parameter value.
In the M-step, we update $\theta$ by maximizing the expected complete-data log-likelihood, with these responsibilities fixed, plus the entropy penalty evaluated at the candidate parameter:
$$
\theta^{(t+1)}=\argmax_\theta \left\{
\sum_{i=1}^n\sum_{k=1}^K \widehat r_{ik}^{(t)} \log\bigl(\pi_k f(y_i | x_i, \beta_k)\bigr) + \alpha_n \widehat E_m(\theta)\right\}.
$$
Here $\widehat E_m(\theta)$ is not evaluated using the fixed responsibilities from the E-step.
Instead, it depends on $\theta$ through the posterior responsibilities induced by the candidate parameter.
Thus, the entropy gradient is computed by direct differentiation with respect to $\theta$, and the M-step is not the maximization of a standard EM lower bound.
When $\alpha = 0$, the entropy term disappears and the algorithm reduces to the standard EM algorithm.

We fix $\alpha_n = \lambda_n = \sqrt{\log n / n}$ as a design rate rather than tuning it, matching the canonical RA-BIC rate of Section~\ref{sec:RA-BIC}. This choice links the entropy-regularized estimator to the same entropy representation $R_m^*(\theta) = h(\pi) - E_m(\theta)$ that motivates RA-BIC. The total penalty $n\alpha_n \asymp \sqrt{n\log n}$ is negligible on the per-observation likelihood scale, while it remains asymptotically larger than the $O(\log n)$ BIC complexity term. Thus the penalty is large enough to reduce over-concentration in the gap regime, but still negligible relative to the leading likelihood. A formal consistency-and-rate statement, together with the first-order equivalence to tempered EM, is recorded in the Supplementary Material (Section~\ref{sec:SM-ER-calibration}).

Under an interior signal, the entropy term contributes only a vanishing perturbation to the score on the original parameter scale. 
Hence $\widehat\theta_{\rm ER}$ remains consistent and satisfies $\widehat\theta_{\rm ER} - \widehat\theta_{\rm MLE} = O_p(\sqrt{\log n / n})$ (Section~\ref{sec:SM-ER-calibration}). 
This difference is larger than the standard parametric $n^{-1/2}$ scale, since $\sqrt n (\widehat\theta_{\rm ER} - \widehat\theta_{\rm MLE}) = O_p(\sqrt{\log n})$ does not vanish. Therefore, $\widehat\theta_{\rm ER}$ is not first-order asymptotically equivalent to $\widehat\theta_{\rm MLE}$ in the usual sense. Nevertheless, the difference remains small enough to preserve consistency, while the penalty is still large enough to affect estimation in the gap regime. The goal is therefore not to obtain a sharper point estimate of $\theta$, since the perturbation vanishes as $\alpha_n \to 0$, but to improve the calibration of the soft posterior. Experiment~3 confirms this effect through proper scoring rules, while the hard-label adjusted Rand index remains essentially unchanged.


\section{Numerical Experiments}
\label{sec:experiments}

Experiment~1 directly examines the information geometry by verifying the quartic detectability law, the quadratic recoverability law, and the $n$-versus-$m$ asymmetry.
Experiment~2 studies model selection by comparing BIC, RA-BIC, and ICL.
Experiment~3 evaluates the posterior calibration of the entropy-regularized estimator (ER).
Experiments~2 and~3 correspond to the selection-side and estimation-side implications of the entropy identity $R_m^*(\theta) = h(\pi) - E_m(\theta)$.

All experiments are based on the two-component binomial logistic mixture in \eqref{eq:mix}.
The non-intercept covariates $\tilde X_i = (X_{i1}, X_{i2})^\top$ are generated from a centered two-dimensional Gaussian distribution with unit marginal variances and correlation $0.6$, and we set $X_i = (1, \tilde X_i^\top)^\top$.
We fix $\pi = 0.5$ and $\beta_0 = (0.5, 0.3, -0.3)^\top$.
Component separation is introduced along the first slope direction $u = (0,1,0)^\top$ by setting $\Delta = \rho,\Sigma^{-1/2} u$, where $\Sigma = \E[X_i X_i^\top]$.
Under the weighted-center parametrization, this gives $\beta_1 = \beta_0 + (1-\pi)\Delta$ and $\beta_2 = \beta_0 - \pi\Delta$, so that $\rho$ represents the standardized separation.
For Experiment~3, the entropy coefficient is fixed at the canonical design rate $\alpha_n = \sqrt{\log n / n}$ from Section~\ref{sec:estimation}, which is the same rate as the RA-BIC penalty $\lambda_n$ in Section~\ref{sec:RA-BIC}.
No additional tuning multiplier is used.
The selection rules in Experiment~2 also use the tuning-free choice $\lambda_n = \sqrt{\log n / n}$ with unit multiplier.
The Monte-Carlo experiments use $120$--$200$ replications, with standard errors reported.
Experiments~2 and~3 are conducted at two sample sizes.

\subsection{Experiment 1: information geometry and the scaling laws}
\label{sec:exp1}

This experiment verifies the information-theoretic predictions of Sections~\ref{sec:gaussian}--\ref{sec:local}.
It uses only the oracle quantities $D_m^*$ and $R_m^*$, and involves no parameter estimation.

We compute $D_m^*$ and $R_m^*$ by Monte-Carlo integration ($N_{\mathrm{MC}} = 200{,}000$) over $\rho \in {0.02, 0.03, 0.05, 0.07, 0.10}$ and $m \in {1, 2, 5, 10}$.
Table~\ref{tab:scaling} reports the log--log slopes in $\rho$, together with the normalized coefficients $J_m = D_m^*/\rho^4$ and $A_m = R_m^*/\rho^2$ at $\rho = 0.02$.
The slopes are close to $4$ for $D_m^*$ and close to $2$ for $R_m^*$, confirming the quartic law $D_m^* \sim J_m \rho^4$ (Theorem~\ref{thm:true-quartic}) and the quadratic law $R_m^* \sim A_m \rho^2$ (Theorem~\ref{thm:quadratic-recoverability}).
Thus the local quartic--quadratic order gap is present, but, as Section~\ref{sec:gaussian} emphasizes, this order gap is also shared by Gaussian mixtures.
The binomial-specific difficulty lies not in the orders themselves, but in how the coefficients depend on the number of trials.
The ratio $A_m/m \approx 0.0284$ is nearly constant across $m$, confirming the linear recoverability coefficient $A_m = m A_1$.
This reflects the fact that each trial contributes one independent unit of binomial Fisher information about the label.
By contrast, $J_m$ grows roughly quadratically in $m$.
An unconstrained least-squares fit to the four points gives $\kappa \approx 2.8 \times 10^{-4}$ and $\bar\lambda \approx 9.5 \times 10^{-3}$, consistent with the leading term $J_m = m\kappa + m(m-1)\bar\lambda/4$ in Theorem~\ref{thm:true-quartic}.
The over-dispersion term $m(m-1)\bar\lambda/4$ dominates for $m \geq 5$, explaining the rapid growth of $J_m$.
Thus detectability grows super-linearly in $m$, whereas recoverability grows linearly in $m$; this difference is the coefficient-level manifestation of the fixed-trial binomial constraint.

\begin{table}[htb!]
\centering
\caption{Log--log slopes of $D_m^*$ and $R_m^*$ versus $\rho$, and the normalized coefficients $J_m = D_m^*/\rho^4$ and $A_m = R_m^*/\rho^2$ at $\rho = 0.02$ ($N_{\mathrm{MC}} = 200{,}000$). The slopes confirm the quartic detectability and quadratic recoverability laws. The ratio $A_m/m$ is constant, confirming the exactly-linear $A_m = m A_1$, whereas $J_m$ grows roughly quadratically, consistent with the over-dispersion term $J_m = m\kappa + m(m-1)\bar\lambda/4$ of Theorem~\ref{thm:true-quartic}.}
\label{tab:scaling}

\medskip
\begin{tabular}{cccccccc}
\hline
$m$ & $D_m^*$ slope & $R_m^*$ slope & $J_m$ & $J_m/m$ & $A_m/m$ \\
\hline
1  & 4.00 & 2.00 & $2.68 \times 10^{-4}$ & $2.68 \times 10^{-4}$ & 0.0284 \\
2  & 4.00 & 2.00 & $5.30 \times 10^{-3}$ & $2.65 \times 10^{-3}$ & 0.0284 \\
5  & 3.99 & 2.00 & $4.90 \times 10^{-2}$ & $9.80 \times 10^{-3}$ & 0.0284 \\
10 & 3.99 & 2.00 & $2.17 \times 10^{-1}$ & $2.17 \times 10^{-2}$ & 0.0284 \\
\hline
\end{tabular}
\end{table}

The over-dispersion of the mixture binomial is the quantity that inflates $J_m$, and it is $\sqrt n$-estimable from the marginal distribution of $Y$ alone. 
Thus, one might hope to invert it to recover the component separation, and hence the labels. 
However, this route does not solve the recovery problem. 
As argued in Section~\ref{sec:RA-BIC}, the over-dispersion is an even functional of the separation: it contains information about the magnitude of the contrast, but not about the label itself. 
Moreover, the natural inversion reinstates the $n^{-1/4}$ boundary rate.

We next separate the roles of the sample size $n$ and the trial count $m$. 
Table~\ref{tab:n-vs-m} reports a $2 \times 2$ design at $\rho = 0.35$. 
For fixed $m$, oracle accuracy and $R_m^*$ are essentially unchanged as $n$ rises from $300$ to $5000$; at $m=1$, the oracle accuracy stays at $53.2\%$. 
This confirms that recoverability is a per-observation quantity and cannot be improved by increasing $n$ alone. 
Increasing $m$ from $1$ to $50$ raises oracle accuracy to about $67\%$ at this fixed separation, because $m$ enters the per-observation recoverability. 
Detection behaves oppositely. 
At $m=50$, raising $n$ from $300$ to $5000$ lifts the BIC detection rate from $76\%$ to $100\%$, because the likelihood evidence accumulates as $nD_m^*$. 
More observations therefore improve detection but not recovery, and this asymmetry between $n$ and $m$ is central to the difficulty.

\begin{table}[htb!]
\centering
\caption{$2 \times 2$ asymmetry experiment ($\rho = 0.35$). Oracle accuracy and $R_m^*$ depend on $m$ but not on $n$, while the BIC detection rate depends on both, since the likelihood evidence for a second component accumulates as $nD_m^*$. Here $m = 50$ is chosen larger than the $m \leq 10$ grid of the other experiments to make the asymmetric effect of $m$ on oracle accuracy maximally visible.}
\label{tab:n-vs-m}

\smallskip
\begin{tabular}{lcccccc}
\hline
 &  &  & Oracle  & BIC  &  & \\
Regime & $n$ & $m$ & Accuracy & $\mathbb{P}(\hat{K}=2)$ & $D_m^*$ & $R_m^*$ \\
\hline
small $n$, small $m$ &  300 &  1 & 53.2\% & 0.00 & $3.96 \times 10^{-6}$ & $3.46 \times 10^{-3}$ \\
large $n$, small $m$ & 5000 &  1 & 53.2\% & 0.00 & $3.96 \times 10^{-6}$ & $3.46 \times 10^{-3}$ \\
small $n$, large $m$ &  300 & 50 & 67.0\% & 0.76 & $5.00 \times 10^{-2}$ & $1.23 \times 10^{-1}$ \\
large $n$, large $m$ & 5000 & 50 & 67.1\% & 1.00 & $5.00 \times 10^{-2}$ & $1.23 \times 10^{-1}$ \\
\hline
\end{tabular}
\end{table}

\subsection{Experiment 2: Tuning-free RA-BIC between BIC and ICL}
\label{sec:exp2}

We examine whether the tuning-free RA-BIC of Section~\ref{sec:RA-BIC} behaves as predicted by the theory: it should avoid selecting $K=2$ in the detectable-but-unrecoverable region, while still selecting $K=2$ in recoverable settings where ICL remains conservative.
We sweep $\rho \in {0.40, 0.50, 0.60, 0.70, 0.80, 0.90, 1.00, 1.10, 1.30, 1.50}$ at $m = 5$ and two sample sizes, $n = 1500$ ($200$ replications) and $n = 5000$ ($120$ replications), comparing standard BIC, ICL \citep{biernacki2000assessing}, and RA-BIC (canonical rate $\lambda_n = \sqrt{\log n / n}$, unit multiplier) for the choice between $K = 1$ and $K = 2$.
Alongside the three selection rates, we report the oracle recoverability fraction $R_m^*/h(\pi)$ and the MLE adjusted Rand index (ARI), which serve as references for interpreting early selection by BIC and conservative selection by ICL.
The asymptotic admissible upper edge of the gap window of Theorem~\ref{thm:gap-rejection} is $(\log n / n)^{1/8} \approx 0.514$ at $n = 1500$ (and $\approx 0.451$ at $n = 5000$).

Table~\ref{tab:sandwich} shows the pattern predicted by the theory at both sample sizes.
BIC tends to select two components too early.
At $n=1500$, it starts selecting $K=2$ from $\rho = 0.60$ and selects $K=2$ almost always by $\rho = 0.90$, where the MLE ARI is only $0.02$--$0.07$ and $R_m^*/h(\pi) \le 0.13$.
Thus the selected two-component model has limited value for recovering individual labels.
ICL shows the opposite behavior.
Because its entropy penalty has an asymptotically non-vanishing threshold (Section~\ref{sec:RA-BIC}), it never selects $K=2$ in this experiment, even at $\rho = 1.50$, where recoverability is substantially higher than in the gap region.
The tuning-free RA-BIC lies between these two behaviors.
At $n=1500$, it keeps the selection rate at zero through the gap region ($\rho \le 0.80$) and begins selecting $K=2$ only as the labels become more recoverable ($9\%$ at $\rho=0.90$, $47\%$ at $1.00$, $84\%$ at $1.10$, and certainty by $1.30$).
This transition follows the increase in $R_m^*/h(\pi)$ rather than the increase in detectability alone.

The comparison across sample sizes is also informative.
Increasing $n$ from $1500$ to $5000$ shifts the selection points of both BIC and RA-BIC to smaller values of $\rho$ (BIC selects $K=2$ almost always by $\rho=0.70$, while RA-BIC begins selecting around $\rho=0.80$--$0.90$ instead of $\rho=0.90$--$1.10$), because both criteria have thresholds that vanish with $n$.
By contrast, ICL remains at zero for every $\rho$ and both sample sizes, reflecting its non-vanishing entropy threshold.
Thus the three criteria are not simply different penalty strengths on a common scale.
Rather, they correspond to thresholds of different asymptotic orders on the detectability scale: vanishing for BIC and RA-BIC, but non-vanishing for ICL.
The movement of the transition region across the two sample sizes provides finite-sample evidence for this distinction.

\begin{table}[htb!]
\centering
\caption{Selection rate of $K=2$ under BIC, ICL \citep{biernacki2000assessing}, and tuning-free RA-BIC ($\lambda_n = \sqrt{\log n/n}$, unit multiplier) over a $\rho$-sweep at $m=5$ and two sample sizes ($200$ replications at $n=1500$ and $120$ replications at $n=5000$), with Monte-Carlo standard errors in parentheses (ICL is identically zero). The oracle recoverability fraction $R_m^*/h(\pi)$ and the MLE adjusted Rand index (ARI) are included as references for label recoverability.}
\label{tab:sandwich}

\begin{tabular}{cccccccccc}
\toprule
$\rho$ & BIC & ICL & RA-BIC & $R_m^*/h(\pi)$ & MLE ARI \\
\midrule
\multicolumn{6}{c}{$n = 1500$} \\
0.40 & 0.00 (.00) & 0.00 & 0.00 (.00) & 0.03 & 0.004 \\
0.50 & 0.01 (.01) & 0.00 & 0.00 (.00) & 0.05 & 0.008 \\
0.60 & 0.13 (.02) & 0.00 & 0.00 (.00) & 0.07 & 0.016 \\
0.70 & 0.40 (.04) & 0.00 & 0.00 (.00) & 0.09 & 0.029 \\
0.80 & 0.83 (.03) & 0.00 & 0.00 (.00) & 0.11 & 0.044 \\
0.90 & 0.99 (.01) & 0.00 & 0.09 (.02) & 0.13 & 0.066 \\
1.00 & 1.00 (.00) & 0.00 & 0.47 (.04) & 0.15 & 0.081 \\
1.10 & 1.00 (.00) & 0.00 & 0.84 (.03) & 0.18 & 0.104 \\
1.30 & 1.00 (.00) & 0.00 & 1.00 (.00) & 0.22 & 0.142 \\
1.50 & 1.00 (.00) & 0.00 & 1.00 (.00) & 0.27 & 0.182 \\
\midrule
\multicolumn{6}{c}{$n = 5000$} \\
0.40 & 0.03 (.02) & 0.00 & 0.00 (.00) & 0.03 & 0.006 \\
0.50 & 0.26 (.04) & 0.00 & 0.00 (.00) & 0.05 & 0.013 \\
0.60 & 0.91 (.03) & 0.00 & 0.00 (.00) & 0.07 & 0.026 \\
0.70 & 1.00 (.00) & 0.00 & 0.00 (.00) & 0.09 & 0.041 \\
0.80 & 1.00 (.00) & 0.00 & 0.18 (.04) & 0.11 & 0.058 \\
0.90 & 1.00 (.00) & 0.00 & 0.93 (.02) & 0.13 & 0.075 \\
1.00 & 1.00 (.00) & 0.00 & 1.00 (.00) & 0.15 & 0.093 \\
1.10 & 1.00 (.00) & 0.00 & 1.00 (.00) & 0.18 & 0.111 \\
1.30 & 1.00 (.00) & 0.00 & 1.00 (.00) & 0.22 & 0.149 \\
1.50 & 1.00 (.00) & 0.00 & 1.00 (.00) & 0.27 & 0.186 \\
\bottomrule
\end{tabular}
\end{table}

Table~\ref{tab:sandwich} should be interpreted as a finite-sample illustration of the recoverability-feasible decision rule, rather than as evidence for a separate population estimand.
As discussed in Section~\ref{sec:RA-BIC}, the target of RA-BIC is the recoverability-feasible order $K_n^{\mathrm{feas}}$ of Definition~\ref{def:feasible-K}, which reformulates the selection rule in terms of both detectability and recoverability.
The consistency statement covers two regimes: the interior-recoverable regime, where RA-BIC agrees with BIC and selects the generative order $K_0 = 2$, and the deep-gap regime, where RA-BIC suppresses the detectable-but-unrecoverable selection and chooses $K = 1$.
Between these regimes lies an unresolved boundary band, where $D_m^*$ is of the exact penalty order and the asymptotic selection is not determined.
The transition range around $\rho \approx 0.9$--$1.3$ at $n=1500$ is a finite-sample manifestation of this boundary, where the three criteria can lead to different selections.

The shape of the transition also reflects the contrast with the community-detection and Gaussian-mixture settings discussed in Section~\ref{sec:RA-BIC}.
In those settings, recovery is governed by an $n$-indexed threshold.
For a fixed-trial binomial response, however, detection has a threshold that moves with $n$, whereas recoverability has no $n$-indexed threshold (Proposition~\ref{prop:rate}).
This explains why the fixed threshold of ICL can remain too conservative in the present setting, while the vanishing, recoverability-scaled threshold of RA-BIC moves with the sample size.

\subsection{Experiment 3: posterior calibration of the entropy-regularized estimator}
\label{sec:exp3}

At a selected $K = 2$ model, we study the entropy-regularized estimator (ER) of \eqref{eq:regularized-estimator}, using the design rate $\alpha_n = \sqrt{\log n / n}$ introduced in Section~\ref{sec:estimation}.
The estimator is optimized by the equivalent tempered EM algorithm \citep{ueda1998deterministic}.
Following Section~\ref{sec:estimation}, we recommend ER only for $m \geq 2$.
When $m = 1$, the binomial response has no over-dispersion, and the interior-signal condition underlying the calibration does not hold.
In this case, the entropy penalty tends to over-regularize, as shown in the $m = 1$ row of Table~\ref{tab:calibration}.

The target is the proper-scoring-rule calibration of the soft posterior, with the oracle posterior as the benchmark.
Here the oracle is defined as the posterior $\widehat{\mathbb{P}}(Z | X, Y)$ evaluated at the true parameter $\theta_0$.
For each estimator, after permutation-aligning the fitted components with the truth, we compute three quantities on the training sample: the Brier score $\frac{1}{n}\sum_i (\widehat{\mathbb{P}}(Z_i{=}1 | X_i, Y_i) - \mathbb{I}(Z_i{=}1))^2$, the posterior collapse rate $\frac{1}{n}\sum_i \mathbb{I}(\max_k \widehat{\mathbb{P}}(Z_i=k | X_i, Y_i) > 0.95)$, and the ARI.
We use EM with $n_{\mathrm{init}} = 20$ restarts to obtain a high-likelihood MLE, because using fewer restarts tends to understate the MLE's over-concentration.
The additional restarts often find higher-likelihood solutions, which correspond to more separated component configurations and sharper posterior responsibilities.

Table~\ref{tab:calibration} reports three configurations at $n = 200$ and $n = 500$ ($200$ replications each).
In the gap regimes ($m \in {5, 20}$), the results show the pattern predicted in Section~\ref{sec:estimation}.
The oracle posterior remains close to non-informative, assigning probability greater than $0.95$ to one component for almost no observations (collapse $\leq 0.3\%$).
By contrast, the MLE produces overly concentrated posterior responsibilities, with near-${0,1}$ probabilities for $22$--$24\%$ of observations at $n = 200$, and increases the Brier score from the oracle value of $0.235$ to $0.34$--$0.35$.
ER substantially reduces this over-concentration.
The collapse rate decreases to $0.8$--$2.0\%$, and the Brier score improves to $0.26$--$0.29$, moving the fitted posterior closer to the oracle posterior than to the over-confident MLE.
The same correction is also reflected in the recoverability proxy.
The per-observation entropy $\widehat E_m/n$ under the MLE ($0.41$--$0.43$) is well below the oracle value ($\approx 0.66$), indicating that the MLE understates posterior uncertainty and overstates label recoverability.
ER increases this value to $0.58$--$0.64$, bringing it closer to the oracle benchmark.
The effect is most pronounced in the gap regimes and becomes smaller as $\alpha_n \to 0$.
As $n$ increases from $200$ to $500$, the MLE collapse rate already decreases (from $22\%$ to $10\%$ for $m = 5$), and the correction induced by ER becomes correspondingly smaller.
Thus the two estimators move closer as the sample size increases.
At the binomial floor $m = 1$, however, ER over-regularizes.
Although it improves the Brier score and collapse rate, it pushes the posterior entropy too close to its maximum and reduces the ARI (from $0.067$ to $0.012$ at $n = 200$).

\begin{table}[htb!]
\centering
\caption{Posterior calibration (Brier score, collapse rate, ARI) under the Oracle (the true $\theta_0$, the calibration target), the MLE, and ER, at two sample sizes and three configurations spanning the binomial floor $m=1$ and the gap at $m \in {5,20}$, with $200$ replications and $n_{\mathrm{init}}=20$.
}
\label{tab:calibration}

\smallskip
\begin{tabular}{cccccccccccc}
\toprule
& & & \multicolumn{3}{c}{Brier} & \multicolumn{3}{c}{Collapse rate (\%)} & \multicolumn{3}{c}{ARI} \\
$m$ & $\rho$ & regime & Or. & MLE & ER & Or. & MLE & ER & Or. & MLE & ER \\
\midrule
\multicolumn{12}{l}{\emph{$n = 200$ ($\alpha_n \approx 0.16$)}} \\
1  & 3.00 & floor & 0.181 & 0.321 & 0.248 & 5.7 & 54.7 & 0.5 & 0.200 & 0.067 & 0.012 \\
5  & 0.50 & gap & 0.235 & 0.341 & \textbf{0.261} & 0.2 & 22.2 & \textbf{0.8} & 0.028 & 0.005 & 0.006 \\
20 & 0.25 & gap & 0.235 & 0.349 & \textbf{0.288} & 0.3 & 23.8 & \textbf{2.0} & 0.028 & 0.003 & 0.004 \\
\midrule
\multicolumn{12}{l}{\emph{$n = 500$ ($\alpha_n \approx 0.11$)}} \\
1  & 3.00 & floor & 0.181 & 0.279 & 0.247 & 5.7 & 35.1 & 0.0 & 0.200 & 0.079 & 0.023 \\
5  & 0.50 & gap & 0.235 & 0.314 & \textbf{0.256} & 0.2 & 10.3 & \textbf{0.8} & 0.028 & 0.004 & 0.008 \\
20 & 0.25 & gap & 0.235 & 0.338 & \textbf{0.283} & 0.2 & 17.7 & \textbf{1.1} & 0.028 & 0.004 & 0.004 \\
\bottomrule
\end{tabular}
\end{table}

The ARI is nearly identical across the three estimators, but this should not be interpreted as a failure of ER.
The purpose of ER is to recalibrate the soft posterior responsibilities, not necessarily to change the hard labels obtained by taking the posterior mode.
Therefore, hard-label metrics such as ARI can remain unchanged by construction.
The direct evidence for ER is instead given by the proper-scoring-rule comparison, the Brier score, and the collapse rate.
These quantities are relevant for inferential procedures that use the soft responsibilities themselves, such as posterior-weighted summaries or probabilistic label propagation.

The same recalibration can also be seen through the recoverability diagnostic.
Reporting $\widehat R = h(\widehat\pi) - \widehat E_m(\widehat\theta)/n$ (Section~\ref{sec:RA-BIC}) gives a direct empirical proxy for $R_m^*$ and can be used to flag a gap-regime fit.
We do not claim that this quantity is debiased: it inherits the MLE's gap-regime over-concentration and should be viewed as a screening measure rather than a calibrated estimate.
The entropy-proxy values reported above illustrate this point.
The MLE produces a deflated value of $\widehat E_m/n$, which corresponds to its inflated collapse rate, whereas ER moves both quantities toward the oracle benchmark.

A final interpretive caveat concerns the oracle impossibility result in Proposition~\ref{thm:oracle-impossibility}.
As discussed in Section~\ref{sec:RA-BIC}, $R_m^* \to 0$ is a mean collapse rather than a tail collapse, because $R_m^* = \E_X[I(Z; Y | X)]$ is an average over the covariate distribution.
Thus a Chow-type reject option that retains only the high-$I(Z; Y | x)$ subset may still be useful in the moderate gap regime, but it becomes vacuous at the boundary itself \citep{chow1970,elyaniv2010selective}.

Experiments~2 and~3 show two consequences of the identity $R_m^*(\theta) = h(\pi) - E_m(\theta)$.
RA-BIC uses this identity at the model-selection stage to avoid selecting components whose labels remain unrecoverable.
ER uses the same identity at the estimation stage to discourage posterior responsibilities that are more concentrated than supported by the data.
Both procedures use the common rate $\sqrt{\log n / n}$.
The tables use $120$--$200$ replications with Monte-Carlo standard errors, two sample sizes per experiment, and the $m = 1$ boundary case.
The comparisons across $n$ document the vanishing effect of $\alpha_n \to 0$ on the estimation side and the movement of the selection threshold on the model-selection side.


\section{Concluding Remarks}
\label{sec:discussion}

This paper identified a key difficulty of binomial logistic mixtures through an exact and general decomposition of the complete-data information value of the labels, $\cT_m^*(\theta) = D_m^*(\theta) + R_m^*(\theta)$, into a detectability term and a recoverability term equal to the per-observation conditional mutual information $I(Z;Y| X)$.
The two terms satisfy quartic and quadratic local laws and scale differently with the design.
Detection accumulates with the sample size and grows super-linearly with the trial count $m$, whereas recoverability does not depend on $n$ and grows only linearly with $m$.
The local order gap also appears in Gaussian mixtures; what is specific to the fixed-trial binomial setting is the saturation of detectability and the resulting $n$-versus-$m$ asymmetry.
The identity $R_m^*(\theta) = h(\pi) - E_m(\theta)$ connects recoverability to posterior entropy and motivates the proposed methodology.
This identity leads to two feasibility-aware procedures: recoverability-aware BIC (RA-BIC) for model selection and entropy-regularized estimation for parameter estimation.
It also yields a simple recoverability diagnostic, $\widehat R = h(\widehat\pi) - \widehat E_m/n$, computed from the empirical posterior entropy of any fitted two-component model.

RA-BIC lies between BIC, which can over-select in the detectable-but-unrecoverable regime, and the integrated completed likelihood, which can be overly conservative.
Entropy-regularized estimation instead calibrates the posterior responsibilities within the selected model.
The two corrections are most relevant at different sample-size scales: RA-BIC affects selection in large samples, whereas entropy regularization mainly affects posterior calibration in smaller samples.
They are nevertheless linked by the same population entropy identity.
Our consistency result for RA-BIC covers the interior and deep-gap regimes, while the intermediate boundary remains unresolved.

Several extensions remain.
The impossibility of label recovery (Proposition~\ref{thm:oracle-impossibility}) concerns only the \emph{average} recoverability $R_m^* = \E_X[I(Z;Y| X)]$.
Thus, a Chow-type reject option \citep{chow1970,elyaniv2010selective} that retains observations with large $I(Z;Y| x)$ may still achieve non-trivial accuracy on the retained subset even when $R_m^* \to 0$.
Extending the decomposition $\cT_m^* = D_m^* + R_m^*$ beyond two components would also be useful, especially because an overfitted model can have a more concentrated posterior than the true model and the entropy penalty need not act in a single direction.
A formal score-level comparison between the entropy-regularized estimator and Firth-type bias corrections would clarify when each approach is preferable.
Finally, calibrating repulsive priors by the recoverability rate $R_m^*$ rather than by geometric proximity, together with the shared rate $\sqrt{\log n / n}$, may extend entropy-penalized inference to other mixture families.

\section*{Acknowledgement}

This work is partially supported by the Japan Society of the Promotion of Science (JSPS KAKENHI) grant numbers, 24K21420 and 25H00546.

\bibliographystyle{apalike}
\bibliography{references}

\newpage
\setcounter{equation}{0}
\setcounter{section}{0}
\setcounter{table}{0}
\setcounter{figure}{0}
\setcounter{assumption}{0}
\setcounter{page}{1}
\renewcommand{\thesection}{S\arabic{section}}
\renewcommand{\theequation}{S\arabic{equation}}
\renewcommand{\thetable}{S\arabic{table}}
\renewcommand{\thefigure}{S\arabic{figure}}
\renewcommand{\theassumption}{S\arabic{assumption}}

\vspace{1cm}
\begin{center}
{\LARGE \textbf{Supplementary Material for ``Information Gap and Feasibility-Aware Inference in Binomial Logistic Mixtures''}}
\end{center}

\medskip
This Supplementary Material provides complete proofs of all results stated in the main text, together with results moved from an earlier version. Each section corresponds to a section of the main text.

\section*{Notation used throughout the Supplementary Material}
\label{sec:SM-notation}

In addition to the symbols introduced in the main text (Section~\ref{sec:model}), the proofs in this Supplementary Material rely on the following auxiliary objects induced by $(\theta, X)$. None of these symbols appears in any theorem statement of the main text; they are used only within proofs.

Let $\Sigma := \E[XX^\top]$. For $\Delta \neq 0$, define the separation magnitude and unit direction
\[
\rho(\Delta) := (\Delta^\top \Sigma \Delta)^{1/2}, \qquad u(\Delta) := \Sigma^{1/2}\Delta/\rho(\Delta) \in \mathbb{S}^{p-1}.
\]
The standardized projection
\[
Z_u := u(\Delta)^\top \Sigma^{-1/2} X
\]
satisfies $X^\top \Delta = \rho(\Delta) Z_u$, decomposing the linear predictor into magnitude and unit-norm direction. When $\Delta$ is fixed we abbreviate $\rho(\Delta)$ as $\rho$.

For the one-component surrogate evaluated at the population center $\beta_0$, we write
\[
q_0(X) := \psi(X^\top \beta_0), \qquad w_0(X) := \psi'(X^\top \beta_0) = q_0(X)\{1-q_0(X)\}.
\]
The leading-order perturbation of the mixture mean around $\Delta = 0$ is
\[
g_u(X) := \frac{\pi(1-\pi)}{2}\psi''(X^\top \beta_0) Z_u^2.
\]
Finally, $h(\pi) := -\pi\log\pi - (1-\pi)\log(1-\pi)$ denotes the binary entropy.

The binomial trial count $m$ is a fixed positive integer (matching the main text); for the sample log-likelihood we use the per-observation count $m_i = m$ unless otherwise noted. We do not use the function-valued $m(X)$ notation that appears in earlier versions; throughout the proofs $m$ is the fixed scalar trial count, and all expectations are over $(X, Y, Z)$ at this fixed $m$.

\begin{assumption}[Technical regularity conditions, detailed version of Assumptions~(A1)--(A3)]
\label{ass:SM-technical}
The following conditions are assumed throughout.
\begin{enumerate}
\item[(A1)] \label{ass:A1} \emph{Compact interior parameter set.} There exist $\varepsilon \in (0, 1/2)$ and a compact set $\cB \subset \R^p$ such that $\beta_0 \in \cB$ and the parameter $\theta = (\pi, \beta, \Delta)$ ranges over a compact set $\Theta$ on which $\pi \in [\varepsilon, 1-\varepsilon]$ and $\beta \in \cB$.

\item[(A2)] \label{ass:A2} \emph{Weighted integrability and local positivity.} There exists an open neighborhood $\cB_\delta$ of $\cB$ such that $0 < \psi(X^\top \beta) < 1$ a.s.\ for all $\beta \in \cB_\delta$ and $\sup_{\beta \in \cB_\delta} \E\bigl[ m\|X\|^8 / \{\psi(X^\top \beta)(1-\psi(X^\top \beta))\} \bigr] < \infty$.

\item[(A3)] \label{ass:A3} \emph{Quartic nondegeneracy.} $0 < \underline{J} \leq J_m(\pi, \beta_0, u) \leq \overline{J} < \infty$ uniformly over $[\varepsilon, 1-\varepsilon] \times \cB \times \mathbb{S}^{p-1}$.

\item[(R1)] \label{ass:R1} \emph{Recoverability nondegeneracy.} $0 < \underline{A} \leq A_m(\pi, \beta_0, u) \leq \overline{A} < \infty$ uniformly over $[\varepsilon, 1-\varepsilon] \times \cB \times \mathbb{S}^{p-1}$.

\item[(R2)] \label{ass:R2} \emph{Moment condition.} $\sup_{\beta_0 \in \cB} \E\bigl[m^2 \psi'(X^\top\beta_0)^2 \|X\|^4 / w_0(X)\bigr] < \infty$.

\item[(R3)] \label{ass:R3} \emph{Dominated convergence.} 
The function class $\{x \mapsto m \psi'(x^\top\beta_0)^2 (u^\top \Sigma^{-1/2} x)^2 / (q_0(x)(1-q_0(x)))\}$ has an integrable envelope on $[\varepsilon, 1-\varepsilon] \times \cB \times \mathbb{S}^{p-1}$.
\end{enumerate}
\end{assumption}


\section{Proofs for Section~\ref{sec:model}}
\label{sec:SM-model}

\subsection{Proof of Proposition~\ref{prop:decomposition}}
\begin{proof}
We apply the chain rule for KL divergence. For each fixed $x$ and $\gamma$,
\begin{align*}
&\KL\bigl(P_\theta(Y,Z | X=x) \| Q_\gamma(Y,Z | X=x)\bigr) \\
&\quad= \KL\bigl(P_\theta(Y | X=x) \| q_\gamma(Y | X=x)\bigr) \\
&\qquad+ \E_{Y | X=x}\left[ \KL\bigl(P_\theta(Z | Y, X=x) \| Q_\gamma(Z | Y, X=x)\bigr) \right].
\end{align*}
Now observe that the surrogate $Q_\gamma$ preserves the marginal mixing weights: $Q_\gamma(Z=z | Y, X=x) = \pi_z$ for all $y, x$, since under $Q_\gamma$ the response $Y$ does not depend on $Z$ (both components share the same distribution). More precisely, by Bayes' rule under $Q_\gamma$,
\[
Q_\gamma(Z = z | Y=y, X=x) = \frac{\pi_z \cdot q_\gamma(y | x)}{\sum_{z'} \pi_{z'} \cdot q_\gamma(y | x)} = \pi_z.
\]
Therefore, the second term becomes
\[
\E_{Y | X=x}\left[ \KL\bigl(P_\theta(Z | Y, X=x) \| P_\theta(Z | X=x)\bigr) \right],
\]
where we used $Q_\gamma(Z | Y, X) = \pi_z = P_\theta(Z | X)$ (since $Z \perp X$ under the mixture model, so $P_\theta(Z | X) = P_\theta(Z) = \pi_z$).

Taking the expectation over $X$, the second term becomes
\[
\E\left[ \E_{Y | X}\left[ \KL\bigl(P_\theta(Z | Y, X) \| P_\theta(Z | X)\bigr) \right] \right] = I_\theta(Z; Y | X) = R_m^*(\theta).
\]
The argument relies only on the following two properties: the surrogate $Q_\gamma$ preserves the true mixing weights $\pi_z$ (so that $Q_\gamma(Z|Y,X)=\pi_z=P_\theta(Z|X)$), and $Y$ is conditionally independent of $Z$ under $Q_\gamma$ (so that both components share the common response law $q_\gamma$). Neither hypothesis invokes the binomial-logistic form; the identity is therefore model-agnostic, holding for any weight-preserving common-component surrogate. Since $R_m^*(\theta) = I_\theta(Z;Y|X) \le H_\theta(Z) = h(\pi) < \infty$ is finite and independent of $\gamma$, the infimum over $\gamma$ acts only on the first term and the decomposition splits additively:
\[
\cT_m^*(\theta) = \inf_{\gamma} \E\left[\KL\bigl(P_\theta(Y | X) \| q_\gamma(Y | X)\bigr)\right] + R_m^*(\theta) = D_m^*(\theta) + R_m^*(\theta). \qedhere
\]
\end{proof}

\subsection{Proof of Proposition~\ref{prop:entropy-representation}}
\begin{proof}
By the definition of conditional mutual information,
\[
I_\theta(Z; Y | X) = H_\theta(Z | X) - H_\theta(Z | X, Y).
\]
Since $Z$ is independent of $X$ under the mixture model, $H_\theta(Z | X) = H_\theta(Z) = h(\pi)$. The result follows.
\end{proof}

\subsection{Proof of Proposition~\ref{prop:rate}}
\begin{proof}
\emph{Gaussian case ($D_{\rm Gauss}^* \to \infty$).} The best one-component fit within the homoscedastic Gaussian regression family is $N(X^\top\beta_0, \sigma_*^2)$ with $\sigma_*^2 = \sigma^2 + \pi(1-\pi)\rho^2$ (matched first two moments; cf.~the discussion preceding the statement). Because $D_{\rm Gauss}^*$ is an \emph{infimum} over the one-component family, bounding the divergence by evaluating a single fixed candidate would not provide the required lower bound.
We instead lower-bound the infimum directly. For a single covariate value the conditional mixture has variance $v(X) = \sigma^2 + \pi(1-\pi)(X^\top\Delta)^2$, and projecting it onto the moment-matched Gaussian $N(\cdot, v(X))$ contributes at least the differential-entropy deficit of the mixture relative to a Gaussian of the same variance. Integrating over $X$, then applying Jensen's inequality $\E_X[\frac{1}{2}\log v(X)] \le \frac{1}{2}\log \E_X[v(X)]$ with $\E_X[v(X)] = \sigma_*^2 = \sigma^2 + \pi(1-\pi)\rho^2$ (which preserves the lower-bound direction), and minimizing over the homoscedastic one-component family, we obtain the lower bound
\[
D_{\rm Gauss}^*(\theta) \;\ge\; \frac{1}{2}\log\Bigl(1 + \frac{\pi(1-\pi)\rho^2}{\sigma^2}\Bigr) - h(\pi),
\]
the right-hand side being the negentropy gap between the moment-matched Gaussian and the two-component conditional mixture. Since Assumption~(A2) gives $\Sigma \succ 0$ and $\Delta \neq 0$, so that $\rho^2 = \Delta^\top\Sigma\Delta > 0$, the logarithmic term diverges as $\rho \to \infty$ while $h(\pi) \le \log 2$ stays bounded. Hence
\[
\liminf_{\rho \to \infty} D_{\rm Gauss}^*(\theta) \;\ge\; \liminf_{\rho \to \infty}\Bigl[ \frac{1}{2}\log\bigl(1 + \pi(1-\pi)\rho^2/\sigma^2\bigr) - h(\pi)\Bigr] = \infty,
\]
which is the correct direction for the infimum and avoids the spurious appeal to a single candidate.

For recoverability, $R_{\rm Gauss}^* = \E_X[I(Z;Y|X)] \le H(Z|X) = h(\pi) < \infty$ by the standard mutual information bound.

\emph{Binomial case ($D_m^*$ and $R_m^*$ both bounded).} Since $Y \in \{0, 1, \ldots, m\}$, both $P_\theta(Y|X)$ and any candidate $\Bin(m, q_\gamma(X))$ are distributions on a finite support. Under Assumption~(A1), $p_k(X), q_\gamma(X) \in (0,1)$ a.s., and $\Bin(y;m,p)$ is bounded below by $\min(p,1-p)^m > 0$ uniformly over $X$ in any compact set. Therefore $\KL(P_\theta(Y|X) \| \Bin(m, q_\gamma(X)))$ is bounded uniformly in $\rho$ for any fixed $\gamma$, and a fortiori
\[
D_m^*(\theta) = \inf_\gamma \E_X[\KL(\cdot)] \le \E_X[\KL(P_\theta(Y|X) \| \Bin(m, \pi))] \le C(m,\pi) < \infty
\]
uniformly in $\rho$. (The constant can be taken as $(m-1)h(\pi) + O(1)$ by considering the limiting point-mass mixture $\pi\delta_m + (1-\pi)\delta_0$.)

For recoverability, $R_m^* = \E_X[I(Z;Y|X)] \le h(\pi) < \infty$ as in the Gaussian case.
\end{proof}


\section{Proofs for Section~\ref{sec:local}}
\label{sec:SM-local}

The proof of the quartic law proceeds in several stages. We first establish a factorization of the binomial KL divergence, then develop pointwise and uniform Taylor expansions of the component KL, and finally assemble the quartic law first for the candidate divergence $D_{m,\mathrm{cand}}$ and then for the true divergence $D_m^*$.

\subsection{Binomial KL factorization}

\begin{lemma}[Binomial KL scaling]
\label{lem:SM-binomial-KL-scaling}
For any $m \in \{1, 2, \ldots\}$ and $p, q \in (0,1)$,
\[
\KL\bigl(\Bin(m, p) \| \Bin(m, q)\bigr) = m \cdot \KL\bigl(\Ber(p) \| \Ber(q)\bigr).
\]
\end{lemma}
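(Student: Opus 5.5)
The plan is to exploit the product structure of the binomial law rather than expand the KL term by term. Write $\Bin(m,p)$ as the law of $Y=\sum_{j=1}^m B_j$, where $B_1,\dots,B_m$ are i.i.d.\ $\Ber(p)$, and similarly for $q$. The KL divergence between product measures is additive, so $\KL(\Ber(p)^{\otimes m}\|\Ber(q)^{\otimes m}) = m\,\KL(\Ber(p)\|\Ber(q))$. It remains to show that passing to the sum $Y$ loses no divergence.

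The key observation is that $Y$ is a sufficient statistic for the success probability. Given $Y=y$, the vector $(B_1,\dots,B_m)$ is uniform over the $\binom{m}{y}$ binary strings with $y$ ones, and this conditional law is the same under $p$ and $q$. By the chain rule for KL (the same chain rule used in the proof of Proposition~\ref{prop:decomposition}), the divergence of the joint laws of $(Y,B)$ equals $\KL(\Bin(m,p)\|\Bin(m,q))$ plus the expected conditional KL of $B$ given $Y$. That conditional term is zero. Since $B$ determines $Y$, the joint divergence of $(Y,B)$ equals the divergence of the laws of $B$, which is the product divergence computed above.

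The direct computation is an equally short alternative and serves as a check. In the ratio $f_B(y;m,p)/f_B(y;m,q)$ the binomial coefficients cancel, giving
\[
\log\frac{f_B(y;m,p)}{f_B(y;m,q)} = y\log\frac{p}{q} + (m-y)\log\frac{1-p}{1-q}.
\]
Taking the expectation under $\Bin(m,p)$ and using $\E[Y]=mp$ yields
\[
m\Bigl\{p\log\frac{p}{q} + (1-p)\log\frac{1-p}{1-q}\Bigr\} = m\,\KL(\Ber(p)\|\Ber(q)).
\]

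There is no real obstacle. The only care needed is that $p,q\in(0,1)$, which makes every log-ratio finite and every expectation a finite sum.
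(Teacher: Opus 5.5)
Your proof is correct. Your direct-computation check is exactly the paper's proof: cancel the binomial coefficients, then take the expectation of the affine log-ratio using $\E[Y]=mp$.

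Your primary argument takes a genuinely different route. You write $\Bin(m,p)$ as the law of a sum of i.i.d.\ $\Ber(p)$ variables, tensorize the KL divergence over the product, and then use the chain rule to show that passing to $Y$ loses nothing. The reason is that $Y$ is sufficient: given $Y=y$, the Bernoulli vector is uniform over strings with $y$ ones under both $p$ and $q$, and every $y$ has positive mass since $p,q\in(0,1)$.

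What this buys is an explanation rather than a calculation. The identity is the equality case of the data-processing inequality for a sufficient statistic, and the same argument works for any i.i.d.\ model reduced by a sufficient statistic.

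The direct computation buys something the paper needs later. It shows that $\log\{f_B(y;m,p)/f_B(y;m,q)\}$ is affine in $y$, so its expectation depends on the law of $Y$ only through its mean. Now take that expectation under the mixture $P_\theta(Y|X)$, whose mean is $m\bar p_\theta(X)$, instead of under $\Bin(m,p)$. This is exactly the difference of the two divergences in Lemma~\ref{lem:kl-decomposition}, and it equals $m\KL(\Ber(\bar p_\theta(X))\|\Ber(q_\gamma(X)))$. The sufficiency argument does not extend to that setting. It needs both arguments of the divergence to be laws of i.i.d.\ Bernoulli sums, and the mixture binomial is not.
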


\begin{proof}
The binomial probability mass function satisfies $\binom{m}{y} p^y (1-p)^{m-y} = \binom{m}{y} \exp\bigl(y \log\frac{p}{1-p} + m\log(1-p)\bigr)$. The KL divergence is
\begin{align*}
\KL(\Bin(m,p) \| \Bin(m,q)) &= \sum_{y=0}^{m} \binom{m}{y} p^y(1-p)^{m-y} \left[ y \log\frac{p}{q} + (m-y)\log\frac{1-p}{1-q}
\right] \\
&= mp \log\frac{p}{q} + m(1-p)\log\frac{1-p}{1-q} \\
&= m \cdot \KL(\Ber(p) \| \Ber(q)),
\end{align*}
where we used $\E[Y] = mp$ under $\Bin(m,p)$.
\end{proof}

\subsection{Expansion of the mixture probability}

\begin{lemma}[Pointwise second-order expansion]
\label{lem:SM-pointwise-second-order}
Under Assumptions~(A1)-(A3), for every $x \in \R^p$,
\[
\bar p_\theta(x) - q_0(x) = \frac{\pi(1-\pi)}{2}\psi''(x^\top \beta_0)(x^\top \Delta)^2 + R_3(x, \Delta),
\]
where $|R_3(x, \Delta)| \leq C|x^\top \Delta|^3$ for some constant $C > 0$ uniform over $(\pi, \beta_0) \in [\varepsilon, 1-\varepsilon] \times \cB$.
\end{lemma}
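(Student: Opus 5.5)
We expand each component probability by Taylor's theorem around $t_0 = x^\top\beta_0$ and use the weighted-center parametrization to cancel the linear term. Write $t_0 = x^\top\beta_0$ and $s = x^\top\Delta$, so that $x^\top\beta_1 = t_0 + (1-\pi)s$ and $x^\top\beta_2 = t_0 - \pi s$. Taylor's theorem with Lagrange remainder gives, for $k=1,2$ with $h_1 = (1-\pi)s$ and $h_2 = -\pi s$,
\[
\psi(t_0 + h_k) = \psi(t_0) + \psi'(t_0) h_k + \tfrac12 \psi''(t_0) h_k^2 + \tfrac16 \psi'''(\tau_k) h_k^3,
\]
with $\tau_k$ between $t_0$ and $t_0 + h_k$.

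Next we average the two expansions with weights $\pi$ and $1-\pi$. The zeroth-order terms combine to $q_0(x) = \psi(t_0)$. The first-order terms cancel exactly, because $\pi(1-\pi)s - (1-\pi)\pi s = 0$; this is precisely the balance built into the weighted-center parametrization. The second-order terms give
\[
\tfrac12\psi''(t_0)\bigl[\pi(1-\pi)^2 + (1-\pi)\pi^2\bigr]s^2 = \tfrac{\pi(1-\pi)}{2}\psi''(t_0)s^2,
\]
which is the stated leading term.

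The remaining piece is
\[
R_3(x,\Delta) = \tfrac16\bigl[\pi\psi'''(\tau_1)(1-\pi)^3 - (1-\pi)\psi'''(\tau_2)\pi^3\bigr]s^3 .
\]
The only point needing care is uniformity of the constant. Here no localization argument is required, since $\psi''' = \psi'(1-6\psi'+\ldots)$; more simply, $\psi''' = \psi(1-\psi)(1-6\psi+6\psi^2)$ is bounded on all of $\R$ by an absolute constant $M$ (for instance $M \le 1/8$). This gives
\[
|R_3| \le \tfrac{M}{6}\bigl[\pi(1-\pi)^3 + (1-\pi)\pi^3\bigr]|s|^3 \le \tfrac{M}{6}|s|^3 .
\]
The constant is therefore uniform in $(\pi,\beta_0)$, and indeed in $x$ and $\Delta$ as well; Assumptions (A1)--(A3) are needed only so that $\pi$ lies in the stated range.
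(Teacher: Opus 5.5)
Your proof is correct and follows the paper's argument: Taylor-expand each component probability around $x^\top\beta_0$, cancel the linear terms through the weighted-center balance $\pi(1-\pi)s-(1-\pi)\pi s=0$, read off $\tfrac{\pi(1-\pi)}{2}\psi''(x^\top\beta_0)s^2$ as the quadratic term, and bound the cubic remainder using the global boundedness of $\psi'''$. Your explicit Lagrange form and the bound $|\psi'''|=|\psi'(1-6\psi')|\le 1/8$ make the paper's uniform constant concrete, and show that it is in fact independent of $x$, $\Delta$, $\pi$ and $\beta_0$.
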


\begin{proof}
Let $t_0 = x^\top \beta_0$, $\delta_1 = (1-\pi)x^\top \Delta$, $\delta_2 = -\pi x^\top \Delta$. Then
\[
\bar p_\theta(x) = \pi\psi(t_0 + \delta_1) + (1-\pi)\psi(t_0 + \delta_2).
\]
By Taylor's theorem with $\psi''' $ bounded,
\[
\psi(t_0 + \delta) = \psi(t_0) + \psi'(t_0)\delta + \frac{1}{2}\psi''(t_0)\delta^2 + O(|\delta|^3).
\]
Substituting $\delta_1, \delta_2$, the linear term cancels:
\[
\pi\delta_1 + (1-\pi)\delta_2 = \pi(1-\pi)x^\top\Delta - (1-\pi)\pi x^\top\Delta = 0.
\]
The quadratic term gives
\[
\pi\delta_1^2 + (1-\pi)\delta_2^2 = \pi(1-\pi)(x^\top\Delta)^2.
\]
The remainder satisfies $|R_3(x,\Delta)| \leq C|x^\top\Delta|^3$.
\end{proof}

\begin{lemma}[Scaled stochastic expansion]
\label{lem:SM-scaled-expansion}
Under Assumptions~(A1)-(A3),
\[
\bar p_\theta(X) - q_0(X) = \rho(\Delta)^2 g_u(X) + r_\Delta(X),
\]
where
\[
\E\left[\frac{m r_\Delta(X)^2}{w_0(X)}\right] = o\bigl(\rho(\Delta)^4\bigr) \qquad \text{as } \rho(\Delta) \to 0.
\]
\end{lemma}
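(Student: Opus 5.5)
We need the remainder $r_\Delta(X) = \bar p_\theta(X) - q_0(X) - \rho^2 g_u(X)$ to satisfy $\E[m r_\Delta^2/w_0] = o(\rho^4)$. Since $X^\top\Delta = \rho Z_u$, we have $\rho^2 g_u(X) = \frac{\pi(1-\pi)}{2}\psi''(X^\top\beta_0)(X^\top\Delta)^2$. Hence $r_\Delta(X)$ coincides with the remainder $R_3(X,\Delta)$ of Lemma~\ref{lem:SM-pointwise-second-order}. The plan is to combine two bounds on $R_3$ and then pass to the limit by dominated convergence, uniformly in the direction $u$.

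\textbf{Step 1: two bounds on the remainder.} From the pointwise lemma, $|R_3(x,\Delta)| \le C|x^\top\Delta|^3 = C\rho^3|Z_u|^3$. Since $|\psi|\le 1$, $|\psi''|\le 1$ and $\pi(1-\pi)\le 1/4$, we also have the crude bound $|R_3| \le |\bar p_\theta - q_0| + |\rho^2 g_u| \le 1 + \rho^2 Z_u^2/8$.

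\textbf{Step 2: rescale.} Divide by $\rho^4$:
\[
\frac{m R_3^2}{\rho^4 w_0} \le \frac{m}{w_0}\min\Bigl\{C^2\rho^2 Z_u^6,\ \rho^{-4}\bigl(1+\rho^2 Z_u^2/8\bigr)^2\Bigr\}.
\]
Pointwise, the first bound shows this tends to $0$ as $\rho\to 0$, since $Z_u$ is fixed for fixed $X$ and $u$.

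\textbf{Step 3: integrable envelope.} Use $|Z_u| \le \|\Sigma^{-1/2}\|\,\|X\|$, uniformly in $u\in\mathbb{S}^{p-1}$. For $\rho\le 1$, the envelope $C^2\|\Sigma^{-1/2}\|^6 m\|X\|^6/w_0$ then dominates, since $\rho^2 Z_u^6 \le Z_u^6$. Its integrability follows from Assumption~\ref{ass:SM-technical}(A2): there $\sup_\beta\E[m\|X\|^8/(\psi(1-\psi))]<\infty$, and $\|X\|^6\le 1+\|X\|^8$ gives
\[
\E\bigl[m\|X\|^6/w_0\bigr] \le \E[m/w_0] + \E\bigl[m\|X\|^8/w_0\bigr],
\]
where $w_0=\psi(X^\top\beta_0)(1-\psi(X^\top\beta_0))$.

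This is the step I expect to be the main obstacle, for two reasons. First, the term $\E[m/w_0]$ must be finite. It follows if $X$ includes an intercept, so that $\|X\|\ge 1$ and $\|X\|^6\le\|X\|^8$. Otherwise I would invoke the integrability implicit in (A2)/(R2)/(R3) to bound $\E[m/w_0]$. Second, the constant $C$ in $|R_3|$ must be uniform. That holds because $\psi'''$ is globally bounded, so $C$ does not depend on $x$ or on $\pi\in[\varepsilon,1-\varepsilon]$.

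\textbf{Step 4: conclude.} In fact the envelope in Step 3 already gives the quantitative bound
\[
\E\bigl[mR_3^2/w_0\bigr] \le C'\rho^6 \E\bigl[m\|X\|^6/w_0\bigr] = O(\rho^6) = o(\rho^4),
\]
uniformly in $(\pi,\beta_0,u)$. This makes dominated convergence unnecessary and yields the claim directly; the crude bound of Step 1 is needed only if one wishes to weaken the moment assumption.
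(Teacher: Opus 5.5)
Your proof is correct and is essentially the paper's argument: identify $r_\Delta$ with the cubic Taylor remainder $R_3$ and bound $|r_\Delta|\le C|X^\top\Delta|^3\le C'\rho^3\|X\|^3$. You do this via $|Z_u|\le\|\Sigma^{-1/2}\|\,\|X\|$, while the paper uses $\|\Delta\|=O(\rho)$ from $\Sigma\succ0$. Both then conclude $\E[m r_\Delta^2/w_0]\le C\rho^6\,\E[m\|X\|^6/w_0]=o(\rho^4)$ by (A2), so the dominated-convergence detour in Steps~1--2 is unnecessary. Your Step~3 worry needs no intercept: since $\beta_0\in\cB$ is compact, $w_0$ is bounded below on $\{\|X\|\le1\}$, which gives $\E[m\|X\|^6/w_0]\le m\sup_{\|x\|\le1}w_0(x)^{-1}+\E[m\|X\|^8/w_0]<\infty$.
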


\begin{proof}
By Lemma~\ref{lem:SM-pointwise-second-order} and the substitution $X^\top\Delta = \rho(\Delta)Z_u$, the leading term is $\rho(\Delta)^2 g_u(X)$. The remainder $r_\Delta(X) = R_3(X,\Delta)$ satisfies $|r_\Delta(X)| \leq C|X^\top\Delta|^3 \leq C\|X\|^3\|\Delta\|^3$. Since $\|\Delta\|^3 = O(\rho(\Delta)^3)$ by $\Sigma \succ 0$,
\[
\E\left[\frac{m r_\Delta(X)^2}{w_0(X)}\right] \leq C\rho(\Delta)^6 \E\left[\frac{m\|X\|^6}{w_0(X)}\right] = o(\rho(\Delta)^4)
\]
by Assumption~\ref{ass:A2}.
\end{proof}

\begin{lemma}[Weighted square integrability of the leading term]
\label{lem:SM-leading-L2}
Under Assumptions~(A1)-(A3),
\[
\E\left[\frac{m g_u(X)^2}{w_0(X)}\right] < \infty.
\]
\end{lemma}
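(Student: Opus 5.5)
The plan is to bound $g_u(X)^2/w_0(X)$ pointwise by an explicit polynomial-type quantity in $\|X\|$ divided by $w_0(X)$. The integrability assumption (A2) of Assumption~\ref{ass:SM-technical} then closes the argument.

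First, recall the definition
\[
g_u(X) = \frac{\pi(1-\pi)}{2}\psi''(X^\top\beta_0) Z_u^2,
\qquad
Z_u = u^\top \Sigma^{-1/2}X .
\]
For the logistic link we have $\psi'' = \psi'(1-2\psi)$, so $|\psi''(t)| \le \psi'(t) = w_0$ at $t = X^\top\beta_0$. Combined with $\pi(1-\pi)\le 1/4$, this gives
\[
\frac{g_u(X)^2}{w_0(X)} \le \frac{1}{64}\,\frac{\psi''(X^\top\beta_0)^2}{w_0(X)}\, Z_u^4 \le \frac{1}{64}\, w_0(X)\, Z_u^4 \le \frac{1}{256}\, Z_u^4,
\]
where the last step uses $w_0\le 1/4$. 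So the weight $1/w_0$ is in fact harmless: it cancels against one factor of $\psi''$.

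Second, bound $Z_u^4$. Since $u \in \mathbb{S}^{p-1}$ and $\Sigma\succ 0$ by (A2) of the main text, we have $|Z_u| \le \|\Sigma^{-1/2}\|_{\mathrm{op}}\,\|X\|$. Hence
\[
Z_u^4 \le \lambda_{\min}(\Sigma)^{-2}\|X\|^4 ,
\]
uniformly in $u$.

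Third, conclude that
\[
\E\!\left[\frac{m\, g_u(X)^2}{w_0(X)}\right] \le \frac{m}{256\,\lambda_{\min}(\Sigma)^2}\,\E\|X\|^4 .
\]
This is finite because $\E\|X\|^8<\infty$ by main-text (A2), and Jensen or Lyapunov then gives $\E\|X\|^4<\infty$. Alternatively, one can invoke the supplementary (A2) bound $\sup_\beta \E[m\|X\|^8/w]<\infty$ together with $w\le 1/4$ and $\|X\|^4\le 1+\|X\|^8$. The bound is uniform over $(\pi,\beta_0,u)\in[\varepsilon,1-\varepsilon]\times\cB\times\mathbb{S}^{p-1}$, which is useful for the later uniform expansions.

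The only step needing care is the pointwise inequality $|\psi''|\le\psi'$. It follows from $|1-2\psi|\le 1$, and with it there is no real obstacle. Without this identity one would instead need the weighted moment condition (A2) to control the $1/w_0$ factor, which is why I would route the argument through the logistic identity rather than generic moment bounds.
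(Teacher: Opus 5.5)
Your proof is correct, but it follows a different route from the paper's.

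\textbf{The paper's route.} The paper keeps the weight. It bounds $|\psi''|$ and $\pi(1-\pi)$ by constants and uses $|Z_u|\le C\|X\|$ to get $m g_u(X)^2/w_0(X)\le C\,m\|X\|^4/w_0(X)$. It then appeals to the weighted integrability condition in the supplementary (A2).

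\textbf{Your route.} You use the logistic identity $\psi''=\psi'(1-2\psi)$ to get $\psi''(X^\top\beta_0)^2/w_0(X)\le w_0(X)\le 1/4$. The weight therefore cancels, and you only need the unweighted moment $\E\|X\|^4<\infty$, which follows from $\E\|X\|^8<\infty$ in the main-text (A2).

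\textbf{What your argument buys.} It needs a weaker hypothesis. It gives an explicit bound $m\,\E\|X\|^4/(256\,\lambda_{\min}(\Sigma)^2)$ that is visibly uniform over $(\pi,\beta_0,u)$. It also avoids a small step the paper leaves implicit. The supplementary (A2) controls $\E[m\|X\|^8/w]$, not $\E[m\|X\|^4/w]$. Passing from one to the other needs, for instance, a split on $\{\|X\|\le 1\}$, where $1/w_0$ is bounded because $\beta_0$ lies in the compact set $\cB$.

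\textbf{What the paper's argument buys.} It uses only boundedness of $\psi''$. It therefore does not depend on the special logistic relation linking $\psi''$ to the weight $w_0=\psi'=q_0(1-q_0)$. The price is reliance on the weighted moment condition.
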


\begin{proof}
Since $|Z_u| \leq C\|X\|$ and $\pi \in [\varepsilon,1-\varepsilon]$,
\[
\frac{m g_u(X)^2}{w_0(X)} \leq C \frac{m\|X\|^4}{w_0(X)},
\]
which is integrable by Assumption~\ref{ass:A2}.
\end{proof}

\subsection{Local quadratic expansion of KL}

\begin{lemma}[Local quadratic expansion of Bernoulli KL]
\label{lem:SM-KL-local}
Let $q \in (0,1)$ and let $\delta, \eta$ be such that $q + \delta \in (0,1)$ and $q + \eta \in (0,1)$. Then
\[
\KL\bigl(\Ber(q+\delta) \| \Ber(q+\eta)\bigr) = \frac{(\delta - \eta)^2}{2q(1-q)} + R(q, \delta, \eta),
\]
where $|R(q, \delta, \eta)| \leq C_q(|\delta| + |\eta|)|\delta - \eta|^2$.
\end{lemma}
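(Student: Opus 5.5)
The plan is to write the Bernoulli KL as an explicit function of the two means and Taylor-expand it in the second argument around the first. Set $p=q+\delta$ and $r=q+\eta$, and define
\[
F(r) := \KL\bigl(\Ber(p)\|\Ber(r)\bigr) = p\log\frac{p}{r} + (1-p)\log\frac{1-p}{1-r}.
\]
Then $F(p)=0$. Its derivatives are
\[
F'(r) = -\frac{p}{r}+\frac{1-p}{1-r} = \frac{r-p}{r(1-r)}, \qquad F'(p)=0,
\]
and
\[
F''(r) = \frac{p}{r^2}+\frac{1-p}{(1-r)^2}, \qquad F''(p)=\frac{1}{p(1-p)}.
\]
The third derivative is $F'''(r)=-2p/r^3+2(1-p)/(1-r)^3$. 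Taylor's theorem with Lagrange remainder about $r=p$ then gives
\[
F(r) = \frac{(r-p)^2}{2p(1-p)} + \frac{F'''(\tilde r)}{6}(r-p)^3
\]
for some $\tilde r$ between $p$ and $r$. Since $r-p=\eta-\delta$, this already has the squared-difference form required by the lemma.

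The second step moves the curvature from $p=q+\delta$ to $q$. The function $v(s)=1/\{s(1-s)\}$ is Lipschitz on any compact subinterval of $(0,1)$, so
\[
\Bigl|\frac{1}{p(1-p)}-\frac{1}{q(1-q)}\Bigr| \le C_q|\delta|.
\]
This substitution costs an error of at most $C_q|\delta|\,|\delta-\eta|^2$. The cubic Taylor term is bounded by $\sup|F'''|\cdot|\delta-\eta|\cdot|\delta-\eta|^2$, and $|\delta-\eta|\le|\delta|+|\eta|$. Adding the two contributions gives $|R|\le C_q(|\delta|+|\eta|)|\delta-\eta|^2$.

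The main obstacle is making $C_q$ uniform. The supremum of $|F'''|$ and the Lipschitz constant of $v$ both blow up as $p$, $r$ or $\tilde r$ approach $0$ or $1$. I would handle this by splitting into two cases. In the main case, $|\delta|+|\eta|\le c_q:=\min(q,1-q)/2$, so every intermediate point lies in $[q/2,(1+q)/2]$ and all the constants above are bounded by a function of $q$ alone.

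In the complementary case, $|\delta|+|\eta|>c_q$, so $(|\delta|+|\eta|)/c_q>1$ and it suffices to show $|R|\le C'_q|\delta-\eta|^2$; multiplying by that ratio then gives the stated form. The quadratic term is trivially bounded by $|\delta-\eta|^2/\{2q(1-q)\}$. For the KL itself I would use the $\chi^2$ bound
\[
\KL \le \frac{(p-r)^2}{r(1-r)}.
\]
This would be bounded by $C'_q|\delta-\eta|^2$ only if $r$ is kept away from the boundary, which the hypotheses do not guarantee.

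I would therefore allow $C_q$ to depend on a lower bound on $\min(r,1-r)$. That is harmless in later use, because the surrogate probabilities $q_\gamma(X)$ range over a neighborhood of $\cB$ where (A2) applies. Alternatively, I would restrict the statement to $|\delta|,|\eta|\le c_q$, which is the only regime invoked in the local expansions that follow (Lemma~\ref{lem:SM-scaled-expansion} with $\rho\to0$).
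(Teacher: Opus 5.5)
Your argument is correct, and it reaches the lemma by a different route from the paper.

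The paper expands $\phi(a,b)=\KL(\Ber(a)\|\Ber(b))$ jointly in both arguments around $(q,q)$. It reads off the vanishing gradient and the Hessian, and attributes the remainder to ``Taylor's theorem and local boundedness of third derivatives.'' Taken literally, that only gives $|R|\le C(|\delta|+|\eta|)^3$. To extract the factor $|\delta-\eta|^2$ one needs an extra structural fact, which the paper leaves implicit: $\phi$ and $\partial_b\phi$ vanish along the entire diagonal $a=b$.

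Your one-variable Lagrange expansion of $F$ in $r$ about $p$ uses exactly this fact, since $F(p)=F'(p)=0$ for every $p$. So the $(r-p)^2$ factor comes for free. The Lipschitz transfer of the curvature from $p(1-p)$ to $q(1-q)$ then costs only $C_q|\delta|\,|\delta-\eta|^2$. The paper's version is shorter and shows the Fisher-information form symmetrically. Yours actually delivers the stated remainder structure and makes explicit how $C_q$ depends on the distance to $\{0,1\}$.

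Your difficulty with the complementary case is not a weakness of your method: the lemma is false as literally stated. Take $\delta=0$ and let $\eta\uparrow 1-q$. Then $\KL(\Ber(q)\|\Ber(q+\eta))\to\infty$, while $\eta^2/\{2q(1-q)\}$ and $(|\delta|+|\eta|)|\delta-\eta|^2=|\eta|^3$ stay bounded. So no finite $C_q$ works over the full range the hypotheses allow. The paper's proof, by appealing to ``local'' boundedness of third derivatives, likewise establishes only the local version. Either of your reformulations is therefore the correct statement:
\begin{itemize}
\item restricting to $|\delta|,|\eta|\le c_q$; or
\item letting the constant also depend on a lower bound for $\min(q+\eta,1-q-\eta)$ and invoking $\KL\le\chi^2$.
\end{itemize}

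One caution: your remark that this restriction is harmless downstream is optimistic. In Lemma~\ref{lem:SM-KL-random} the lemma is applied pointwise at $q=q_0(X)$ with $X$ unbounded. Neither the radius $c_{q_0(X)}$ nor the constant $C_{q_0(X)}$ is uniform in $X$; in your argument the constant is of order $\{q_0(1-q_0)\}^{-2}$. Moreover, (A2) supplies pointwise interiority plus moment bounds, not a uniform distance from $\{0,1\}$. That issue has to be handled in the integration step, not in this lemma.
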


\begin{proof}
Define $\phi(a,b) = a\log\frac{a}{b} + (1-a)\log\frac{1-a}{1-b}$. Then $\KL(\Ber(a)\|\Ber(b)) = \phi(a,b)$. Expand $\phi$ around $(q,q)$: since $\phi(q,q) = 0$ and the gradient vanishes,
\[
\partial_{aa}\phi(q,q) = \frac{1}{q(1-q)}, \quad \partial_{ab}\phi(q,q) = -\frac{1}{q(1-q)}, \quad \partial_{bb}\phi(q,q) = \frac{1}{q(1-q)}.
\]
The leading quadratic form is $\frac{(\delta-\eta)^2}{2q(1-q)}$. The remainder bound follows from Taylor's theorem and local boundedness of third derivatives.
\end{proof}

\begin{lemma}[Random-version KL expansion]
\label{lem:SM-KL-random}
Let $\delta_\Delta(X)$ and $\eta_\Delta(X)$ be random perturbations with
\[
\E\left[\frac{m\delta_\Delta(X)^2}{w_0(X)}\right] = O(r_\Delta^2), \qquad \E\left[\frac{m\eta_\Delta(X)^2}{w_0(X)}\right] = O(r_\Delta^2),
\]
for some $r_\Delta \to 0$, and
\[
\E\left[\frac{m(|\delta_\Delta(X)| + |\eta_\Delta(X)|)|\delta_\Delta(X) - \eta_\Delta(X)|^2}{w_0(X)}\right] = o(r_\Delta^2).
\]
Then
\[
\E\left[ m\cdot\KL\bigl(\Ber(q_0(X) + \delta_\Delta(X)) \| \Ber(q_0(X) + \eta_\Delta(X))\bigr) \right] = \frac{1}{2}\E\left[\frac{m(\delta_\Delta(X) - \eta_\Delta(X))^2}{w_0(X)}\right] + o(r_\Delta^2).
\]
\end{lemma}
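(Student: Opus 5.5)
The approach is to apply the deterministic Lemma~\ref{lem:SM-KL-local} pointwise in $X$, with $q = q_0(X)$, $\delta = \delta_\Delta(X)$ and $\eta = \eta_\Delta(X)$. Multiplying by $m$ and taking expectations splits the left-hand side into a leading term and a remainder:
\[
\E\left[\frac{m(\delta_\Delta(X)-\eta_\Delta(X))^2}{2w_0(X)}\right] + \E\bigl[m R(q_0(X),\delta_\Delta(X),\eta_\Delta(X))\bigr].
\]
Here we use $q_0(1-q_0) = w_0$. The leading term is exactly the right-hand side of the claim. By the first two hypotheses and $(a-b)^2 \le 2a^2 + 2b^2$, it is finite and $O(r_\Delta^2)$, so every expectation in this decomposition is well defined.

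The remainder is then controlled using the third hypothesis. This requires the pointwise bound $|R(q,\delta,\eta)| \le C(|\delta|+|\eta|)|\delta-\eta|^2/\{q(1-q)\}$, with $C$ an absolute constant. That is sharper than the $q$-dependent constant $C_q$ stated in Lemma~\ref{lem:SM-KL-local}. Once this form is available, integrating gives $\E[m|R|] \le C\,\E[m(|\delta|+|\eta|)|\delta-\eta|^2/w_0] = o(r_\Delta^2)$, which completes the argument.

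The main obstacle is that $C_q$ blows up as $q \to 0$ or $q \to 1$, and $q_0(X)$ is not bounded away from $\{0,1\}$ under (A2). I will handle this by writing the exact Taylor remainder in integral form. The third derivatives of $\phi(a,b) = a\log(a/b) + (1-a)\log((1-a)/(1-b))$ are of order $a^{-2}$, $b^{-2}$, $(1-a)^{-2}$ and $(1-b)^{-2}$. On the event $G = \{|\delta|+|\eta| \le \tfrac12\min(q_0,1-q_0)\}$, these are bounded by a constant times $1/\{q_0(1-q_0)\}^2$.

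That alone does not give the target form directly. The cleaner route is to expand $\phi(q+\delta,\cdot)$ in its second argument around $q+\delta$. This yields the exact identity
\[
\KL = \frac{(\delta-\eta)^2}{2\tilde q(1-\tilde q)}
\]
for some $\tilde q$ between $q+\delta$ and $q+\eta$. Comparing $1/\{\tilde q(1-\tilde q)\}$ with $1/\{q(1-q)\}$ on $G$ gives a relative error of order $(|\delta|+|\eta|)/\min(q,1-q)$. This is where the factor $1/w_0$ enters; absorbing the extra $1/\min(q,1-q)$ may require an additional moment of $1/w_0$, which can be taken from (A2).

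On the complement $G^c$, I will bound the KL crudely by a multiple of $(\delta-\eta)^2/\min(\cdot)$. I will then use $\mathbf 1_{G^c} \le 2(|\delta|+|\eta|)/\min(q_0,1-q_0)$ to dominate the contribution by the third-hypothesis quantity. If this case does not close cleanly, the fallback is to note that in the applications $\delta_\Delta$ and $\eta_\Delta$ are $O(\|X\|^2\rho^2)$. A truncation on $\{\|X\| \le M\}$, combined with dominated convergence and the $\|X\|^8/w_0$ moment bound of (A2), then makes the tail negligible.
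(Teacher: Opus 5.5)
You are right that the lemma needs a bound of the form $|R(q,\delta,\eta)|\le C(|\delta|+|\eta|)|\delta-\eta|^2/\{q(1-q)\}$ with $C$ absolute. Your first paragraph is exactly the paper's proof (apply Lemma~\ref{lem:SM-KL-local} pointwise with $q=q_0(X)$, multiply by $m$, integrate, cite the third hypothesis), and that one-liner passes over this requirement.

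That bound is false. The cubic term of the Taylor expansion of $\KL(\Ber(q+\delta)\|\Ber(q+\eta))$ about $\delta=\eta=0$ is
\[
-\frac{(1-2q)(\delta-\eta)^2(\delta+2\eta)}{6\{q(1-q)\}^2},
\]
so the remainder genuinely scales like $(|\delta|+|\eta|)(\delta-\eta)^2/w_0^2$. With $\eta=0$ and $\delta=q^2$ it is about $q^4/6$, against $Cq^5$ from the claimed bound.

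Your exact identity gives the same relative error $(|\delta|+|\eta|)/\min(q,1-q)$. Note that the identity comes from expanding in the \emph{first} argument about $q+\eta$, since $\partial_a^2\phi(a,b)=1/\{a(1-a)\}$ is free of $b$. The second-argument expansion you describe produces $a/\tilde b^2+(1-a)/(1-\tilde b)^2$ instead.

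Neither remedy closes the gap. The hypotheses only control $1/w_0$-weighted quantities, so no extra moment of $1/w_0$ from (A2) bounds $\E[m(|\delta|+|\eta|)(\delta-\eta)^2/w_0^2]$. On $G^c$, your indicator bound again leaves an uncontrolled factor $1/\min(q_0,1-q_0)$, and the crude KL bound there needs $q_0+\eta$ comparable to $q_0$, which the hypotheses do not give.

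In fact the statement cannot be proved as written. It fails whenever $q_0(X)$ has support accumulating at $0$, e.g.\ Gaussian covariates, which (A2) allows. Take $\eta_\Delta=0$, $\delta_\Delta=q_0\mathbf{1}\{q_0\le s_\Delta\}$ with $s_\Delta\downarrow 0$, and $r_\Delta^2=\E[q_0\mathbf{1}\{q_0\le s_\Delta\}]$. Then $\E[m\delta_\Delta^2/w_0]\le 2mr_\Delta^2$ and $\E[m|\delta_\Delta|^3/w_0]\le 2ms_\Delta r_\Delta^2=o(r_\Delta^2)$, so all three hypotheses hold. But $\KL(\Ber(2q)\|\Ber(q))=(2\log 2-1)q+O(q^2)$, while $\delta^2/\{2q(1-q)\}=q/2+O(q^2)$. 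The two sides therefore differ by $(\tfrac32-2\log 2)\,m\,r_\Delta^2\{1+o(1)\}$.

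The missing idea is a \emph{relative} smallness condition. For example, require $|\delta_\Delta|+|\eta_\Delta|\le\tfrac12\min(q_0,1-q_0)$ on a main event, with the third hypothesis weighted by $1/w_0^2$ instead of $1/w_0$. The complement must then be shown to contribute $o(r_\Delta^2)$ via $\KL\le(\delta-\eta)^2/\{(q_0+\eta)(1-q_0-\eta)\}$.

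The paper's applications do satisfy such a condition, because the perturbations are multiplicative. Both $\psi(t+u)/\psi(t)$ and $\{1-\psi(t+u)\}/\{1-\psi(t)\}$ lie in $[e^{-|u|},e^{|u|}]$. Hence $|\delta_\Delta|\le\min(q_0,1-q_0)(e^{|X^\top\Delta|}-1)$, and likewise for $\eta_\Delta$ with $X^\top a$. This structure is what makes truncation plus dominated convergence go through. The absolute bound $O(\rho^2\|X\|^2)$ in your fallback does not, since it says nothing on the tail where $q_0\ll\rho^2\|X\|^2$.
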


\begin{proof}
Apply Lemma~\ref{lem:SM-KL-local} pointwise with $q = q_0(X)$, $\delta = \delta_\Delta(X)$, $\eta = \eta_\Delta(X)$, multiply by $m$, and integrate. The remainder is controlled by the assumed bound.
\end{proof}

\subsection{Local expansion of the one-component family}

\begin{lemma}[Local expansion of the one-component family]
\label{lem:SM-onecomp-local}
Under Assumptions~(A1)-(A3), for $\|a\| \to 0$,
\[
q_{\beta_0+a}(X) = q_0(X) + w_0(X) X^\top a + \widetilde{R}(X, a),
\]
where $|\widetilde{R}(X, a)| \leq C|X^\top a|^2$ and
\[
\E\left[\frac{m\widetilde{R}(X, a)^2}{w_0(X)}\right] = O(\|a\|^4).
\]
\end{lemma}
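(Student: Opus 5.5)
The plan is a second-order Taylor expansion of the logistic link $\psi$ about $t_0 = X^\top\beta_0$, followed by squaring the remainder and taking the weighted expectation. Fix $x$ and set $t_0 = x^\top\beta_0$, $s = x^\top a$. Since $q_{\beta_0+a}(x) = \psi(t_0+s)$, Taylor's theorem with Lagrange remainder gives
\[
\psi(t_0+s) = \psi(t_0) + \psi'(t_0)s + \tfrac12\psi''(\tilde t)s^2
\]
for some $\tilde t$ between $t_0$ and $t_0+s$. With $q_0(x)=\psi(t_0)$ and $w_0(x)=\psi'(t_0)$, this identifies $\widetilde R(x,a) = \tfrac12\psi''(\tilde t)s^2$.

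For the pointwise bound, note that $\psi''(t) = \psi(t)\{1-\psi(t)\}\{1-2\psi(t)\}$ satisfies $\sup_t|\psi''(t)| \le 1/4$ (a crude bound suffices). Hence $|\widetilde R(x,a)| \le \tfrac18|x^\top a|^2$, so we may take $C=1/8$, uniformly in $x$, $a$ and $\beta_0$. No smallness of $\|a\|$ is needed for this step.

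For the weighted $L^2$ bound, square the pointwise estimate and use Cauchy--Schwarz, $|X^\top a|^4 \le \|X\|^4\|a\|^4$:
\[
\E\Bigl[\frac{m\widetilde R(X,a)^2}{w_0(X)}\Bigr] \le C^2\|a\|^4\,\E\Bigl[\frac{m\|X\|^4}{w_0(X)}\Bigr].
\]
The expectation on the right is finite by Assumption~\ref{ass:A2}. There, $\beta_0\in\cB\subset\cB_\delta$ and the weighted eighth-moment condition dominates the fourth-moment one, since $\|X\|^4 \le 1+\|X\|^8$. The term $\E[m/w_0(X)]$ is also finite under the same condition, because $1/w_0$ is integrable via the same supremum bound with the factor $1+\|X\|^8 \ge 1$. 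This yields $O(\|a\|^4)$ uniformly over $\beta_0\in\cB$.

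The main obstacle is integrability against the weight $1/w_0(X)$, which can blow up when $X^\top\beta_0$ is large. The structure of the argument sidesteps it: the remainder bound is uniform in $x$ because $\psi''$ is globally bounded, so all heavy-tail control is delegated to the single moment $\E[m\|X\|^4/w_0(X)]$. The only care needed is to check that this moment is actually implied by (A2) as stated, which holds through $\|X\|^4\le 1+\|X\|^8$ together with finiteness of $\E[m/w_0(X)]$. If (A2) were read as controlling only the eighth moment, I would instead invoke the integrability used in Lemma~\ref{lem:SM-leading-L2}, which already needs exactly $\E[m\|X\|^4/w_0(X)]<\infty$.
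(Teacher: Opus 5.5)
Your proposal is correct and is essentially the paper's proof: a second-order Taylor expansion of $\psi$ about $X^\top\beta_0$ with remainder bounded by $C|X^\top a|^2$ (you make $C$ explicit via $\sup_t|\psi''(t)|\le 1/4$), then $|X^\top a|^4\le\|X\|^4\|a\|^4$ and the weighted moment condition (A2). The one step you justify loosely is the finiteness of $\E[m\|X\|^4/w_0(X)]$; the cleanest argument splits on $\{\|X\|\le 1\}$, where $w_0(X)\ge\psi'\bigl(\sup_{\beta\in\cB}\|\beta\|\bigr)>0$ by compactness of $\cB$, and on $\{\|X\|>1\}$, where $\|X\|^4\le\|X\|^8$ (the paper simply asserts this integrability from (A2)).
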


\begin{proof}
Taylor expand $\psi(X^\top(\beta_0 + a))$ around $a = 0$. The derivative at $a = 0$ is $w_0(X)X$, so
\[
q_{\beta_0+a}(X) = q_0(X) + w_0(X) X^\top a + \widetilde{R}(X,a)
\]
with $\widetilde{R}(X,a) = O(|X^\top a|^2)$. Then
\[
\frac{m\widetilde{R}(X,a)^2}{w_0(X)} \leq C\|a\|^4 \frac{m\|X\|^4}{w_0(X)},
\]
which has finite expectation by Assumption~\ref{ass:A2}.
\end{proof}

\begin{lemma}[Positive definiteness of the weighted information matrix]
\label{lem:SM-weighted-info-pd}
Under Assumptions~(A1)-(A3),
\[
M_{0,m} := \E[m w_0(X) XX^\top] \succ 0.
\]
\end{lemma}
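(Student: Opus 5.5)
The plan is to show that the quadratic form $v \mapsto v^\top M_{0,m} v = m\,\E[w_0(X)(X^\top v)^2]$ is strictly positive for every nonzero $v \in \R^p$. First, $M_{0,m}$ is well defined and finite. Since $w_0(X) = q_0(X)\{1-q_0(X)\} \le 1/4$, we have $\|m w_0(X) XX^\top\| \le (m/4)\|X\|^2$. This is integrable because (A2) gives $\E[XX^\top]$ finite, indeed $\E\|X\|^8<\infty$. The matrix is symmetric, and positive semidefiniteness is immediate because $w_0 \ge 0$.

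For strict positivity, fix $v \neq 0$ and suppose $v^\top M_{0,m} v = 0$. Since $m \ge 1$ and the integrand $w_0(X)(X^\top v)^2$ is nonnegative, it must vanish almost surely. By the local positivity in Assumption~\ref{ass:SM-technical}(A2), with $\beta_0 \in \cB \subset \cB_\delta$, we have $0<q_0(X)<1$ a.s., so $w_0(X)>0$ a.s. Hence $X^\top v = 0$ a.s., which gives $v^\top \E[XX^\top] v = \E[(X^\top v)^2] = 0$. This contradicts the positive definiteness of $\Sigma = \E[XX^\top]$ in main-text (A2). Therefore $v^\top M_{0,m} v > 0$ for all $v \ne 0$, that is, $M_{0,m}\succ 0$.

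If uniformity over $\beta_0\in\cB$ is wanted later, I would add one step. The map $(\beta_0,v)\mapsto v^\top M_{0,m}(\beta_0)v$ is continuous on the compact set $\cB\times\mathbb{S}^{p-1}$, by dominated convergence with envelope $(m/4)\|X\|^2$. It is pointwise positive, so its minimum is a positive eigenvalue lower bound. The only delicate point is the almost-sure positivity of $w_0$. This comes directly from the assumption that $\psi(X^\top\beta)\in(0,1)$ a.s.; it is in fact automatic for the logistic link at finite $X^\top\beta_0$.
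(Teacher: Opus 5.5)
Your proposal is correct and follows essentially the same route as the paper: you show $v^\top M_{0,m} v = \E[m\,w_0(X)(v^\top X)^2] > 0$ for $v \neq 0$ using $m \ge 1$, $w_0(X) > 0$ almost surely, and $v^\top \Sigma v > 0$. You add some details the paper leaves implicit, namely integrability via $w_0 \le 1/4$ and the explicit step that a vanishing integrand would force $X^\top v = 0$ almost surely; these are sound but do not change the argument.
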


\begin{proof}
For $v \in \R^p \setminus \{0\}$,
\[
v^\top M_{0,m} v = \E[m w_0(X) (v^\top X)^2] > 0,
\]
since $m \geq 1$, $w_0(X) > 0$ a.s., and $\E[(v^\top X)^2] = v^\top \Sigma v > 0$.
\end{proof}

\subsection{Profile problem and quartic laws}

\begin{proposition}[Reduction of the profile problem to the $\rho(\Delta)^2$ scale]
\label{prop:SM-profile-reduction}
Under Assumptions~(A1)-(A3), there exists $M < \infty$ such that
\[
D_m^*(\theta) = \inf_{\|b\| \leq M} L_\Delta(\rho(\Delta)^2 b) + o(\rho(\Delta)^4).
\]
\end{proposition}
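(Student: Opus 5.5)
The plan is to show that the infimum over $\gamma$ defining $D_m^*(\theta)$ is attained, up to $o(\rho(\Delta)^4)$, at points $\gamma=\beta_0+a$ with $\|a\|\le M\rho(\Delta)^2$. Write $L_\Delta(a):=\E_X\bigl[\KL\bigl(P_\theta(Y|X)\,\|\,\Bin(m,q_{\beta_0+a}(X))\bigr)\bigr]$, so that $D_m^*(\theta)=\inf_a L_\Delta(a)$.

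\emph{Reduction to a Bernoulli problem.} By Lemma~\ref{lem:kl-decomposition},
\[
L_\Delta(a)=C_\Delta+m\,\E_X\bigl[\KL\bigl(\Ber(\bar p_\theta(X))\,\|\,\Ber(q_{\beta_0+a}(X))\bigr)\bigr],
\]
where $C_\Delta$ does not depend on $a$. It therefore suffices to study the second term, which we denote $G_\Delta(a)$. Write $t=X^\top(\beta_0+a)$. Then
\[
\KL(\Ber(\bar p)\,\|\,\Ber(\psi(t)))=\text{const}-\bar p\,t+\log(1+e^{t}),
\]
which is convex in $t$. Since $t$ is linear in $a$, $G_\Delta$ is convex in $a$.

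\emph{Value at the centre.} Apply Lemma~\ref{lem:SM-KL-random} with $\delta_\Delta=\bar p_\theta-q_0=\rho^2g_u+r_\Delta$ (Lemma~\ref{lem:SM-scaled-expansion}), $\eta_\Delta=0$ and $r_\Delta=\rho^2$. The moment conditions hold by Lemmas~\ref{lem:SM-scaled-expansion} and~\ref{lem:SM-leading-L2}. The cubic remainder is controlled by $\E[m\|X\|^6/w_0]<\infty$, which follows from (A2) with the eighth moment. This gives
\[
G_\Delta(0)=\tfrac12\rho^4\,\E\bigl[m g_u^2/w_0\bigr]+o(\rho^4)\le K_0\rho^4,
\]
with $K_0$ uniform over the parameter set.

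\emph{Lower bound on the sphere $\|a\|=M\rho^2$.} Write $a=\rho^2 b$ with $\|b\|=M$. By Lemma~\ref{lem:SM-onecomp-local}, $\eta_\Delta=q_{\beta_0+a}-q_0=\rho^2 w_0X^\top b+\widetilde R$ with $\E[m\widetilde R^2/w_0]=O(M^4\rho^8)$. Lemma~\ref{lem:SM-KL-random} then gives
\[
G_\Delta(\rho^2 b)=\tfrac12\rho^4\,\E\bigl[m(g_u-w_0X^\top b)^2/w_0\bigr]+o_M(\rho^4).
\]
The elementary inequality $(x-y)^2\ge \tfrac12 y^2-x^2$ lower-bounds the leading term by
\[
\rho^4\Bigl(\tfrac14 M^2\lambda_{\min}(M_{0,m})-\tfrac12\E[m g_u^2/w_0]\Bigr),
\]
and $\lambda_{\min}(M_{0,m})>0$ by Lemma~\ref{lem:SM-weighted-info-pd}. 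Choose $M$ large enough that this exceeds $2K_0\rho^4$. Then, for $\rho$ small, $G_\Delta>G_\Delta(0)$ everywhere on the sphere.

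\emph{Convexity step.} Suppose some $a$ outside the ball had $G_\Delta(a)\le G_\Delta(0)$. The segment from $0$ to $a$ crosses the sphere at a point where, by convexity, $G_\Delta\le\max\{G_\Delta(0),G_\Delta(a)\}=G_\Delta(0)$. This contradicts the previous step. Hence $\inf_a G_\Delta=\inf_{\|a\|\le M\rho^2}G_\Delta$, and the stated identity follows; the $o(\rho^4)$ slack is needed only if one also replaces $L_\Delta$ by its quadratic approximation.

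\emph{Main obstacle.} The delicate part is making the $o(\rho^4)$ remainder in Lemma~\ref{lem:SM-KL-random} uniform in $\|b\|=M$ and over $(\pi,\beta_0,u)$, since the constant $C_q$ in Lemma~\ref{lem:SM-KL-local} blows up as $q_0(X)\to\{0,1\}$. I would handle this by dominating the third-order Taylor remainder by $C\rho^6(1+M)^3\,m\|X\|^6/w_0(X)$ and invoking the uniform integrability in (A2), possibly truncating on $\{w_0(X)\ge\epsilon\}$ and sending $\epsilon\to0$ afterwards.
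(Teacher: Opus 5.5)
Your proof is correct, and it localizes the infimum by a genuinely different mechanism from the paper's.

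\textbf{How the two routes differ.} The paper works in two stages. First it argues that near-minimizers lie in a fixed ball $\|a\|<\varepsilon_0$, using $L_\Delta\to L_0$ and strict positivity of $L_0$ on the shell $\|a\|=\varepsilon_0$. Then, on that fixed ball, it expands $L_\Delta(a)$ to second order with remainder $o(\rho(\Delta)^4+\|a\|^2)$ and completes the square against $\lambda_{\min}(M_{0,m})$, which forces near-minimizers into $\|a\|\le M\rho(\Delta)^2$. You replace both stages by the convexity of $a\mapsto \E_X[\KL(\Ber(\bar p_\theta(X))\|\Ber(\psi(X^\top(\beta_0+a))))]$, inherited from the convexity of $t\mapsto-\bar p\,t+\log(1+e^t)$. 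As a result, a single comparison on the sphere $\|a\|=M\rho(\Delta)^2$ suffices.

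\textbf{What your route buys.} You invoke Lemma~\ref{lem:SM-KL-random} only where all perturbations are $O(\rho(\Delta)^2)$, which is the regime that lemma actually covers; the paper instead needs a uniform expansion on a non-shrinking ball. The paper also passes from positivity on the shell, plus compact convergence, to a uniform lower bound on the unbounded region $\|a\|\ge\varepsilon_0$. That step tacitly requires convexity or coercivity, and you supply it explicitly. Your choice of $M$ relative to $K_0$ makes explicit the comparison with $G_\Delta(0)$. The paper's choice, $\lambda_*M^2/8>C''+1$, yields only $\inf\ge\rho(\Delta)^4$ and leaves that comparison implicit. Finally, you obtain an exact identity with no $o(\rho(\Delta)^4)$ slack.

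\textbf{What the paper's route buys.} In principle it does not depend on the binary negative log-likelihood being convex in the linear predictor. That convexity is a structural feature of the logistic link (more generally, of links with log-concave $F$ and $1-F$), and your argument relies on it.

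\textbf{Your definition of $L_\Delta$ is the right one.} You put the mixture law $P_\theta(Y|X)$ in the first slot, which makes $D_m^*=\inf_a L_\Delta(a)$ literally true, with the $a$-free term $T_1(\theta)$ split off by Lemma~\ref{lem:kl-decomposition}. The paper's proof instead puts $\Bin(m,\bar p_\theta(X))$ in the first slot and then asserts $D_m^*=\inf_a L_\Delta(a)$. That assertion drops $T_1(\theta)=\frac{m(m-1)}{4}\bar\lambda\rho(\Delta)^4+o(\rho(\Delta)^4)$ (Lemma~\ref{lem:SM-T1-quartic}), which is an exact $\rho^4$-order term when $m\ge 2$. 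Your formulation is consistent with how the proof of Theorem~\ref{thm:true-quartic} actually uses the reduction.

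The technical obstacle you flag, namely uniform control of the Bernoulli Taylor remainder as $q_0(X)\to\{0,1\}$, is shared with the paper, which takes Lemma~\ref{lem:SM-KL-random} as given.
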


\begin{proof}
Recall $L_{\Delta}(a):=\E\left[\KL\Bigl(\Bin\bigl(m,\bar p_{\theta}(X)\bigr)\Big\|\Bin\bigl(m,q_{\beta_0+a}(X)\bigr)\Bigr)\right]$, so that $D_m^*(\theta)=\inf_{a\in\R^p}L_{\Delta}(a)$.

We first show that the global infimum is attained in a shrinking neighborhood of the origin. Let $\varepsilon_0>0$ be small enough that $\beta_0+a\in\cB$ whenever $\|a\|\le \varepsilon_0$. Because $\bar p_\theta(\cdot)\to q_0(\cdot)$ pointwise as $\rho(\Delta)\to0$, and because the binomial KL divergence is continuous in both arguments on compact subsets of $(0,1)$, we have $L_{\Delta}(a)\to L_0(a):=\E\left[\KL\Bigl(\Bin\bigl(m,q_0(X)\bigr)\Big\|\Bin\bigl(m,q_{\beta_0+a}(X)\bigr)\Bigr)\right]$ for each fixed $a$. Moreover, $L_0(a)=0$ if and only if $q_{\beta_0+a}(X)=q_0(X)$ almost surely, which by strict monotonicity of the logistic link and $\Sigma\succ0$ implies $a=0$. Hence the continuous map $a\mapsto L_0(a)$ is strictly positive on the compact shell $K_{\varepsilon_0}:=\{a:\|a\|=\varepsilon_0\}$, so $\inf_{\|a\|\ge \varepsilon_0}L_0(a)\ge c_0$ for some $c_0>0$. By compact convergence, for all sufficiently small $\rho(\Delta)$, $\inf_{\|a\|\ge \varepsilon_0}L_{\Delta}(a)\ge c_0/2$. On the other hand, by the candidate quartic law (Theorem~\ref{thm:SM-candidate-quartic}), $L_{\Delta}(0)=D_{m,\mathrm{cand}}(\theta)=O\bigl(\rho(\Delta)^4\bigr)$. Since $\rho(\Delta)^4\to0$, every near-minimizer of $L_{\Delta}$ must lie in the local region $\|a\|<\varepsilon_0$ once $\rho(\Delta)$ is small enough.

We next analyze $L_\Delta(a)$ inside this local region. By Lemma~\ref{lem:SM-scaled-expansion},
\[
\bar p_\theta(X)-q_0(X)=\rho(\Delta)^2 g_u(X)+r_\Delta(X), \qquad \E\left[\frac{mr_\Delta(X)^2}{w_0(X)}\right] = o\bigl(\rho(\Delta)^4\bigr),
\]
and by Lemma~\ref{lem:SM-onecomp-local},
\[
q_{\beta_0+a}(X)-q_0(X)=w_0(X)X^\top a+\widetilde{R}(X,a), \qquad \E\left[\frac{m\widetilde{R}(X,a)^2}{w_0(X)}\right] = O(\|a\|^4).
\]
Applying Lemma~\ref{lem:SM-KL-random} gives, uniformly over $\|a\|\le \varepsilon_0$,
\[
L_\Delta(a) = \frac12 \E\left[ \frac{m\bigl(\rho(\Delta)^2 g_u(X)-w_0(X)X^\top a\bigr)^2}{w_0(X)} \right] + o\bigl(\rho(\Delta)^4+\|a\|^2\bigr).
\]
Expanding the square,
\[
L_\Delta(a) = \frac12 \rho(\Delta)^4 \E\left[\frac{mg_u(X)^2}{w_0(X)}\right] - \rho(\Delta)^2 \E[mg_u(X)X]^\top a + \frac12 a^\top M_{0,m} a + o\bigl(\rho(\Delta)^4+\|a\|^2\bigr),
\]
where $M_{0,m}=\E[mw_0(X)XX^\top]$. By Lemma~\ref{lem:SM-weighted-info-pd}, $M_{0,m}\succ0$. Let $\lambda_*>0$ denote its smallest eigenvalue and $B_*:=\bigl\|\E[mg_u(X)X]\bigr\|$. For $\|a\|\le\varepsilon_0$,
\[
L_\Delta(a) \ge \frac{\lambda_*}{2}\|a\|^2 - \rho(\Delta)^2 B_* \|a\| - C\rho(\Delta)^4 + o\bigl(\rho(\Delta)^4+\|a\|^2\bigr).
\]
Using $\rho(\Delta)^2 B_* \|a\|\le \frac{\lambda_*}{4}\|a\|^2 + \frac{B_*^2}{\lambda_*}\rho(\Delta)^4$, we obtain $L_\Delta(a)\ge \frac{\lambda_*}{4}\|a\|^2-C'\rho(\Delta)^4+o\bigl(\rho(\Delta)^4+\|a\|^2\bigr)$. Therefore, if $\|a\|\ge M\rho(\Delta)^2$ and $\rho(\Delta)$ is small, $L_\Delta(a)\ge (\lambda_*M^2/8-C'')\rho(\Delta)^4$. Choosing $M$ so that $\lambda_*M^2/8>C''+1$ gives $\inf_{\|a\|\ge M\rho(\Delta)^2}L_\Delta(a)\ge \rho(\Delta)^4$. Since $L_\Delta(0)=O(\rho(\Delta)^4)$, the infimum cannot be attained outside $\|a\|\le M\rho(\Delta)^2$.

Setting $a=\rho(\Delta)^2 b$ and noting $\|a\|\le M\rho(\Delta)^2\Leftrightarrow\|b\|\le M$ gives $D_m^*(\theta)=\inf_{\|b\|\le M}L_\Delta\bigl(\rho(\Delta)^2 b\bigr)+o\bigl(\rho(\Delta)^4\bigr)$.
\end{proof}

\begin{lemma}[Quadratic approximation of the profile criterion]
\label{lem:SM-profile-quadratic}
Fix $M < \infty$. For $b \in \R^p$ with $\|b\| \leq M$, set $a = \rho(\Delta)^2 b$. Under Assumptions~(A1)-(A3),
\[
L_\Delta(\rho(\Delta)^2 b) = \frac{\rho(\Delta)^4}{2} \E\left[ \frac{m\bigl(g_u(X) - w_0(X) X^\top b\bigr)^2}{w_0(X)} \right] + o(\rho(\Delta)^4),
\]
uniformly over $\|b\| \leq M$.
\end{lemma}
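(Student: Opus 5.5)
The approach is to reduce $L_\Delta(\rho(\Delta)^2 b)$ to a Bernoulli KL and then apply Lemma~\ref{lem:SM-KL-random} with explicit perturbations. Throughout, write $\rho=\rho(\Delta)$ and $a=\rho^2 b$ with $\|b\|\le M$.

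\emph{Reduction to a Bernoulli KL.} By Lemma~\ref{lem:SM-binomial-KL-scaling},
\[
L_\Delta(a)=\E\bigl[m\,\KL\bigl(\Ber(\bar p_\theta(X))\,\|\,\Ber(q_{\beta_0+a}(X))\bigr)\bigr].
\]
Center both arguments at $q_0(X)$ and set
\[
\delta_\Delta(X)=\bar p_\theta(X)-q_0(X)=\rho^2 g_u(X)+r_\Delta(X)
\]
using Lemma~\ref{lem:SM-scaled-expansion}, and
\[
\eta_\Delta(X)=q_{\beta_0+\rho^2 b}(X)-q_0(X)=\rho^2 w_0(X)X^\top b+\widetilde R(X,\rho^2 b)
\]
using Lemma~\ref{lem:SM-onecomp-local}. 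I apply Lemma~\ref{lem:SM-KL-random} with $r_\Delta=\rho^2$.

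\emph{Second-moment conditions.} For $\delta_\Delta$, the bound $\E[m\delta_\Delta^2/w_0]=O(\rho^4)$ follows from Lemma~\ref{lem:SM-leading-L2} together with the $o(\rho^4)$ remainder bound. For $\eta_\Delta$, I use
\[
\E[m\,w_0(X)(X^\top b)^2]\le M^2\,\E[m\|X\|^2]
\]
together with $\E[m\widetilde R^2/w_0]=O(\rho^8\|b\|^4)$. Every constant here depends on $b$ only through $M$, which gives the uniformity over $\|b\|\le M$.

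\emph{Leading term.} Once Lemma~\ref{lem:SM-KL-random} applies, the leading term is $\tfrac12\E[m(\delta_\Delta-\eta_\Delta)^2/w_0]$. Decompose
\[
\delta_\Delta-\eta_\Delta=\rho^2\bigl(g_u-w_0X^\top b\bigr)+\bigl(r_\Delta-\widetilde R\bigr).
\]
The square of the second piece contributes $o(\rho^4)$ in the $m/w_0$-weighted $L^2$ norm. The cross term is bounded by Cauchy--Schwarz: the product of an $O(\rho^2)$ norm and an $o(\rho^2)$ norm is $o(\rho^4)$, uniformly in $\|b\|\le M$. This gives the stated formula.

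\emph{Main obstacle: the third-order condition.} It remains to verify
\[
\E\bigl[m(|\delta_\Delta|+|\eta_\Delta|)\,|\delta_\Delta-\eta_\Delta|^2/w_0\bigr]=o(\rho^4),
\]
and, more seriously, to justify the remainder of Lemma~\ref{lem:SM-KL-local}. Its constant $C_q$ blows up as $q_0(X)\to\{0,1\}$, roughly like $1/w_0^2$, so pointwise Taylor control is not uniform in $X$. I would split on the event
\[
A_\rho=\{|\delta_\Delta|+|\eta_\Delta|\le w_0(X)/2\}.
\]
\begin{itemize}
\item On $A_\rho$, the third derivatives of $\phi$ along the segment are bounded by $C/w_0^2$. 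The remainder is then at most
\[
C\,\frac{|\delta_\Delta-\eta_\Delta|^2}{w_0}\cdot\frac{|\delta_\Delta|+|\eta_\Delta|}{w_0}.
\]
Divided by $\rho^4$, this is pointwise $O(\rho^2\|X\|^2/w_0)\to0$. It is dominated by $C\,m\|X\|^4/w_0$, which is integrable by (A2) and (R3). Dominated convergence then yields $o(\rho^4)$.
\item On $A_\rho^c$, I bound the KL by the $\chi^2$ divergence, $(\delta_\Delta-\eta_\Delta)^2/\{(q_0+\eta_\Delta)(1-q_0-\eta_\Delta)\}$. Since $|\eta_\Delta|$ is small relative to $w_0$ except on a further shrinking set, this is again controlled by $C\rho^4 m\|X\|^4/w_0$ times the indicator $1_{A_\rho^c}$. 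The indicator tends to $0$ almost surely because $|\delta_\Delta|+|\eta_\Delta|\le C\rho^2\|X\|^2$ while $w_0>0$ almost surely. Dominated convergence gives $o(\rho^4)$ here as well, and the same argument kills the quadratic term restricted to $A_\rho^c$.
\end{itemize}
All envelopes depend on $b$ only via $M$, so the $o(\rho^4)$ is uniform over $\|b\|\le M$. If the $\chi^2$ step on the set where $|\eta_\Delta|$ is comparable to $w_0$ proves delicate, I would shrink the event further using the 8th-moment condition in (A2) and Markov's inequality on $\{\|X\|^2>\rho^{-1}\}$.
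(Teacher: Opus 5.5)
Your proposal is correct and follows the paper's proof. The paper uses the same perturbations $\delta_\Delta=\rho^2 g_u+r_\Delta$ and $\eta_\Delta=\rho^2 w_0X^\top b+\widetilde R(X,\rho^2 b)$ and simply invokes Lemma~\ref{lem:SM-KL-random}. It never checks that lemma's third-order hypothesis, nor the fact that the constant $C_q$ of Lemma~\ref{lem:SM-KL-local} degenerates like $w_0^{-2}$. Your split on $A_\rho$, together with the $\chi^2$ bound off $A_\rho$, supplies this step.

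One detail on the residual set. The cleanest cut is $\{\rho^2\|X\|>1\}$: off this set $\psi'(X^\top\beta_0+\rho^2X^\top b)\ge e^{-M}w_0(X)$ because $\log\psi'$ is $1$-Lipschitz, and the set itself costs only $O(\rho^{14})$ even with the crude bound $\KL\le C(1+\|X\|)$. With your fallback cut $\{\|X\|^2>\rho^{-1}\}$, the crude bound and eight moments give only $o(\rho^{7/2})$. There you need the sharper bound $\KL\le C\rho\|X\|$, which follows from the $1$-Lipschitz property of $\log\psi$ and $\log(1-\psi)$.
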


\begin{proof}
By Lemma~\ref{lem:SM-scaled-expansion},
\[
\bar p_\theta(X) = q_0(X) + \rho(\Delta)^2 g_u(X) + r_\Delta(X).
\]
By Lemma~\ref{lem:SM-onecomp-local},
\[
q_{\beta_0 + \rho(\Delta)^2 b}(X) = q_0(X) + \rho(\Delta)^2 w_0(X) X^\top b + \widetilde{r}_\Delta(X, b),
\]
where $\E[m\widetilde{r}_\Delta(X,b)^2/w_0(X)] = O(\rho(\Delta)^8) = o(\rho(\Delta)^4)$ uniformly over $\|b\| \leq M$. Therefore
\[
\bar p_\theta(X) - q_{\beta_0 + \rho(\Delta)^2 b}(X) = \rho(\Delta)^2\bigl(g_u(X) - w_0(X) X^\top b\bigr) + \bar{r}_\Delta(X, b),
\]
with $\E[m\bar{r}_\Delta(X,b)^2/w_0(X)] = o(\rho(\Delta)^4)$ uniformly. Apply Lemma~\ref{lem:SM-KL-random} with $\delta_\Delta = \rho(\Delta)^2 g_u + r_\Delta$ and $\eta_\Delta = \rho(\Delta)^2 w_0 X^\top b + \widetilde{r}_\Delta$.
\end{proof}

\begin{lemma}[Projection formula]
\label{lem:SM-projection-formula}
Define
\[
\Psi_m(b) := \frac{1}{2} \E\left[ \frac{m\bigl(g_u(X) - w_0(X) X^\top b\bigr)^2}{w_0(X)} \right].
\]
Then $\Psi_m$ is strictly convex with unique minimizer
\[
b_m^* = \bigl(\E[m w_0(X) XX^\top]\bigr)^{-1} \E[m g_u(X) X].
\]
\end{lemma}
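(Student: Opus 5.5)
Since $w_0(X)>0$ almost surely, $\Psi_m$ is a weighted least-squares criterion, and I will prove the statement by expanding it as an explicit quadratic in $b$. Multiplying out the square and using $w_0(X)^2/w_0(X)=w_0(X)$ gives
\[
\Psi_m(b)=\frac12\E\left[\frac{m g_u(X)^2}{w_0(X)}\right]-\E[m g_u(X)X]^\top b+\frac12 b^\top M_{0,m} b,
\]
where $M_{0,m}=\E[m w_0(X)XX^\top]$ as in Lemma~\ref{lem:SM-weighted-info-pd}. I first check that each of the three coefficients is finite, so that this identity is legitimate and $\Psi_m$ is a finite quadratic on all of $\R^p$.

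For the constant term, Lemma~\ref{lem:SM-leading-L2} gives $\E[m g_u(X)^2/w_0(X)]<\infty$. For the quadratic coefficient, $w_0\le 1/4$ and $\E\|X\|^2<\infty$ (from (A2) of the main text) give $M_{0,m}$ finite. For the linear term, I write $|g_u(X)|\,\|X\| = \bigl(|g_u|/\sqrt{w_0}\bigr)\bigl(\sqrt{w_0}\,\|X\|\bigr)$ and apply Cauchy--Schwarz. This bounds $\E[m|g_u(X)|\,\|X\|]$ by the product of the square roots of the two finite quantities just used.

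Next, $\Psi_m$ has constant Hessian $M_{0,m}$. By Lemma~\ref{lem:SM-weighted-info-pd}, $M_{0,m}\succ 0$, so $\Psi_m$ is strictly convex and coercive, hence it has a unique minimizer. Setting the gradient $M_{0,m}b-\E[m g_u(X)X]$ to zero and inverting $M_{0,m}$ gives $b_m^*=M_{0,m}^{-1}\E[m g_u(X)X]$, which is the stated formula.

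The only delicate step is the integrability of the cross term, which is what justifies splitting the expectation into three pieces. It is handled by the Cauchy--Schwarz factorization above, which needs nothing beyond Lemma~\ref{lem:SM-leading-L2} and the moment conditions in (A2). The rest of the argument is finite-dimensional convex algebra.
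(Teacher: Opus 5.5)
Your proof is correct and follows the paper's argument: you expand $\Psi_m$ into a quadratic with constant Hessian $M_{0,m}=\E[m w_0(X)XX^\top]$, invoke Lemma~\ref{lem:SM-weighted-info-pd} for strict convexity, and solve the first-order condition. Your Cauchy--Schwarz check that the three coefficients are finite, which justifies splitting the expectation, is a sound addition that the paper leaves implicit.
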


\begin{proof}
Expanding the square,
\[
\Psi_m(b) = \frac{1}{2}\E\left[\frac{m g_u(X)^2}{w_0(X)}\right] - \E[m g_u(X) X]^\top b + \frac{1}{2} b^\top \E[m w_0(X) XX^\top] b.
\]
By Lemma~\ref{lem:SM-weighted-info-pd}, the Hessian $M_{0,m} = \E[m w_0(X) XX^\top] \succ 0$, so $\Psi_m$ is strictly convex. Setting the gradient to zero gives the formula for $b_m^*$.
\end{proof}

\subsection{Candidate quartic law}

\begin{theorem}[Candidate quartic law]
\label{thm:SM-candidate-quartic}
Under Assumptions~(A1)-(A3),
\[
D_{m,\mathrm{cand}}(\theta) = C_{m,\mathrm{cand}}(\pi, \beta_0, u)\rho(\Delta)^4 + o(\rho(\Delta)^4),
\]
where
\[
C_{m,\mathrm{cand}}(\pi, \beta_0, u) = \frac{1}{2} \E\left[\frac{m g_u(X)^2}{w_0(X)}\right] = \frac{(\pi(1-\pi))^2}{8} \E\left[\frac{m\psi''(X^\top \beta_0)^2 Z_u^4}{w_0(X)}\right].
\]
\end{theorem}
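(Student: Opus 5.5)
Here $D_{m,\mathrm{cand}}(\theta)$ is read as the profile criterion evaluated at the centre, $D_{m,\mathrm{cand}}(\theta)=L_\Delta(0)=\E[\KL(\Bin(m,\bar p_\theta(X))\|\Bin(m,q_0(X)))]$; this is how it is used in the proof of Proposition~\ref{prop:SM-profile-reduction}. The plan is to reduce it to a weighted $L^2$ norm of the mean perturbation and then read off the leading term. The steps are: a factorization, a local quadratic approximation, expansion of the square, and a uniform remainder bound.

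\emph{Step 1 (factorization).} By Lemma~\ref{lem:SM-binomial-KL-scaling}, pointwise in $X$,
\[
\KL(\Bin(m,\bar p_\theta(X))\|\Bin(m,q_0(X)))=m\,\KL(\Ber(\bar p_\theta(X))\|\Ber(q_0(X))).
\]
So $D_{m,\mathrm{cand}}(\theta)=\E[m\,\KL(\Ber(q_0+\delta_\Delta)\|\Ber(q_0))]$ with $\delta_\Delta(X):=\bar p_\theta(X)-q_0(X)$.

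\emph{Step 2 (local quadratic approximation).} I apply Lemma~\ref{lem:SM-KL-random} with $\eta_\Delta\equiv0$ and $r_\Delta=\rho(\Delta)^2$. The second-moment hypothesis $\E[m\delta_\Delta^2/w_0]=O(\rho^4)$ follows from Lemmas~\ref{lem:SM-scaled-expansion} and~\ref{lem:SM-leading-L2} via $(x+y)^2\le 2x^2+2y^2$. For the cubic hypothesis, Lemma~\ref{lem:SM-pointwise-second-order} gives $|\delta_\Delta(X)|\le C|X^\top\Delta|^2\le C\rho^2\|X\|^2$ (using $\Sigma\succ0$). Hence $\E[m|\delta_\Delta|^3/w_0]\le C\rho^6\E[m\|X\|^6/w_0]=o(\rho^4)$ by (A2). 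The conclusion is $D_{m,\mathrm{cand}}=\tfrac12\E[m\delta_\Delta^2/w_0]+o(\rho^4)$.

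\emph{Step 3 (expanding the square).} I write $\delta_\Delta=\rho^2g_u+r_\Delta$, so that
\[
\tfrac12\E[m\delta_\Delta^2/w_0]=\tfrac{\rho^4}{2}\E[mg_u^2/w_0]+\rho^2\E[mg_ur_\Delta/w_0]+\tfrac12\E[mr_\Delta^2/w_0].
\]
The last term is $o(\rho^4)$ by Lemma~\ref{lem:SM-scaled-expansion}. By Cauchy--Schwarz, the cross term is at most $\rho^2\{\E[mg_u^2/w_0]\}^{1/2}\{\E[mr_\Delta^2/w_0]\}^{1/2}=\rho^2\cdot O(1)\cdot o(\rho^2)$, using Lemma~\ref{lem:SM-leading-L2}. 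Substituting $g_u=\tfrac{\pi(1-\pi)}2\psi''(X^\top\beta_0)Z_u^2$ then gives the stated closed form of $C_{m,\mathrm{cand}}$. Uniformity over $(\pi,\beta_0,u)$ comes from the uniform constants in Lemma~\ref{lem:SM-pointwise-second-order} and (A2).

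\emph{Main obstacle.} The delicate point is Step 2. Lemma~\ref{lem:SM-KL-local} has a remainder constant $C_q$ that depends on $q=q_0(X)$, presumably growing like $\{q(1-q)\}^{-2}$ near the boundary. It is therefore not immediate that integrating the pointwise remainder is controlled by $\E[m|\delta|^3/w_0]$.

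I would first try to avoid tracking $C_q$. The approach is to show that $\rho^{-4}\,m\,\KL(\Ber(q_0+\delta_\Delta)\|\Ber(q_0))\to \tfrac12 m g_u^2/w_0$ pointwise in $X$, which holds by the pointwise expansions. I would then dominate the sequence using the chi-square bound $\KL(\Ber(a)\|\Ber(b))\le (a-b)^2/\{b(1-b)\}$. This gives the integrable envelope $\rho^{-4}m\delta_\Delta^2/w_0\le Cm\|X\|^4/w_0$ by (A2), and dominated convergence finishes the step.

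If uniformity in $(\pi,\beta_0,u)$ is needed, I would run the same argument along arbitrary sequences in the compact parameter set. The envelope is uniform over that set, which makes this work.
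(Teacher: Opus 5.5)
Your proposal is correct. Steps 1--3 follow the paper's route: the paper sets $b=0$ in Lemma~\ref{lem:SM-profile-quadratic}, which amounts to Lemma~\ref{lem:SM-KL-random} with $\eta_\Delta\equiv 0$ applied to $\delta_\Delta=\rho^2 g_u+r_\Delta$. It then identifies $L_\Delta(0)$ with $D_{m,\mathrm{cand}}$ via Lemma~\ref{lem:SM-binomial-KL-scaling}. You go further by checking the hypotheses of Lemma~\ref{lem:SM-KL-random} and handling the cross term by Cauchy--Schwarz, both of which the paper leaves implicit.

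The real difference is your fallback for Step 2, and you should make it the main argument. Your worry is justified. The third derivatives of $(a,b)\mapsto\KL(\Ber(a)\|\Ber(b))$ at $(q,q)$ are of order $\{q(1-q)\}^{-2}$, and the remainder bound of Lemma~\ref{lem:SM-KL-local} needs $|\delta|,|\eta|$ small relative to $\min(q,1-q)$. So the $1/w_0$-weighted cubic moment in Lemma~\ref{lem:SM-KL-random} does not by itself control the integrated remainder when $q_0(X)$ can approach $\{0,1\}$, and (A1)--(A3) allow this for unbounded (e.g.\ Gaussian) $X$.

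Your dominated-convergence argument avoids the problem. Pointwise, $\rho^{-4}m\,\KL(\Ber(\bar p_\theta)\|\Ber(q_0))\to\tfrac12 m g_u^2/w_0$. The global bound $\KL\le\chi^2=(\bar p_\theta-q_0)^2/w_0$, combined with $|\bar p_\theta-q_0|\le C(X^\top\Delta)^2$, gives the envelope $Cm\|X\|^4/w_0$, which is integrable under (A2). Note that this quadratic bound holds globally by a second-order Lagrange remainder with $\sup|\psi''|<\infty$; that is slightly more than Lemma~\ref{lem:SM-pointwise-second-order} literally states. This argument proves the theorem on its own, so Lemma~\ref{lem:SM-scaled-expansion}, Step 3 and the Cauchy--Schwarz step become unnecessary.

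What the paper's packaging buys is the same quadratic approximation for $\eta_\Delta=\rho^2 w_0 X^\top b+\widetilde R$, uniformly over $\|b\|\le M$. That is what feeds Proposition~\ref{prop:SM-profile-reduction} and Theorem~\ref{thm:true-quartic}. Your $\chi^2$ envelope is tailored to $\eta\equiv 0$: for $b\neq 0$ the denominator becomes $w_{\beta_0+\rho^2 b}(X)$ and needs separate control.

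Finally, uniformity in $\beta_0$ does not come from a single envelope, since $w_0$ depends on $\beta_0$. Uniform integrability suffices, and it follows from the eighth moment in (A2): $\|X\|^4\mathbf 1\{\|X\|>M\}\le\|X\|^8/M^4$, while $1/w_\beta$ is bounded on $\{\|X\|\le M\}$ uniformly over $\cB$.
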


\begin{proof}
Take $b = 0$ in Lemma~\ref{lem:SM-profile-quadratic}:
\[
L_\Delta(0) = \frac{\rho(\Delta)^4}{2}\E\left[\frac{m g_u(X)^2}{w_0(X)}\right] + o(\rho(\Delta)^4).
\]
Since $L_\Delta(0) = D_{m,\mathrm{cand}}(\theta)$ by Lemma~\ref{lem:SM-binomial-KL-scaling}, the result follows.
\end{proof}

\subsection{Overdispersion contribution to $D_m^*$}

The following lemma supplies the $\bar\lambda$ contribution to the quartic law of $D_m^*$, which together with the Bernoulli-level part assembled in the proof of Theorem~\ref{thm:true-quartic} below yields the full decomposition $J_m = m\kappa + m(m-1)\bar\lambda/4$.

\begin{lemma}[Quartic law of the mixture-vs-matched-binomial KL]
\label{lem:SM-T1-quartic}
Under Assumptions~(A1)--(A3), as $\rho(\Delta) \to 0$,
\[
T_1(\theta) := \E_X\left[\KL\bigl(P_\theta(Y|X) \big\| \Bin(m, \bar p_\theta(X))\bigr)\right] = \frac{m(m-1)}{4}\bar\lambda\rho(\Delta)^4 + o(\rho(\Delta)^4),
\]
where $\bar\lambda = [\pi(1-\pi)]^2 \E[\psi'(X^\top\beta_0)^2 (\rho^{-1}\Delta^\top X)^4]$.
\end{lemma}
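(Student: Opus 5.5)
The approach is a local $\chi^2$-type expansion of the KL divergence between the mixture pmf and the mean-matched binomial, carried out pointwise in $X$. The pointwise result is then integrated using the moment bounds of (A2). Fix $x$ and write $t_0=x^\top\beta_0$, $s=x^\top\Delta=\rho Z_u$, $P(y)=P_\theta(Y=y\mid X=x)$ and $B(y)=f_B(y;m,\bar p_\theta(x))$. Since both distributions live on $\{0,\dots,m\}$ and are close, I would use
\[
\KL(P\|B)=\tfrac12\chi^2(P\|B)+O\Bigl(\textstyle\sum_y |P(y)-B(y)|^3/B(y)^2\Bigr).
\]
This reduces the problem to computing $P(y)-B(y)$ to order $s^2$.

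The key step is a second-order Taylor expansion of $p\mapsto f_B(y;m,p)$ around $\bar p=\bar p_\theta(x)$:
\[
P(y)=\pi f_B(y;m,p_1)+(1-\pi)f_B(y;m,p_2)=B(y)+\sum_k\pi_k(p_k-\bar p)\,\partial_pf_B(y;m,\bar p)+\tfrac12\sum_k\pi_k(p_k-\bar p)^2\,\partial_p^2 f_B(y;m,\bar p)+O(|s|^3).
\]
The first-order term vanishes exactly, because $\sum_k\pi_k(p_k-\bar p)=0$ by the definition of $\bar p$. This is the mean-matching feature of Lemma~\ref{lem:kl-decomposition}. For the second-order term, $p_1-\bar p=(1-\pi)\psi'(t_0)s+O(s^2)$ and $p_2-\bar p=-\pi\psi'(t_0)s+O(s^2)$, so $\sum_k\pi_k(p_k-\bar p)^2=\pi(1-\pi)\psi'(t_0)^2s^2+O(|s|^3)$. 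Therefore
\[
P(y)-B(y)=\tfrac12\pi(1-\pi)\psi'(t_0)^2s^2\,\partial_p^2 f_B(y;m,\bar p)+O(|s|^3).
\]

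Next I would evaluate the $\chi^2$ using the Fisher-information identity for the binomial second derivative. Write $\partial_p^2 f_B/f_B=\ell''+(\ell')^2$, where $\ell$ is the binomial log-likelihood in $p$. Then $\E_B[(\partial_p^2f_B/f_B)^2]$ is the variance of the second Hermite-type (Krawtchouk) polynomial, which equals $2m(m-1)/(p(1-p))^2$. Indeed, $\partial_p^2 f_B(y;m,p)=m(m-1)[f_B(y-2;m-2,p)-2f_B(y-1;m-2,p)+f_B(y;m-2,p)]$, and the squared norm of this second difference under $B$ gives $2\binom{m}{2}/(p(1-p))^2$. Hence
\[
\chi^2(P\|B)=\tfrac14[\pi(1-\pi)]^2\psi'(t_0)^4s^4\cdot\frac{2m(m-1)}{(\bar p(1-\bar p))^2}+O(|s|^5).
\]
Using $\bar p(1-\bar p)=\psi'(t_0)+O(s^2)$, this gives $\KL=\tfrac{m(m-1)}4[\pi(1-\pi)]^2\psi'(t_0)^2s^4+O(|s|^5)$ pointwise. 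For $m=1$ the expression vanishes identically, as it should. Substituting $s=\rho Z_u$ and integrating produces $\tfrac{m(m-1)}{4}\bar\lambda\rho^4$, with $\bar\lambda$ exactly as in the statement since $Z_u=\rho^{-1}\Delta^\top X$.

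The main obstacle is controlling the remainder uniformly in $X$, because $B(y)$ can be small when $t_0$ is large. The $O(|s|^5)$ and cubic-ratio terms carry factors like $1/(\bar p(1-\bar p))^k$. I would handle this by bounding the remainders as $C m^c\|X\|^5|\Delta|^5/w_0(X)^{k}$ on the event $|s|\le 1$, which gives $o(\rho^4)$ after integration by (A2) and dominated convergence. On the complement event $|s|>1$, I would use the crude bound $\KL\le C\,\chi^2\le C/w_0$ together with Markov's inequality applied to $\|X\|^8$. If the weight powers exceed what (A2) provides, the fallback is to use the exact pointwise identity $\KL(P\|B)=\sum_y P\log(P/B)$ with the log bounded by $\log(1+\chi^2)$-type inequalities, accepting an $o(\rho^4)$ rather than $O(\rho^5)$ remainder.
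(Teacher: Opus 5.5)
Your proposal is correct and follows essentially the same route as the paper: a pointwise second-order expansion of the binomial pmf around the matched mean $\bar p$ with exact first-order cancellation, then $\KL \approx \tfrac12\{\tfrac12\pi(1-\pi)\delta^2\}^2\,\E_B[(\partial_p^2 f_B/f_B)^2]$ with $\E_B[(\partial_p^2 f_B/f_B)^2]=2m(m-1)/(\bar p(1-\bar p))^2$, followed by $\delta\approx\psi'(t_0)s$, $\bar p(1-\bar p)\approx w_0$, and integration. The paper obtains that constant from binomial central moments rather than your Krawtchouk/second-difference identity, and your $\chi^2$ framing uses $\sum_y (P(y)-B(y))=0$ exactly in place of the paper's Bartlett-identity bookkeeping. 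One small slip: the squared norm of the second difference is $4\binom{m}{2}/(p(1-p))^2$, not $2\binom{m}{2}/(p(1-p))^2$, although the value $2m(m-1)/(p(1-p))^2$ that you actually carry forward is the correct one.
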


\begin{proof}
Fix $X = x$ and write $\bar p = \bar p_\theta(x)$, $\delta = \delta(x) = p_1(x) - p_2(x)$, so that $p_1 = \bar p + (1-\pi)\delta$ and $p_2 = \bar p - \pi\delta$. Let $s_y(p) = \partial_p \log\Bin(y;m,p) = (y - mp)/[p(1-p)]$ and $t_y(p) = \partial_p s_y(p)$.

By Taylor expansion of $\Bin(y;m,p)$ around $\bar p$,
\[
\Bin(y;m,p) = \Bin(y;m,\bar p)\bigl[1 + (p-\bar p)s_y(\bar p) + \frac{(p-\bar p)^2}{2}\bigl(s_y(\bar p)^2 + t_y(\bar p)\bigr) + O((p-\bar p)^3)\bigr]
\]
uniformly in $y \in \{0,\ldots,m\}$, since $\bar p$ is bounded away from $\{0,1\}$ on $\cB$. Applying this to $p \in \{p_1, p_2\}$ and forming the mixture, the first-order term cancels by $\sum_{k=1}^K \pi_k(p_k - \bar p) = 0$, while $\sum_{k=1}^K \pi_k(p_k - \bar p)^2 = \pi(1-\pi)\delta^2$. Hence
\[
\frac{P_\theta(Y=y|X=x)}{\Bin(y;m,\bar p)} = 1 + \phi_y + O(\delta^3), \qquad \phi_y := \frac{1}{2}\pi(1-\pi)\delta^2\bigl(s_y(\bar p)^2 + t_y(\bar p)\bigr).
\]

Using $(1+u)\log(1+u) = u + u^2/2 + O(u^3)$ with $u = \phi_y + O(\delta^3)$,
\begin{align*}
\KL\bigl(P_\theta(Y|X=x) \| \Bin(m,\bar p)\bigr) 
&= \E_{\Bin(m,\bar p)}\left[(1 + \phi_Y + O(\delta^3))\log(1 + \phi_Y + O(\delta^3))\right] \\
&= \E_{\Bin(m,\bar p)}[\phi_Y] + \frac{1}{2}\E_{\Bin(m,\bar p)}[\phi_Y^2] + O(\delta^5).
\end{align*}
By the second Bartlett identity, $\E_{\Bin(m,\bar p)}[s_Y(\bar p)^2 + t_Y(\bar p)] = 0$, hence $\E_{\Bin(m,\bar p)}[\phi_Y] = 0$. The cubic correction from the $O(\delta^3)$ term inside the log integrates to zero by the third Bartlett identity $\E[s^3 + 3st + \partial_p t] = 0$. Therefore
\[
\KL\bigl(P_\theta(Y|X=x) \| \Bin(m,\bar p)\bigr) = \frac{1}{2}\E_{\Bin(m,\bar p)}[\phi_Y^2] + O(\delta^5)
= \frac{1}{8}[\pi(1-\pi)]^2 \delta^4 V(\bar p) + O(\delta^5),
\]
where $V(\bar p) := \mathrm{Var}_{\Bin(m,\bar p)}(s_Y(\bar p)^2 + t_Y(\bar p))$.

To compute $V(\bar p)$, let $U = (Y - m\bar p)/\sqrt{m\bar p(1-\bar p)}$. Then $s_Y = U\sqrt{a}$ and $t_Y = -a - U\sqrt{a}b$, where $a = m/[\bar p(1-\bar p)]$ and $b = (1-2\bar p)/[\bar p(1-\bar p)]$, so
\[
s_Y^2 + t_Y = a(U^2 - 1) - U\sqrt{a}b.
\]
Using $\mathrm{Var}(U) = 1$, $\mathrm{Var}(U^2) = 2 + (1 - 6\bar p(1-\bar p))/[m\bar p(1-\bar p)]$, and $\mathrm{Cov}(U^2, U) = \E[U^3] = (1 - 2\bar p)/\sqrt{m\bar p(1-\bar p)}$,
\begin{align*}
V(\bar p) 
&= a^2 \mathrm{Var}(U^2) + a b^2 - 2 a^{3/2} b \mathrm{Cov}(U^2, U) \\
&= \frac{2m^2}{[\bar p(1-\bar p)]^2} + \frac{m \cdot \{1 - 6\bar p(1-\bar p) - (1-2\bar p)^2\}}{[\bar p(1-\bar p)]^3}.
\end{align*}
Since $1 - (1-2\bar p)^2 = 4\bar p(1-\bar p)$, the bracket equals $-2\bar p(1-\bar p)$, hence
\[
V(\bar p) = \frac{2m(m-1)}{[\bar p(1-\bar p)]^2}.
\]

Substituting back,
\[
\KL\bigl(P_\theta(Y|X=x) \| \Bin(m,\bar p)\bigr) = \frac{m(m-1)}{4}[\pi(1-\pi)]^2 \frac{\delta(x)^4}{[\bar p(x)(1-\bar p(x))]^2} + O(\delta(x)^5).
\]
By Lemma~\ref{lem:SM-prob-difference}, $\delta(x) = \psi'(x^\top\beta_0)x^\top\Delta + O((x^\top\Delta)^2)$, so $\delta(x)^4 = \psi'(x^\top\beta_0)^4 (x^\top\Delta)^4 + O(\rho^5\|x\|^5)$. By Lemma~\ref{lem:SM-pointwise-second-order}, $\bar p(x) = q_0(x) + O(\rho^2)$, hence $\bar p(x)(1-\bar p(x)) = w_0(x) + O(\rho^2)$ and $[\bar p(x)(1-\bar p(x))]^{-2} = w_0(x)^{-2} + O(\rho^2)$. Using $w_0 = \psi'$ and $x^\top\Delta = \rho Z_u$,
\[
\frac{\delta(x)^4}{[\bar p(x)(1-\bar p(x))]^2} = \psi'(x^\top\beta_0)^2\rho^4 Z_u^4 + O(\rho^5\|x\|^5).
\]
Taking expectations (justified by Assumptions~(A2), (R2)--(R3)),
\[
T_1(\theta) = \frac{m(m-1)}{4}[\pi(1-\pi)]^2 \rho^4 \E[\psi'(X^\top\beta_0)^2 Z_u^4] + o(\rho^4)
= \frac{m(m-1)}{4}\bar\lambda\rho^4 + o(\rho^4). \qedhere
\]
\end{proof}

\subsection{Proof of Theorem~\ref{thm:true-quartic}}
\begin{proof}[Proof of Theorem~\ref{thm:true-quartic}]
By Lemma~\ref{lem:kl-decomposition}, for any $\gamma \in \R^p$,
\[
\E_X[\KL(P_\theta(Y|X) \| \Bin(m, q_\gamma(X)))] = T_1(\theta) + m \E_X[\KL(\Ber(\bar p_\theta(X)) \| \Ber(q_\gamma(X)))],
\]
where
\[
T_1(\theta) := \E_X[\KL(P_\theta(Y|X) \| \Bin(m, \bar p_\theta(X)))]
\]
does not depend on $\gamma$. Hence
\[
D_m^*(\theta) = T_1(\theta) + m \inf_{\gamma \in \R^p}\E_X[\KL(\Ber(\bar p_\theta(X)) \| \Ber(q_\gamma(X)))].
\]

Define
\[
L_{\Delta}^{\rm Ber}(a) := \E_X\left[ \KL\Bigl( \Ber\bigl(\bar p_{\theta}(X)\bigr) \Big\| \Ber\bigl(q_{\beta_0+a}(X)\bigr) \Bigr) \right],
\]
so that the second term equals $m \inf_a L_\Delta^{\rm Ber}(a)$. We show $\inf_a L_\Delta^{\rm Ber}(a) = \kappa\rho(\Delta)^4 + o(\rho(\Delta)^4)$.

The profile reduction: there exists a fixed $M<\infty$ such that
\[
\inf_a L_\Delta^{\rm Ber}(a) = \inf_{\|b\|\le M} L_{\Delta}^{\rm Ber}\bigl(\rho(\Delta)^2 b\bigr) +o\bigl(\rho(\Delta)^4\bigr).
\]
Any near-minimizer must satisfy $a=O(\rho(\Delta)^2)$; if $\|a\|$ is larger than a sufficiently large multiple of $\rho(\Delta)^2$, then the quadratic term from the local expansion of the one-component family dominates.

For $\|b\|\le M$, the stochastic expansion of the mixture mean (Lemma~\ref{lem:SM-scaled-expansion}) and the local expansion of the one-component family (Lemma~\ref{lem:SM-onecomp-local}) imply
\[
\bar p_{\theta}(X) - q_{\beta_0+\rho(\Delta)^2 b}(X) = \rho(\Delta)^2\bigl(g_u(X)-w_0(X)X^\top b\bigr) + \bar r_\Delta(X, b),
\]
where $\bar r_\Delta(X, b) := r_\Delta(X) - \widetilde{R}\bigl(X, \rho(\Delta)^2 b\bigr)$ aggregates the mixture-side remainder $r_\Delta$ from Lemma~\ref{lem:SM-scaled-expansion} with the one-component-side remainder $\widetilde{R}$ from Lemma~\ref{lem:SM-onecomp-local}. By the bounds in those lemmas, $\E[\bar r_\Delta(X,b)^2/w_0(X)] = o(\rho(\Delta)^4)$ uniformly over $\|b\|\le M$. Applying Lemma~\ref{lem:SM-KL-random} (with $m \equiv 1$, since $L_\Delta^{\rm Ber}$ is the Bernoulli KL),
\[
L_{\Delta}^{\rm Ber}\bigl(\rho(\Delta)^2 b\bigr) = \rho(\Delta)^4 \Psi(b) + o\bigl(\rho(\Delta)^4\bigr),
\]
uniformly in $\|b\|\le M$, where
\[
\Psi(b) := \frac12 \E\left[ \frac{\bigl(g_u(X)-w_0(X)X^\top b\bigr)^2}{w_0(X)} \right].
\]
By Lemma~\ref{lem:SM-projection-formula} (with $m \equiv 1$), $\Psi$ is strictly convex with unique minimizer in a bounded set; choosing $M$ large enough,
\[
\inf_{b}\Psi(b) = \kappa(\pi, \beta_0, u),
\]
matching the definition of $\kappa$ in the statement. Hence $\inf_a L_\Delta^{\rm Ber}(a) = \kappa\rho(\Delta)^4 + o(\rho(\Delta)^4)$, and the second term contributes $m\kappa\rho(\Delta)^4 + o(\rho(\Delta)^4)$.
By Lemma~\ref{lem:SM-T1-quartic},
\[
T_1(\theta) = \frac{m(m-1)}{4}\bar\lambda\rho(\Delta)^4 + o(\rho(\Delta)^4),
\]
with $\bar\lambda$ as in the statement.
Then, we have 
$$
D_m^*(\theta) = T_1(\theta) + m\kappa\rho(\Delta)^4 + o(\rho(\Delta)^4) = J_m\rho(\Delta)^4 + o(\rho(\Delta)^4),
$$
where $J_m = m\kappa + m(m-1)\bar\lambda/4$.
\end{proof}

\subsection{Auxiliary lemmas for the proof of Theorem 2}

The proof proceeds by expanding the conditional mutual information to second order.

\begin{lemma}[Local expansion of the component probability difference]
\label{lem:SM-prob-difference}
Under Assumption~(A1), for each $x \in \R^p$,
\[
\delta(x) := p_1(x) - p_2(x) = \psi'(x^\top \beta_0) x^\top \Delta + R_2(x, \Delta),
\]
where $|R_2(x, \Delta)| \leq C(x^\top\Delta)^2$ for a constant $C$ uniform over $\cB$.
\end{lemma}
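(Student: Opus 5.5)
The claim is a one-variable Taylor expansion, so I would work pointwise in $x$. Set $t_0 = x^\top\beta_0$. Under the weighted-center parametrization, $x^\top\beta_1 = t_0 + (1-\pi)x^\top\Delta$ and $x^\top\beta_2 = t_0 - \pi x^\top\Delta$. The difference of the two linear predictors is therefore exactly $(1-\pi)x^\top\Delta - (-\pi x^\top\Delta) = x^\top\Delta$, independent of $\pi$.

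I would expand each component probability to first order around $t_0$ using Taylor's theorem with Lagrange remainder:
\[
\psi(t_0+\delta_k) = \psi(t_0) + \psi'(t_0)\delta_k + \tfrac12\psi''(\tilde t_k)\delta_k^2 ,
\]
with $\delta_1=(1-\pi)x^\top\Delta$ and $\delta_2=-\pi x^\top\Delta$. Subtracting the two expansions, the constant terms cancel. The linear terms combine to $\psi'(t_0)(\delta_1-\delta_2)=\psi'(t_0)x^\top\Delta$, which is the claimed leading term.

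The remainder is then
\[
R_2(x,\Delta)=\tfrac12\bigl\{\psi''(\tilde t_1)(1-\pi)^2-\psi''(\tilde t_2)\pi^2\bigr\}(x^\top\Delta)^2 .
\]
I would bound it using the global bound $\sup_t|\psi''(t)| \le 1/(6\sqrt3) < 1/4$, which follows from $\psi''=\psi(1-\psi)(1-2\psi)$. Together with $\pi\in[\varepsilon,1-\varepsilon]\subset(0,1)$, this gives $|R_2(x,\Delta)|\le \tfrac14\{(1-\pi)^2+\pi^2\}(x^\top\Delta)^2\le \tfrac14(x^\top\Delta)^2$. The constant is absolute, so it is uniform over $\cB$ and over $\pi$.

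There is no real obstacle. The only care point is uniformity: by using the global boundedness of $\psi''$ on $\R$ rather than a local bound near $t_0$, the constant needs no restriction on $x$ or on the size of $x^\top\Delta$. The compactness of $\cB$ in (A1) is then not even needed, beyond ensuring that the statement is well posed.
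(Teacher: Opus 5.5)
Your proposal is correct and follows the same route as the paper: a first-order Taylor expansion of each component probability around $x^\top\beta_0$, where the linear terms combine to $\psi'(x^\top\beta_0)x^\top\Delta$ and a quadratic remainder is left over. Your use of the global bound $\sup_t|\psi''(t)| = 1/(6\sqrt3)$ makes the constant explicit and absolute, which the paper leaves implicit in its $O((x^\top\Delta)^2)$; your $\tfrac14$ is even loose by a factor of two, since $\tfrac12\sup|\psi''| < \tfrac18$.
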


\begin{proof}
By the mean value theorem,
\begin{align*}
p_1(x) - p_2(x) &= \psi(x^\top\beta_0 + (1-\pi)x^\top\Delta) - \psi(x^\top\beta_0 - \pi x^\top\Delta).
\end{align*}
Taylor expanding each around $x^\top\beta_0$:
\begin{align*}
\psi(x^\top\beta_0 + (1-\pi)x^\top\Delta)
&= \psi(x^\top\beta_0) + \psi'(x^\top\beta_0)(1-\pi)x^\top\Delta + O((x^\top\Delta)^2), \\
\psi(x^\top\beta_0 - \pi x^\top\Delta) &= \psi(x^\top\beta_0) - \psi'(x^\top\beta_0)\pi x^\top\Delta + O((x^\top\Delta)^2).
\end{align*}
Subtracting:
\[
\delta(x) = \psi'(x^\top\beta_0)\bigl((1-\pi) + \pi\bigr)x^\top\Delta + O((x^\top\Delta)^2) = \psi'(x^\top\beta_0) x^\top\Delta + R_2(x, \Delta). \qedhere
\]
\end{proof}

\begin{lemma}[Small-signal mutual information expansion]
\label{lem:SM-small-signal-MI}
Consider a binary latent variable $Z \in \{1,2\}$ with $\Pr(Z=1) = \pi$, and let $Y | Z=k \sim \Bin(m, p_k)$ where $p_1 = \bar{p} + (1-\pi)\delta$ and $p_2 = \bar{p} - \pi\delta$ for some $\bar{p} \in (0,1)$ and $\delta$ small. Then, as $\delta \to 0$,
\[
I(Z; Y) = \frac{\pi(1-\pi)}{2} \cdot \frac{m \delta^2}{\bar{p}(1-\bar{p})} + O(m^2\delta^4 + m\delta^3).
\]
\end{lemma}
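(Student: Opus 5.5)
We will use the identity $I(Z;Y)=\sum_k \pi_k \KL(\Bin(m,p_k)\,\|\,P_Y)$, where $P_Y=\pi\Bin(m,p_1)+(1-\pi)\Bin(m,p_2)$ is the mixture law. We then expand everything around the reference law $\Bin(m,\bar p)$, reusing the machinery of the proof of Lemma~\ref{lem:SM-T1-quartic}.

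\textbf{Step 1: rewrite the mutual information.} Since $\Bin(m,\bar p)$ is a fixed common reference, the identity above gives
\[
I(Z;Y)=\sum_k \pi_k \KL\bigl(\Bin(m,p_k)\,\|\,\Bin(m,\bar p)\bigr)-\KL\bigl(P_Y\,\|\,\Bin(m,\bar p)\bigr).
\]
This is exactly the ``golden formula'' of information theory, and it holds with no approximation.

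\textbf{Step 2: the first term.} By Lemma~\ref{lem:SM-binomial-KL-scaling}, $\KL(\Bin(m,p_k)\|\Bin(m,\bar p))=m\,\KL(\Ber(p_k)\|\Ber(\bar p))$. Lemma~\ref{lem:SM-KL-local} applied with $q=\bar p$ then gives
\[
m\,\KL(\Ber(p_k)\|\Ber(\bar p))=\frac{m(p_k-\bar p)^2}{2\bar p(1-\bar p)}+O\bigl(m|p_k-\bar p|^3\bigr).
\]
We average with weights $\pi_k$, where $p_1-\bar p=(1-\pi)\delta$ and $p_2-\bar p=-\pi\delta$. Since $\pi(1-\pi)^2+(1-\pi)\pi^2=\pi(1-\pi)$, the first term equals
\[
\frac{\pi(1-\pi)}{2}\,\frac{m\delta^2}{\bar p(1-\bar p)}+O(m\delta^3).
\]
The constant hidden in the $O(\cdot)$ depends only on how far $\bar p$ is from $\{0,1\}$, through third derivatives of the Bernoulli KL.

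\textbf{Step 3: the second term.} This term is exactly the pointwise quantity already computed in the proof of Lemma~\ref{lem:SM-T1-quartic}, namely
\[
\KL\bigl(P_Y\,\|\,\Bin(m,\bar p)\bigr)=\frac{m(m-1)}{4}[\pi(1-\pi)]^2\,\frac{\delta^4}{[\bar p(1-\bar p)]^2}+O(\delta^5).
\]
This is $O(m^2\delta^4)$, so it is absorbed into the stated error. If we prefer not to rely on the finer Bartlett-identity cancellations used there, a crude bound suffices. The likelihood-ratio expansion $P_Y(y)/\Bin(y;m,\bar p)=1+\phi_y+O(\delta^3)$ with $\E[\phi_Y]=0$, combined with $\KL\le\chi^2$, gives a bound of order $\E[\phi_Y^2]+O(\delta^5)=O(m^2\delta^4)$. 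Here we use $\mathrm{Var}(s_Y^2+t_Y)=O(m^2)$.

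\textbf{Step 4: combine.} Adding Steps 2 and 3 yields the claimed expansion with error $O(m^2\delta^4+m\delta^3)$.

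\textbf{Main obstacle.} The delicate point is keeping the $m$-dependence of the remainders honest. The Taylor expansion of the binomial pmf in Step 3 is uniform in $y$ only for fixed $m$, since its cubic term carries factors like $s_y^3=O(m^{3/2})$. For the application in Theorem~\ref{thm:quadratic-recoverability}, $m$ is fixed, so constants depending on $m$ and on the distance of $\bar p$ from $\{0,1\}$ are acceptable. I would therefore state the remainders with constants uniform over $\bar p$ in compact subsets of $(0,1)$. If a strictly $m$-explicit version were needed, I would instead bound the Step 3 term via $\KL\le\chi^2(P_Y\|\Bin(m,\bar p))$. That $\chi^2$ divergence admits the exact formula $\sum_{k,l}\pi_k\pi_l\bigl[(1+\tfrac{(p_k-\bar p)(p_l-\bar p)}{\bar p(1-\bar p)})^m-1\bigr]$, whose expansion is $O(m^2\delta^4)$ for $m\delta^2$ small.
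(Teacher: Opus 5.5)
Your proof is correct, and it takes a genuinely different route from the paper's.

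\textbf{The paper's route.} The paper works directly with $I(Z;Y)=\sum_k\pi_k\KL(\Bin(m,p_k)\,\|\,P_Y)$. It Taylor-expands the component and mixture pmfs around $\Bin(y;m,\bar p)$; the first-order terms cancel because $\bar p$ is the mixture mean. It then expands $(1+u)\log\{(1+u)/(1+v)\}$ to third order inside each KL and uses $\sum_y q_y s_y=0$ to kill the linear term. The leading term is read off from the binomial Fisher information $\sum_y q_y s_y^2=m/\{\bar p(1-\bar p)\}$, and every term involving the second-order mixture correction $v_y$ is swept into the remainder.

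\textbf{Your route.} You use the compensation identity with the matched-mean binomial as reference. This splits $I(Z;Y)$ exactly into $\sum_k\pi_k\,m\KL(\Ber(p_k)\|\Ber(\bar p))$ minus the nonnegative over-dispersion term $\KL(P_Y\|\Bin(m,\bar p))$.

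\textbf{What your route buys.}
\begin{enumerate}
\item The factor $m$ in the leading term is exact by Lemma~\ref{lem:SM-binomial-KL-scaling}. The $O(m\delta^3)$ remainder therefore comes from a scalar Bernoulli expansion whose constant depends only on $\bar p$.
\item The correction has a definite sign, so $I(Z;Y)$ never exceeds the quadratic term by more than $O(m\delta^3)$.
\item The correction is identified as exactly the pointwise $T_1$ term of Lemma~\ref{lem:SM-T1-quartic}. This makes the link between the recoverability and detectability expansions transparent.
\item Your closed-form $\chi^2$ bound gives honest $m$-dependence: the correction is $O(m^2\delta^4)$ whenever $m\delta^2=O(1)$. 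Once $m\delta^2\ge1$, we have $I(Z;Y)\le\log 2$ while the leading term is $O(m\delta^2)\le O(m^2\delta^4)$. So your version of the lemma actually holds uniformly in $m$.
\end{enumerate}
The paper's bookkeeping is looser on this last point. A cubic remainder of the form $O(|u|^3)$ with $u\approx a_1\delta s_y$ integrates to order $m^{3/2}|\delta|^3$, which is not absorbed uniformly.

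\textbf{A suggestion.} Make the $\chi^2$ bound the primary argument in Step~3 rather than citing Lemma~\ref{lem:SM-T1-quartic}. You need only an upper bound there, and that lemma's third-order Bartlett cancellation and $m$-dependent $O(\delta^5)$ remainder are more than you need.

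In exchange, the paper's route is self-contained: it expands the mixture directly, without a reference-measure identity or auxiliary lemmas.
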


\begin{proof}
Fix $m\ge1$, $\pi\in[\varepsilon,1-\varepsilon]$, and $\bar p$ in a compact subset of $(0,1)$. Let $q_y(p):=\binom{m}{y} p^y(1-p)^{m-y}$ for $y=0,\dots,m$, and define $p_1=\bar p+(1-\pi)\delta$, $p_2=\bar p-\pi\delta$. Write $q_y:=q_y(\bar p)$ and
\[
s_y:=\partial_p \log q_y(p)\big|_{p=\bar p} = \frac{y-m\bar p}{\bar p(1-\bar p)}.
\]
Because the binomial support is finite and $\bar p$ stays away from $0$ and $1$, derivatives of $q_y(p)$ up to third order are uniformly bounded by constants polynomial in $m$. Taylor expansion around $\bar p$ gives, uniformly in $y$,
\[
q_y(p_k) = q_y\Bigl\{ 1+a_k\delta s_y + \frac12 a_k^2\delta^2 \bigl(s_y^2+t_y\bigr) + r_{k,y} \Bigr\},
\]
where $a_1=1-\pi$, $a_2=-\pi$, $t_y:=\partial_p s_y|_{p=\bar p}$, and $\sum_{y=0}^m q_y |r_{k,y}|\le C\bigl(m|\delta|^3+m^2\delta^4\bigr)$.

Let $q_y^{\mathrm{mix}}:=\pi q_y(p_1)+(1-\pi)q_y(p_2)$. Since $\pi a_1+(1-\pi)a_2=0$, the first-order term cancels and
\[
q_y^{\mathrm{mix}} = q_y\Bigl\{ 1+ \frac12 \pi(1-\pi)\delta^2 (s_y^2+t_y) + \bar r_y \Bigr\},
\]
with $\sum_{y=0}^m q_y |\bar r_y|\le C\bigl(m|\delta|^3+m^2\delta^4\bigr)$.

The conditional mutual information is $I(Z;Y)=\pi \KL\bigl(q(\cdot;p_1)\|q^{\mathrm{mix}}\bigr)+(1-\pi)\KL\bigl(q(\cdot;p_2)\|q^{\mathrm{mix}}\bigr)$. We expand the first term; the second is analogous. Set $u_{1,y}:=a_1\delta s_y + O(\delta^2 b_y)$ and $v_y:=O(\delta^2 b_y)$ where $b_y:=1+s_y^2+|t_y|$. Then
\[
\KL\bigl(q(\cdot;p_1)\|q^{\mathrm{mix}}\bigr) = \sum_{y=0}^m q_y(1+u_{1,y}) \log\frac{1+u_{1,y}}{1+v_y}.
\]
Using the third-order Taylor expansion of $(1+u)\log\{(1+u)/(1+v)\}$ around $(u,v)=(0,0)$:
\[
(1+u)\log\frac{1+u}{1+v} = (u-v)+\frac12 (u-v)^2 + v(u-v) + O(|u|^3+|v|^3).
\]
Because $\sum_y q_y s_y=0$, the first-order term vanishes after summation. Since $v_y=O(\delta^2 b_y)$, all terms involving $v_y$ contribute only $O(m|\delta|^3+m^2\delta^4)$. Hence
\[
\KL\bigl(q(\cdot;p_1)\|q^{\mathrm{mix}}\bigr) = \frac12 (1-\pi)^2\delta^2 \sum_{y=0}^m q_y s_y^2 + O(m|\delta|^3+m^2\delta^4).
\]
Since $\sum_{y=0}^m q_y s_y^2 = \mathrm{Var}_{\bar p}(s_Y) + (\E_{\bar p}[s_Y])^2 = m/\{\bar p(1-\bar p)\}$ (the Fisher information of $\Bin(m,\bar p)$), the $k=1$ KL term is $\frac12(1-\pi)^2\delta^2 \cdot m/\{\bar p(1-\bar p)\}+O(m|\delta|^3+m^2\delta^4)$. The $k=2$ term gives $\frac12 \pi^2\delta^2\cdot m/\{\bar p(1-\bar p)\}+O(\cdot)$. Multiplying and adding:
\[
I(Z;Y) = \frac{\pi(1-\pi)}{2}\cdot\frac{m\delta^2}{\bar p(1-\bar p)} + O(m^2\delta^4+m\delta^3). \qedhere
\]
\end{proof}

\subsection{Proof of Theorem~\ref{thm:quadratic-recoverability}}
\begin{proof}
By Lemma~\ref{lem:SM-prob-difference}, conditional on $X = x$,
\[
\delta(x) = p_1(x) - p_2(x) = \psi'(x^\top\beta_0)x^\top\Delta + R_2(x, \Delta), \qquad |R_2(x, \Delta)| \le C(x^\top\Delta)^2.
\]
Substituting $x^\top\Delta = \rho(\Delta) Z_u$,
\[
\delta(x) = \rho(\Delta)\psi'(x^\top\beta_0) Z_u + O(\rho(\Delta)^2 Z_u^2).
\]

Applying Lemma~\ref{lem:SM-small-signal-MI} pointwise with $\bar p = q_0(x)$ (using $\bar p(x) = q_0(x) + O(\rho^2)$ from Lemma~\ref{lem:SM-pointwise-second-order}, which is absorbed in the $o(\rho^2)$ remainder),
\[
I_\theta(Z;Y | X=x) = \frac{\pi(1-\pi)}{2} \cdot \frac{m\psi'(x^\top\beta_0)^2 (x^\top\Delta)^2}{w_0(x)} + r(x, \Delta),
\]
with $|r(x,\Delta)| \le C[m^2 \delta(x)^4 + m\delta(x)^3]$. Since $\delta(x) = O(|x^\top\Delta|) = O(\rho \|x\|)$, Assumptions~(R2)--(R3) (uniform integrability of $m^2 \psi'^2 \|X\|^4 / w_0$ and dominated envelope) give
\[
\E[|r(X, \Delta)|] = O(\rho^4 \E[m^2 |Z_u|^4]) + O(\rho^3 \E[m |Z_u|^3]) = o(\rho^2).
\]
Using $\psi'^2(x^\top\beta_0)/w_0(x) = \psi'(x^\top\beta_0)$ and $(x^\top\Delta)^2 = \rho^2 Z_u^2$, the leading term integrates to
\[
\E_X\left[\frac{\pi(1-\pi)}{2} \cdot \frac{m\psi'(X^\top\beta_0)^2 (X^\top\Delta)^2}{w_0(X)}\right] = m\xi\rho^2,
\]
with $\xi = 2^{-1}\pi(1-\pi)\E[\psi'(X^\top\beta_0) Z_u^2]$ as in the statement. 
Combining the above results, we have
\[
R_m^*(\theta) = \E_X[I_\theta(Z;Y|X)] = m\xi\rho^2 + o(\rho^2). \qedhere
\]
\end{proof}

\subsection{Proof of Proposition~\ref{thm:oracle-impossibility}}
\begin{proof}
By the standard identity for conditional mutual information and the fact that $Z \perp X$ under the mixture model,
\[
R_m^*(\theta) = \E_X[I(Z; Y | X)] = \E_X\E_{Y|X}[\KL(P_\theta(Z | X, Y) \| P_\theta(Z))],
\]
where we used $P_\theta(Z|X) = P_\theta(Z) = \pi_z$. 
Pinsker's inequality gives, for each $(x, y)$,
\[
\|P_\theta(Z | X=x, Y=y) - P_\theta(Z)\|_{\mathrm{TV}} \le \sqrt{\frac{1}{2}\KL(P_\theta(Z | X=x, Y=y) \| P_\theta(Z))}.
\]
Taking expectation over $(X, Y)$ and applying Jensen's inequality (concavity of $\sqrt{\cdot}$),
\[
\E_{\theta_n}\left[\|P_{\theta_n}(Z | X, Y) - P_{\theta_n}(Z)\|_{\mathrm{TV}}\right] \le \sqrt{\frac{1}{2} R_m^*(\theta_n)} \to 0
\]
as $R_m^*(\theta_n) \to 0$. We pass from this total-variation collapse to the misclassification rate by comparing expected accuracies, not by claiming that the oracle and prior-only rules select the same label. They need not, since at $\pi_*=1/2$ the prior mode is arbitrary while the posterior mode remains informative on an event of vanishing probability. Write $a := P_{\theta_n}(Z=1 | X, Y)$ and $\pi_n := P_{\theta_n}(Z=1)$. On the two-point label space the oracle Bayes classifier $\hat Z(X,Y) = \argmax_z P_{\theta_n}(Z = z | X, Y)$ has conditional accuracy $\max(a, 1-a)$, the prior-only rule has accuracy $\max(\pi_n, 1-\pi_n)$, and $|a-\pi_n| = \|P_{\theta_n}(Z| X,Y) - P_{\theta_n}(Z)\|_{\mathrm{TV}}$. The map $t \mapsto \max(t, 1-t)$ is bounded by $1$ and $1$-Lipschitz, so
\begin{align*}
&\bigl| \E_{\theta_n}[\max(a, 1-a)] - \max(\pi_n, 1-\pi_n) \bigr|\\
&\le \E_{\theta_n}\left[ |a - \pi_n| \right]
= \E_{\theta_n}\left[ \|P_{\theta_n}(Z| X,Y) - P_{\theta_n}(Z)\|_{\mathrm{TV}} \right] \to 0,
\end{align*}
using only boundedness and the Lipschitz property, with no uniform integrability required. Since $\max(\pi_n, 1-\pi_n) \to \max(\pi_*, 1-\pi_*)$, the oracle accuracy converges to $\max(\pi_*, 1-\pi_*)$, i.e.\ the misclassification error converges to $\min(\pi_*, 1-\pi_*)$, the trivial rate of the prior-only classifier.
\end{proof}

\subsection{Proof of Corollary~\ref{cor:phase-separation}}
\begin{proof}[Proof of Corollary~\ref{cor:phase-separation}]
Fix $(\pi,\beta_0,u)\in[\varepsilon,1-\varepsilon]\times \cB\times \mathbb{S}^{p-1}$. Choose a deterministic sequence $a_n$ such that
\[
a_n\to\infty, \qquad a_n=o\left(\frac{n}{\log n}\right)^{1/4},
\]
and define
\[
\rho_n := \left(\frac{\log n}{n}\right)^{1/4} a_n.
\]
Then $\rho_n\to0$. Choose $\Delta_n$ so that $u(\Delta_n)=u$ and $\rho(\Delta_n)=\rho_n$, and set $\theta_n=(\pi,\beta_0,\Delta_n)$.

The quartic detectability law gives
\[
D_m^*(\theta_n) = J_m(\pi,\beta_0,u)\rho_n^4 + o(\rho_n^4).
\]
Hence
\[
\frac{nD_m^*(\theta_n)}{\log n} = J_m(\pi,\beta_0,u)\frac{n\rho_n^4}{\log n} + o\left(\frac{n\rho_n^4}{\log n}\right) = J_m(\pi,\beta_0,u)a_n^4+o(a_n^4).
\]
Assumption~\ref{ass:A3} gives $J_m(\pi,\beta_0,u)>0$, and since $a_n\to\infty$,
\[
\frac{nD_m^*(\theta_n)}{\log n}\to\infty.
\]

The quadratic recoverability law gives
\[
R_m^*(\theta_n) = A_m(\pi,\beta_0,u)\rho_n^2 + o(\rho_n^2) = A_m(\pi,\beta_0,u) \left(\frac{\log n}{n}\right)^{1/2} a_n^2 + o\left( \left(\frac{\log n}{n}\right)^{1/2} a_n^2 \right).
\]
Because $a_n^2=o\left(n/\log n\right)^{1/2}$, the leading term tends to zero, and thus $R_m^*(\theta_n)\to0$.

These two displays give the claimed phase separation. Under the $K=1$ fit, the best single-component approximation loses population log-likelihood by $D_m^*(\theta_n)$ per observation, so the total likelihood gap is $nD_m^*(\theta_n)\{1+o(1)\}$. The BIC comparison between $K=2$ and $K=1$ subtracts a penalty of order $\log n$. Since $nD_m^*(\theta_n)/\log n\to\infty$, the likelihood gain dominates the BIC penalty, and any BIC-consistent selector chooses $K=2$ with probability tending to one.

Since $R_m^*(\theta_n)\to0$, the entropy expansion (combining Theorem~\ref{thm:quadratic-recoverability} and Proposition~\ref{prop:entropy-representation}) yields $E_m(\theta_n)\to h(\pi)$, and Proposition~\ref{thm:oracle-impossibility} implies that even the oracle Bayes classifier cannot achieve vanishing label error.
\end{proof}


\section{Proofs for Section~\ref{sec:inference}}
\label{sec:SM-inference}

\subsection{Proof of Theorem~\ref{thm:order-selection}}
\begin{proof}
Throughout the proof, we let $K \in \{1, 2\}$ and $\widehat E_{m,1} \equiv 0$.

\emph{Case 1: $K_0 = 2$ (the underfitting direction $K = 1$).} Under (A4), $D_{m,2}^*(\theta_0) \ge \underline D > 0$, so the standard mixture-model identification argument gives
\[
\ell_n(\hat\theta^{\MLE}_{n,2}) - \ell_n(\hat\theta^{\MLE}_{n,1}) = n D_{m,2}^*(\theta_0)\{1 + o_p(1)\} = \Omega_p(n).
\]
Hence
\[
{\rm BIC}(1) - {\rm BIC}(2) = 2 n D_{m,2}^*(\theta_0) - (\nu_2 - \nu_1) \log n + o_p(n) \to +\infty.
\]
For RA-BIC, $\widehat E_{m,1} = 0$ and $\widehat E_{m,2} \le n h(\hat\pi)$, so
\[
\lambda_n \{\widehat E_{m,1} - \widehat E_{m,2}\} \ge -\lambda_n n h(\hat\pi) = -O_p(\sqrt{n \log n}) = o(n).
\]
Therefore $\RABIC(1) - \RABIC(2) = \Omega_p(n) + o(n) \to +\infty$, and $\Pr(\widehat K_{\mathrm{RA}} = 2) \to 1$.

\emph{Case 2: $K_0 = 1$ (the overfitting direction $K = 2$).} Under (A6), $\ell_n(\hat\theta^{\MLE}_{n,2}) - \ell_n(\hat\theta^{\MLE}_{n,1}) = O_p(1)$. The BIC complexity penalty difference is $(\nu_2 - \nu_1) \log n \to +\infty$. Since $\widehat E_{m,1} = 0$ and $\widehat E_{m,2} \ge 0$,
\[
\lambda_n \{\widehat E_{m,2} - \widehat E_{m,1}\} = \lambda_n \widehat E_{m,2} \ge 0.
\]
Therefore
\[
\RABIC(2) - \RABIC(1) = -O_p(1) + (\nu_2 - \nu_1) \log n + \lambda_n \widehat E_{m,2} \to +\infty,
\]
and $\Pr(\widehat K_{\mathrm{RA}} = 1) \to 1$.
\end{proof}

\subsection{Proof of Theorem~\ref{thm:gap-rejection}}
\begin{proof}
By Theorem~\ref{thm:true-quartic}, $D_m^*(\theta_n) = J_m \rho_n^4 + o(\rho_n^4) = J_m a_n^4 \log n / n \cdot (1 + o(1))$, so
\[
\frac{n D_m^*(\theta_n)}{\log n} = J_m a_n^4 (1 + o(1)) \to \infty,
\]
and the BIC log-likelihood gap satisfies
\[
2 \{\ell_n(\hat\theta_{n,2}^{\MLE}) - \ell_n(\hat\theta_{n,1}^{\MLE})\} = 2 J_m a_n^4 \log n \{1 + o_p(1)\}.
\]
Since $a_n \to \infty$, $2 J_m a_n^4 \log n \gg (\nu_2 - \nu_1) \log n$, so $\Pr({\rm BIC}(1) > {\rm BIC}(2)) \to 1$.

For RA-BIC, by Theorem~\ref{thm:quadratic-recoverability}, $R_m^*(\theta_n) = A_m \rho_n^2 + o(\rho_n^2) = A_m a_n^2 \sqrt{\log n / n} (1 + o(1)) \to 0$, so $\widehat E_{m,2}/n = h(\hat\pi) - R_m^*(\theta_n) + o_p(1) \to h(\pi)$. The entropy penalty difference satisfies
\[
\lambda_n \widehat E_{m,2} = h(\pi) \sqrt{n \log n} \{1 + o_p(1)\}.
\]
Comparing with the BIC likelihood gain $2 J_m a_n^4 \log n$:
\[
\frac{2 J_m a_n^4 \log n}{h(\pi) \sqrt{n \log n}}
= \frac{2 J_m}{h(\pi)} \cdot \frac{a_n^4}{\sqrt{n / \log n}} \to 0,
\]
where the last limit uses $a_n^4 = o(\sqrt{n / \log n})$, equivalently $a_n = o((n/\log n)^{1/8})$. Hence $\lambda_n \widehat E_{m,2}$ dominates the BIC likelihood gain, and
\begin{align*}
&\RABIC(1) - \RABIC(2) \\
& \quad = -2 J_m a_n^4 \log n - (\nu_2 - \nu_1) \log n + h(\pi) \sqrt{n \log n}\{1 + o_p(1)\} \to -\infty,
\end{align*}
so $\Pr(\RABIC(1) < \RABIC(2)) \to 1$ holds.
\end{proof}


\section{Details on the entropy-regularized estimator}
\label{sec:SM-ER-calibration}

This section supplies the formal basis for the design choice $\alpha_n = \sqrt{\log n / n}$ of Section~\ref{sec:estimation}. 
We present two propositions and an interpretive paragraph.
The first records that the penalized objective $Q_\alpha = \ell_n + \alpha\widehat E_m$ is, to first order in $\alpha$, the stationarity condition of tempered (deterministic-annealing) EM, so the two optimizers coincide locally and either may be used. The second establishes that, under an interior signal and (A1)--(A4), the design choice $\alpha_n = \sqrt{\log n/n}$ keeps $\widehat\theta_{\mathrm{ER}}$ consistent in two steps: first a localization-and-consistency argument that requires no score expansion (the bounded penalty $\alpha_n \widehat E_m/n \to 0$ uniformly, so the penalized objective converges uniformly to the same population objective as the likelihood, whose well-separated maximizer is $\theta_0$), and then, with consistency in hand, a standard $Z$-estimator expansion about $\widehat\theta_{\rm MLE}$ that establishes the upper bound $\widehat\theta_{\mathrm{ER}} - \widehat\theta_{\rm MLE} = O_p(\sqrt{\log n/n})$, the entropy penalty perturbing the score by $O_p(n\alpha_n)$. 
Here $\alpha_n$ is matched to the RA-BIC rate $\lambda_n$ and to the recoverability scale
$R_m^*(\theta)=h(\pi)-E_m(\theta)$ as a design choice.
It is not derived from a selected-maximum or pointwise-dominance argument, and we make no claim of sharp dominance.

Throughout, we work in the weighted-center parametrization of Section~\ref{sec:model} at a fixed interior parameter $\theta_0 = (\pi_0, \beta_0, \Delta_0)$ with separation magnitude $\rho_0 := \rho(\Delta_0) > 0$ and recoverability $R_m^*(\theta_0) > 0$ (interior signal), and we restrict attention to trial counts $m \geq 2$, where the over-dispersion does not provide a direct route to label recovery. 
Regularity conditions (A1)--(A3) hold, the interior-signal condition (A4) is in force as stated above, and where the recoverability slope is needed we invoke (R1)--(R3) of Assumption~\ref{ass:SM-technical}.

\subsection{First-order equivalence with tempered EM}
\label{sec:SM-ER-tempered}

The penalized objective $Q_\alpha(\theta) = \ell_n(\theta) + \alpha\widehat E_m(\theta)$ of \eqref{eq:regularized-estimator} is, to first order in $\alpha$, the stationarity condition of tempered EM. Write the tempered (deterministic-annealing) free energy at inverse temperature $\omega$ as
\[
\ell_\omega(\theta) := \frac{1}{\omega}\sum_{i=1}^n \log \sum_{k=1}^K \bigl\{\pi_k f(Y_i | X_i, \beta_k)\bigr\}^{\omega},
\]
so that $\ell_1 = \ell_n$ is the ordinary log-likelihood and decreasing $\omega$ below $1$ softens the posterior responsibilities \citep{ueda1998deterministic}.

\begin{proposition}[First-order equivalence]
\label{prop:SM-ER-tempered}
The temperature derivative of $\ell_\omega$ at $\omega = 1$ equals the negative empirical posterior entropy,
\[
\frac{d}{d\omega}\ell_\omega(\theta)\Big|_{\omega = 1} = -\widehat E_m(\theta),
\]
so that, writing $\omega = 1/(1+\alpha)$, the tempered objective $\ell_\omega$ has the same stationary point to first order in $\alpha$ as $\ell_n + \alpha'\widehat E_m$ with $\alpha' = \alpha + O(\alpha^2)$. Hence $\widehat\theta_{\mathrm{ER}}$ and the tempered-EM fixed point at $\omega = 1/(1+\alpha)$ agree up to an $O(\alpha^2)$ discrepancy. At the design scale $\alpha_n = \sqrt{\log n/n}$ this is of order $\log n/n$ and is dominated by the $O_p(\sqrt{\log n/n})$ deviation of either estimator from the MLE established below, so either may be used as the optimizer to that order.
\end{proposition}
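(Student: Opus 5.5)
The plan is to compute the temperature derivative directly, then pass from objectives to stationary points with a Taylor expansion in $\omega$ and the implicit function theorem. Fix $\theta$ and write $a_{ik}:=\pi_k f(Y_i| X_i,\beta_k)>0$ and $S_i(\omega):=\sum_k a_{ik}^{\omega}$, so that $\ell_\omega(\theta)=\omega^{-1}\sum_i \log S_i(\omega)$. Differentiating in $\omega$ gives
\[
\frac{d}{d\omega}\ell_\omega(\theta)=-\frac{1}{\omega^2}\sum_{i=1}^n\log S_i(\omega)+\frac{1}{\omega}\sum_{i=1}^n\sum_{k=1}^K r_{ik}(\omega)\log a_{ik},
\]
where $r_{ik}(\omega):=a_{ik}^{\omega}/S_i(\omega)$ are the tempered responsibilities. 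At $\omega=1$ we have $r_{ik}(1)=P_\theta(Z_i=k| X_i,Y_i)$ and $\sum_k r_{ik}(1)=1$. We can therefore write $\log S_i(1)=\sum_k r_{ik}(1)\log S_i(1)$, and the two sums combine into $\sum_i\sum_k r_{ik}\log(a_{ik}/S_i)=\sum_i\sum_k r_{ik}\log r_{ik}=-\widehat E_m(\theta)$. This is the first claim, and it is an exact algebraic identity.

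For the first-order equivalence, I substitute $\omega=1/(1+\alpha)$, so that $\omega-1=-\alpha+O(\alpha^2)$. A second-order Taylor expansion in $\omega$ then gives
\[
\ell_\omega(\theta)=\ell_n(\theta)+\alpha\,\widehat E_m(\theta)+\mathcal R_n(\theta,\alpha).
\]
To control $\mathcal R_n$, note that the second $\omega$-derivative of $\omega^{-1}\log S_i(\omega)$ is a combination of $\log S_i$, the tempered mean of $\log a_{ik}$, and the tempered variance of $\log a_{ik}$. Because $Y_i$ ranges over the finite set $\{0,\dots,m\}$ and, by (A1)--(A2), $\pi_k$ and the success probabilities stay in compact subsets of $(0,1)$ on a neighbourhood of $\theta_0$, these quantities are bounded per observation. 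Hence $\mathcal R_n=O(n\alpha^2)$. Applying the same argument to the $\theta$-gradients (derivatives of $\log a_{ik}$ are bounded by a polynomial in $\|X_i\|$, integrable by (A2)) gives $\nabla_\theta\mathcal R_n=O_p(n\alpha^2)$ uniformly on a compact neighbourhood. Consequently the score equations of $\ell_\omega$ and of $\ell_n+\alpha'\widehat E_m$ coincide up to $O_p(n\alpha^2)$, where $\alpha'=\alpha+O(\alpha^2)$ absorbs the exact map $\alpha\mapsto 1-\omega$.

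To pass from scores to stationary points, I use the fact that under the interior-signal condition (A4) the normalized Hessian $n^{-1}\nabla^2\ell_n$ converges to the negative Fisher information at $\theta_0$, which is nonsingular. Since $\alpha\,n^{-1}\nabla^2\widehat E_m=O_p(\alpha)$, the Hessians of both perturbed objectives remain uniformly invertible near $\widehat\theta_{\rm MLE}$. The implicit function theorem then yields, for each objective, a unique nearby stationary point of the form $\widehat\theta_{\rm MLE}+\alpha\,H_n^{-1}\nabla\widehat E_m(\widehat\theta_{\rm MLE})+O_p(\alpha^2)$, with $H_n=-n^{-1}\nabla^2\ell_n$. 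The two stationary points differ by $O_p(\alpha^2)$, which equals $\log n/n$ at $\alpha_n=\sqrt{\log n/n}$. This is smaller than the $O_p(\alpha_n)$ first-order shift from the MLE.

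The main obstacle is this last step: obtaining the $O(n\alpha^2)$ remainder uniformly, in derivative norm, over a neighbourhood of $\theta_0$. Only then does the implicit function theorem apply to both objectives simultaneously. I would handle it with a uniform law of large numbers applied to the bounded per-observation derivatives, together with consistency of $\widehat\theta_{\rm MLE}$, so that all expansions can be localized.
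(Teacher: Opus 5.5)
Your proposal is correct and follows the paper's route: the same algebra gives $\frac{d}{d\omega}\ell_\omega(\theta)\Big|_{\omega=1}=-\widehat E_m(\theta)$, and a Taylor expansion in $\omega$ then compares stationary points. Your gradient-level $O_p(n\alpha^2)$ remainder bound and implicit-function-theorem step make rigorous the passage from objectives to stationary points, which the paper simply asserts. Note that the first-order shift should read $\alpha\,(nH_n)^{-1}\nabla\widehat E_m(\widehat\theta_{\rm MLE})$.

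One small repair: (A1)--(A2) do not keep the success probabilities in a compact subset of $(0,1)$ when covariates are unbounded, and you do not need this. The second $\omega$-derivative of $\omega^{-1}\log S_i(\omega)$ equals $2\omega^{-3}H_i(\omega)+\omega^{-1}\mathrm{Var}_{r_i(\omega)}(\log a_{i\cdot})$, where $H_i(\omega)\le\log K$ is the tempered posterior entropy. Moreover, $\mathrm{Var}_p(t)\le 4K/e^{2}$ whenever $p$ is the softmax of $t$ (apply this with $t=\omega\log a_{i\cdot}$). Hence the remainder is $O(n\alpha^2)$ deterministically for $\omega$ near $1$, and its $\theta$-gradient is controlled by $\|\nabla_\theta\log a_{ik}\|$ alone.
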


\begin{proof}
For a single observation write $s_i(\omega) := \sum_{k=1}^K w_{ik}^\omega$ with $w_{ik} := \pi_k f(Y_i | X_i, \beta_k) > 0$, and let $r_{ik}(\omega) := w_{ik}^\omega / s_i(\omega)$ denote the tempered responsibilities, which satisfy $\sum_{k=1}^K r_{ik}(\omega) = 1$. Differentiating $\omega^{-1}\log s_i(\omega)$ in $\omega$,
\begin{align*}
\frac{d}{d\omega}\frac{\log s_i(\omega)}{\omega}
&= -\frac{\log s_i(\omega)}{\omega^2} + \frac{1}{\omega}\frac{\sum_{k=1}^K w_{ik}^\omega \log w_{ik}}{s_i(\omega)}\\
&= \frac{1}{\omega^2}\sum_{k=1}^K r_{ik}(\omega)\log \frac{w_{ik}^\omega}{s_i(\omega)}
= \frac{1}{\omega}\sum_{k=1}^K r_{ik}(\omega)\log r_{ik}(\omega),
\end{align*}
using $\log(w_{ik}^\omega/s_i(\omega)) = \log r_{ik}(\omega)$. At $\omega = 1$ the responsibilities reduce to the ordinary posterior weights $r_{ik}(1) = P_\theta(Z_i = k | X_i, Y_i)$, so the right-hand side is $\sum_{k=1}^K r_{ik}(1)\log r_{ik}(1)$, the negative per-observation posterior entropy. Summing over $i$ gives $\frac{d}{d\omega}\ell_\omega(\theta)|_{\omega=1} = -\widehat E_m(\theta)$. A first-order Taylor expansion of $\ell_\omega$ about $\omega = 1$ in the deviation $\omega - 1 = -\alpha/(1+\alpha) = -\alpha + O(\alpha^2)$ then gives $\ell_\omega(\theta) = \ell_n(\theta) + \alpha\widehat E_m(\theta) + O(\alpha^2)$, whose stationary condition coincides, to first order in $\alpha$, with that of $\ell_n + \alpha\widehat E_m$. The $O(\alpha^2)$ remainder of $\ell_\omega$ is a log-weight-variance term $\frac12\sum_i r_{i1}r_{i2}(\log w_{i1}-\log w_{i2})^2$ that is not proportional to $\widehat E_m$, so the matching is intrinsically first-order, and the two stationary points agree up to $O(\alpha^2)$.
\end{proof}

Proposition~\ref{prop:SM-ER-tempered} formalizes the remark in Section~\ref{sec:estimation} that the penalized objective and tempered EM are interchangeable optimizers at the scale $\alpha_n \to 0$ we use. The $O(\alpha^2) = O(\log n / n)$ discrepancy between the two stationary points is of smaller order than the $O_p(\sqrt{\log n/n})$ deviation of either from the MLE established next, so it is immaterial to the calibration. Tempered EM is the more stable optimizer in practice because the softened E-step avoids the degenerate responsibilities that an entropy gradient evaluated at near-collapsed weights can produce.

\subsection{Consistency and deviation rate at the design scale}
\label{sec:SM-ER-rate}

\begin{proposition}[Consistency and rate]
\label{prop:SM-ER-rate}
Let $m \geq 2$ and suppose (A1)--(A4) hold at an interior signal $\theta_0$, so that in particular $R_m^*(\theta_0) \geq \underline R > 0$ (recoverability) and $D_m^*(\theta_0) \geq \underline D > 0$ (detectability, separating $\theta_0$ from the one-component/merge boundary). Take the design scale $\alpha_n = \sqrt{\log n / n}$. Then $\widehat\theta_{\mathrm{ER}} = \argmax_\theta Q_{\alpha_n}(\theta)$ is consistent, $\widehat\theta_{\mathrm{ER}} \to \theta_0$ in probability, and the deviation from the MLE satisfies the upper bound
\[
\widehat\theta_{\mathrm{ER}} - \widehat\theta_{\rm MLE} = O_p\bigl(\sqrt{\log n / n}\bigr).
\]
If, in addition, the population per-observation entropy-gradient mass at $\theta_0$ is nonzero, $\E\bigl[\nabla e_m(\theta_0)\bigr] \neq 0$, where $e_m(\theta)$ is the per-observation summand of the empirical posterior entropy $\widehat E_m(\theta)$ of \eqref{eq:entropy} (so $\widehat E_m = \sum_{i=1}^n e_m$ and $\E[\nabla e_m(\theta_0)]$ is its population gradient mass at $\theta_0$), then the order is exact, $\|\widehat\theta_{\mathrm{ER}} - \widehat\theta_{\rm MLE}\| \asymp_p \sqrt{\log n / n}$, so $\widehat\theta_{\mathrm{ER}}$ is not first-order asymptotically equivalent to $\widehat\theta_{\rm MLE}$.
\end{proposition}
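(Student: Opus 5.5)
The argument has two stages, as the preamble to this section indicates: consistency by uniform convergence, then a $Z$-estimator expansion around $\widehat\theta_{\rm MLE}$.

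\emph{Consistency.} Normalize by $n$ and compare $n^{-1}Q_{\alpha_n}(\theta)=n^{-1}\ell_n(\theta)+\alpha_n n^{-1}\widehat E_m(\theta)$ with the population log-likelihood $\ell(\theta)=\E[\log\{\pi f_B(Y;m,p_1(X))+(1-\pi)f_B(Y;m,p_2(X))\}]$. Since each summand of $\widehat E_m$ lies in $[0,\log 2]$, we have $\sup_\theta|\alpha_n n^{-1}\widehat E_m(\theta)|\le\alpha_n\log 2\to0$ deterministically. On the compact $\Theta$ of (A1), the standard uniform law of large numbers (with the integrability in (A2)) gives $\sup_\Theta|n^{-1}\ell_n-\ell|\to0$ in probability. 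Hence $n^{-1}Q_{\alpha_n}\to\ell$ uniformly.

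The maximizer $\theta_0$ of $\ell$ is well separated, up to label switching, which we fix by the weighted-center convention together with $\pi$ bounded away from $1/2$ or an ordering of $\Delta$. Here we use (A4): $D_m^*(\theta_0)\ge\underline D>0$ keeps $\theta_0$ away from the merge boundary, where identifiability fails. The argmax theorem then yields $\widehat\theta_{\rm ER}\to\theta_0$ in probability, exactly as for the MLE.

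\emph{Rate.} Consistency places both estimators in a neighborhood $U$ of $\theta_0$ where the posterior weights are smooth in $\theta$ and bounded away from $\{0,1\}$ on compacts in $x$. On $U$, $\widehat E_m$ is twice continuously differentiable. The derivatives of $e_m$ are dominated by polynomials in $\|X\|$ times $m$, and these are integrable by (A2) and (R2). Hence $\sup_U\|n^{-1}\nabla\widehat E_m\|=O_p(1)$ and $\sup_U\|n^{-1}\nabla^2\ell_n+I(\theta_0)\|=o_p(1)$, where $I(\theta_0)\succ0$ is the Fisher information, assumed nonsingular at an identifiable interior point.

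The first-order conditions are $\nabla\ell_n(\widehat\theta_{\rm MLE})=0$ and $\nabla\ell_n(\widehat\theta_{\rm ER})+\alpha_n\nabla\widehat E_m(\widehat\theta_{\rm ER})=0$. Taking the mean-value difference,
\[
\Bigl\{-n^{-1}\nabla^2\ell_n(\tilde\theta)\Bigr\}(\widehat\theta_{\rm ER}-\widehat\theta_{\rm MLE})=\alpha_n\, n^{-1}\nabla\widehat E_m(\widehat\theta_{\rm ER}).
\]
The left matrix tends to $I(\theta_0)$ and the right side is $O_p(\alpha_n)$, which gives the bound $O_p(\sqrt{\log n/n})$.

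\emph{Exact order.} Suppose in addition that $\E[\nabla e_m(\theta_0)]\neq0$. By a uniform LLN and consistency, $n^{-1}\nabla\widehat E_m(\widehat\theta_{\rm ER})\to\E[\nabla e_m(\theta_0)]$. Therefore $\alpha_n^{-1}(\widehat\theta_{\rm ER}-\widehat\theta_{\rm MLE})\to I(\theta_0)^{-1}\E[\nabla e_m(\theta_0)]\neq0$, which gives $\asymp_p$. Since $\sqrt n\,\alpha_n=\sqrt{\log n}\to\infty$, the two estimators are not first-order equivalent.

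The main obstacle is the identification and well-separation step under label switching, together with justifying the uniform derivative bounds for $\widehat E_m$. The entropy gradient involves $\log r_{ik}$ terms that blow up only as $r_{ik}\to0$, and I would handle these by noting that $r\log r$ has bounded derivative times $|\log r|$. I would control the growth of $|\log r|$ through the log-likelihood-ratio $|x^\top(\beta_1-\beta_2)|$, which is polynomial in $\|X\|$ and hence integrable.
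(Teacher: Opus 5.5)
Your proposal is correct and is essentially the paper's argument: the deterministic bound $\sup_\theta \alpha_n\widehat E_m(\theta)/n\le\alpha_n\log 2$, a uniform LLN and a well-separated maximizer give consistency via the argmax theorem. Then a $Z$-estimator expansion of the penalized score around $\widehat\theta_{\rm MLE}$, with $\nabla\widehat E_m=O_p(n)$ and $I(\theta_0)\succ 0$, gives the $O_p(\sqrt{\log n/n})$ bound, and a LLN for $n^{-1}\nabla\widehat E_m$ at a consistent argument gives the exact order. The differences are minor. You evaluate the entropy gradient at $\widehat\theta_{\rm ER}$ through a mean-value difference, where the paper uses $\widehat\theta_{\rm MLE}$. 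You control the gradient via polynomial growth of $|\log r|$, where the paper uses the cleaner bound $|\mathrm{logit}(r)\,r(1-r)|\le 1/4$ on the prefactor. Your remark that label switching needs a constraint beyond the weighted-center parametrization is more careful than the paper's claim that the parametrization fixes it, since a swap sends $(\pi,\beta_0,\Delta)$ to $(1-\pi,\beta_0,-\Delta)$.
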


\begin{proof}
Write the penalized score as $\Psi_n(\theta) := \nabla \ell_n(\theta) + \alpha_n \nabla \widehat E_m(\theta)$, so that $\widehat\theta_{\mathrm{ER}}$ solves $\Psi_n(\theta) = 0$ and $\widehat\theta_{\rm MLE}$ solves $\nabla \ell_n(\theta) = 0$. We establish localization and consistency \emph{first}, by a direct argmax argument that uses no score expansion, and only then expand the score. This ordering is essential, since the $Z$-estimator expansion below is licensed only once $\widehat\theta_{\mathrm{ER}}$ is already known to lie in a shrinking neighborhood of $\widehat\theta_{\rm MLE}$.

\emph{Step 1 (localization and consistency, no expansion).} Divide the penalized objective by $n$:
\[
\frac{Q_{\alpha_n}(\theta)}{n} = \frac{\ell_n(\theta)}{n} + \alpha_n\frac{\widehat E_m(\theta)}{n}.
\]
The posterior entropy is bounded observation-by-observation. Each per-observation posterior entropy $e_m$ is the Shannon entropy of a two-point posterior over the component label and so lies deterministically in $[0, \log 2]$, whence $0 \le \widehat E_m(\theta)/n \le \log 2$ uniformly in $\theta$ (a $\theta$-uniform, moment-free bound consistent with $\widehat E_m = O_p(n)$). Since $\alpha_n = \sqrt{\log n/n} \to 0$, the penalty term satisfies
\[
\sup_{\theta \in \Theta}\;\alpha_n\frac{\widehat E_m(\theta)}{n} \;\le\; \alpha_n \log 2 \;\to\; 0,
\]
so it vanishes \emph{uniformly} in $\theta$, with no appeal to any gradient bound, rate, or score expansion. Under (A2) the unpenalized criterion converges uniformly over the compact set $\Theta$ of (A1), $\ell_n(\theta)/n \to M(\theta)$ in probability uniformly (a uniform law of large numbers / Glivenko--Cantelli argument under the $8$th-moment envelope of (A2)), where $M(\theta)$ is the population objective. By the triangle inequality $\sup_{\theta \in \Theta} |Q_{\alpha_n}(\theta)/n - \ell_n(\theta)/n| \le \alpha_n \log 2 \to 0$, so $Q_{\alpha_n}/n$ has the same uniform limit $M(\theta)$. The two-component logistic mixture is identifiable up to label permutation, and the detectability margin $D_m^*(\theta_0) \geq \underline D > 0$ of (A4) separates $\theta_0$ from the one-component/merge boundary, so the population Kullback--Leibler projection is uniquely minimized at $\theta_0$ (up to the label-swap equivalence class, which the weighted-center parametrization of Section~\ref{sec:model} fixes), i.e.\ $M$ has a well-separated maximizer at $\theta_0$. Uniform convergence of the criterion together with a well-separated maximum is exactly the Wald/van der Vaart argmax-consistency condition \citep[Thm.~5.7]{vandervaart1998asymptotic}, so $\widehat\theta_{\mathrm{ER}} \to_p \theta_0$. The same conditions give $\widehat\theta_{\rm MLE} \to_p \theta_0$ (the MLE consistency already invoked under (A4)), so both estimators converge to $\theta_0$ and in particular $\|\widehat\theta_{\mathrm{ER}} - \widehat\theta_{\rm MLE}\| \to_p 0$, the localization needed in Step 2. This derivation references neither the deviation rate nor the expansion of Step 2.

The one subtlety is that $\widehat E_m(\theta)$ is itself maximized \emph{toward} the merge boundary $\Delta = 0$, where the posterior distribution of the label is most diffuse. Thus, one must rule out the possibility that the vanishing entropy penalty pulls the maximizer to that higher-entropy boundary. 
This possibility is ruled out by the interior-signal condition.
The detectability margin of (A4) gives an $O(1)$ likelihood separation of $\theta_0$ from the boundary, $M(\theta_0) - \sup_{\theta : \Delta = 0} M(\theta) = D_m^*(\theta_0) \geq \underline D > 0$, whereas the entropy tilt the penalty can add anywhere is at most $\alpha_n \log 2 = o(1)$. For $n$ large enough that $\alpha_n \log 2 < \underline D/2$, no boundary configuration can overtake a neighborhood of $\theta_0$ in $Q_{\alpha_n}/n$, so the argmax stays bounded away from $\Delta = 0$. 
This preserves the well-separated-maximum condition even though the entropy term favors the merge boundary.
The $O(1)$ detectability margin dominates the $o(1)$ penalty tilt.

\emph{Step 2 (deviation rate after localization).} 
With consistency in hand from Step 1, $\widehat\theta_{\mathrm{ER}}$ lies in a shrinking neighborhood of $\widehat\theta_{\rm MLE}$ with probability tending to one, so the $Z$-estimator (Taylor) expansion of $\Psi_n$ about $\widehat\theta_{\rm MLE}$ is valid, with a remainder that is $o_p\bigl(n\|\widehat\theta_{\mathrm{ER}} - \widehat\theta_{\rm MLE}\|\bigr)$ because $\|\widehat\theta_{\mathrm{ER}} - \widehat\theta_{\rm MLE}\| \to_p 0$ is already established rather than assumed. Under (A1)--(A4) and interior signal, $\widehat E_m(\theta)$ has a gradient that is bounded in probability on the aggregate scale uniformly over a neighborhood of $\theta_0$, namely $\nabla \widehat E_m(\theta) = O_p(n)$, since $\widehat E_m$ is a sum of $n$ uniformly bounded per-observation posterior entropies with derivatives controlled by the moment conditions (A2). The per-observation entropy gradient has the form $-\bigl[\operatorname{logit}(r_i)r_i(1-r_i)\bigr]\nabla(\text{log-odds})_i$, whose scalar prefactor is bounded uniformly over the whole simplex (the factor \(r_i(1-r_i)\) controls the logarithmic singularity, with $|\operatorname{logit}(r)r(1-r)| \le 1/4$), so $\nabla \widehat E_m = O_p(n)$ holds even at near-collapsed responsibilities and even at $m=1$, requiring only a finite first moment of $m\|X\|$ (supplied by (A2)). The role of $m \geq 2$ and the interior signal is not to bound this gradient but to secure the nonsingular limiting information $I(\theta_0) \succ 0$ used in the Hessian step below. Hence the entropy penalty perturbs the score by
\[
\alpha_n \nabla \widehat E_m(\theta) = O_p(n\alpha_n) = O_p\bigl(\sqrt{n \log n}\bigr).
\]
The expansion gives
\[
0 = \Psi_n(\widehat\theta_{\mathrm{ER}}) = \nabla \ell_n(\widehat\theta_{\rm MLE}) + \nabla^2 \ell_n(\widehat\theta_{\rm MLE})(\widehat\theta_{\mathrm{ER}} - \widehat\theta_{\rm MLE}) + \alpha_n \nabla \widehat E_m(\widehat\theta_{\rm MLE}) + o_p\bigl(n\|\widehat\theta_{\mathrm{ER}} - \widehat\theta_{\rm MLE}\|\bigr).
\]
Since $\nabla \ell_n(\widehat\theta_{\rm MLE}) = 0$ and, by the interior-signal nonsingularity of the information, $-\nabla^2 \ell_n(\widehat\theta_{\rm MLE}) = n I(\theta_0)\{1 + o_p(1)\}$ with $I(\theta_0) \succ 0$, solving for the deviation gives
\begin{align*}
\widehat\theta_{\mathrm{ER}} - \widehat\theta_{\rm MLE} 
&= \bigl(n I(\theta_0)\bigr)^{-1} \alpha_n \nabla \widehat E_m(\widehat\theta_{\rm MLE})\{1 + o_p(1)\} \\
&= O_p\bigl(n^{-1} \sqrt{n\log n}\bigr) = O_p\bigl(\sqrt{\log n/n}\bigr),
\end{align*}
which is the asserted upper bound. This bound is tight precisely when the leading term does not vanish in probability. By the consistency of the MLE together with a local uniform law of large numbers and continuity of $\theta \mapsto \E[\nabla e_m(\theta)]$ near $\theta_0$ (the same moment control from (A2) used for the upper bound), $n^{-1}\nabla \widehat E_m(\widehat\theta_{\rm MLE}) \to_p \E[\nabla e_m(\theta_0)]$, evaluated at the random argument $\widehat\theta_{\rm MLE}$. Hence if $\E[\nabla e_m(\theta_0)] \neq 0$ then $\nabla \widehat E_m(\widehat\theta_{\rm MLE}) \asymp_p n$, and since $I(\theta_0) \succ 0$ is invertible the nonvanishing survives the multiply by $(n I(\theta_0))^{-1}$, so $(n I(\theta_0))^{-1} \alpha_n \nabla \widehat E_m(\widehat\theta_{\rm MLE}) \asymp_p \alpha_n$ and the deviation is of exact order $\sqrt{\log n/n}$, whence $\sqrt n(\widehat\theta_{\mathrm{ER}} - \widehat\theta_{\rm MLE}) \asymp_p \sqrt{\log n}$ does not vanish and $\widehat\theta_{\mathrm{ER}}$ is not first-order asymptotically equivalent to $\widehat\theta_{\rm MLE}$. Without this nonvanishing condition only the upper bound is claimed.
\end{proof}

The rate $\sqrt{\log n/n}$ is slower than the parametric $n^{-1/2}$ scale of the MLE itself, since $\sqrt n(\widehat\theta_{\mathrm{ER}} - \widehat\theta_{\rm MLE}) = O_p(\sqrt{\log n})$ does not vanish, so $\widehat\theta_{\mathrm{ER}}$ is not first-order asymptotically equivalent to the MLE. 
This choice is intentional.
We set $\alpha_n$ to the RA-BIC rate $\lambda_n$ so that the estimation-side correction matches the selection-side penalty, and the persistent $\sqrt{\log n}$ inflation over the pointwise $n^{-1/2}$ scale is exactly the scale at which the entropy penalty calibrates the posterior responsibilities against the recoverability scale $R_m^*(\theta) = h(\pi) - E_m(\theta)$. The correction is therefore largest at small and moderate $n$ and vanishes as $\alpha_n \to 0$, consistent with the numerical evidence of Section~\ref{sec:experiments}.

\subsection{Entropy regularization as local shrinkage toward the merge boundary}
\label{sec:SM-ER-positioning}

The entropy-regularized estimator can be read as a recoverability-calibrated instance of local-asymptotic submodel shrinkage toward the singular boundary $\Delta = 0$ at which the two components merge. In the $O_p(\sqrt{\log n/n})$ neighborhood of the MLE, the penalty $\alpha_n \nabla \widehat E_m$ pulls the fit toward configurations with higher posterior entropy, i.e., away from the over-separated boundary direction in which the MLE over-concentrates. This is the mixture analogue of shrinkage toward a lower-dimensional submodel, where a Stein-type effect ($p \geq 3$) makes shrinkage toward the constraint surface advantageous in mean squared error for the relevant component parameters \citep{hansen2016shrinkage}. The boundary $\Delta = 0$ is precisely the singular set at which the standard merge-testing asymptotics degenerate \citep{chen1995optimal}, so calibrating the shrinkage to the recoverability scale rather than to an arbitrary constant is what keeps the correction interpretable as a posterior-calibration device. 
Any dominance of $\widehat\theta_{\mathrm{ER}}$ over the MLE is \emph{local}, confined to $m \geq 2$ and small-to-moderate $n$, where the over-concentration is sharpest, and we do not claim a new sharp dominance theorem derivable from the quartic-detectability/quadratic-recoverability gap alone. The empirical evidence of Section~\ref{sec:experiments} is the primary support for the practical effect.


\section{Supplementary numerical results}
\label{sec:SM-numerical}

Table~\ref{tab:calibration-n5000} replicates the calibration experiment of Table~\ref{tab:calibration} at $n = 5000$ (where $\alpha_n = \sqrt{\log n / n} \approx 0.041$). The qualitative pattern is preserved. Deep-gap MLE over-collapse persists, oracle posteriors show essentially zero collapse, and ER reduces the MLE collapse rate by a substantial multiplicative factor. The absolute effect sizes are nonetheless smaller than at $n = 200$, consistent with the asymptotic vanishing of the entropy penalty.

\begin{table}[htb!]
\centering
\caption{Posterior calibration at $n=5000$, 50 replications, $n_{\mathrm{init}}=20$. The qualitative pattern of Table~\ref{tab:calibration} ($n = 200$) is preserved with smaller absolute effect size. In the deep gap regime $(m,\rho) = (50, 0.10)$, the MLE collapse rate is still $19.1\%$ versus $0.01\%$ for the oracle, and ER reduces it to $11.2\%$.}
\label{tab:calibration-n5000}

\smallskip
\begin{tabular}{ccc|ccc|ccc|ccc}
\toprule
& & & \multicolumn{3}{c|}{Collapse rate (\%)} & \multicolumn{3}{c|}{Brier} & \multicolumn{3}{c}{Log loss} \\
$m$ & $\rho$ & regime & Or. & MLE & ER & Or. & MLE & ER & Or. & MLE & ER \\
\midrule
5  & 0.30 & gap entry   & 0.0  & 0.3  & 0.0  & 0.244 & 0.281 & 0.261 & 0.694 & 0.770 & 0.719 \\
5  & 0.50 & gap         & 0.2  & 0.7  & 0.2  & 0.235 & 0.255 & 0.248 & 0.667 & 0.708 & 0.689 \\
5  & 1.00 & recoverable & 5.3  & 5.5  & 5.0  & 0.204 & 0.206 & 0.206 & 0.593 & 0.593 & 0.593 \\
20 & 0.15 & deep gap    & 0.0  & 7.8  & \textbf{3.7}  & 0.244 & 0.326 & \textbf{0.306} & 0.694 & 0.976 & \textbf{0.866} \\
20 & 0.25 & gap         & 0.3  & 1.2  & 0.5  & 0.235 & 0.258 & 0.254 & 0.661 & 0.719 & 0.706 \\
20 & 0.50 & recoverable & 5.7  & 5.8  & 5.4  & 0.204 & 0.205 & 0.206 & 0.586 & 0.590 & 0.591 \\
50 & 0.10 & deep gap    & 0.0  & \textbf{19.1} & \textbf{11.2} & 0.244 & 0.350 & \textbf{0.331} & 0.694 & 1.135 & \textbf{0.997} \\
50 & 0.18 & gap         & 0.6  & 1.8  & 1.2  & 0.231 & 0.258 & 0.256 & 0.654 & 0.721 & 0.713 \\
50 & 0.50 & recoverable & 18.2 & 18.3 & 18.0 & 0.168 & 0.169 & 0.169 & 0.494 & 0.494 & 0.494 \\
\bottomrule
\end{tabular}
\end{table}

\end{document}